\newcommand{\ZO}{\{0,1\}}
\newcommand{\Disc}[2]{\textup{Disc}\left(#1,#2\right)}
\newcommand{\inv}[1]{#1^{\text{ }\scalebox{0.8}[0.75]{\textup{-1}}}}
\newcommand{\Ex}[1]{\textup{E}\left[#1\right]}
\newcommand{\rk}[1]{\textup{rk}(#1)}
\newcommand{\Exl}[2]{\textup{E}_{#1}\left[#2\right]}
\newcommand{\Prb}[1]{\textup{Pr}\left[#1\right]}
\newcommand{\Prbl}[2]{\textup{Pr}_{#1}\left[#2\right]}
\newcommand{\Prbu}[2]{\mathop{\textup{Pr}}_{#1}\left[#2\right]}
\newcommand{\card}[1]{\vert #1 \vert}
\newcommand{\bcard}[1]{\big\vert #1 \big\vert}
\newcommand{\dDNNF}{\textsf{d-DNNF}}
\newcommand{\OBDD}{\textsf{OBDD}}
\newcommand{\NNF}{\textsf{NNF}}
\newcommand{\DNNF}{\textsf{DNNF}}
\newcommand{\CNF}{\textsf{CNF}}
\newcommand{\AND}{\textsf{AND}}
\newcommand{\OR}{\textsf{OR}}
\newcommand{\SDD}{\textsf{SDD}}
\newcommand{\TAN}{\textsf{TAN}}
\newtheorem{theorem}{Theorem}
\newtheorem{lemma}{Lemma}
\newtheorem{definition}{Definition}
\newtheorem{claim}{Claim}
\DeclareMathSymbol{\shortminus}{\mathbin}{AMSa}{"39}
\newcommand{\overA}{\overline{\mkern-1mu A}}
\newcommand{\overB}{\overline{\mkern+0.5mu B}}
\title{Lower Bounds for Approximate Knowledge Compilation}
\author{
Alexis de Colnet\And
Stefan Mengel\\
\affiliations
CRIL, CNRS \& Univ Artois\\
\emails
decolnet@cril.fr,
mengel@cril.fr
}
\begin{document}

\maketitle

\begin{abstract}
Knowledge compilation studies the trade-off between succinctness and efficiency of different representation languages. For many languages, there are known strong lower bounds on the representation size, but recent work shows that, for some languages, one can bypass these bounds using approximate compilation.  The idea is to compile an approximation of the knowledge for which the number of errors can be controlled. We focus on circuits in deterministic decomposable negation normal form ({\dDNNF}), a compilation language  suitable in contexts such as probabilistic reasoning, as it supports efficient model counting and probabilistic inference. Moreover, there are known size lower bounds for {\dDNNF} which by relaxing to approximation one might be able to avoid. 
In this paper we formalize two notions of approximation: \emph{weak approximation} which has been studied before in the decision diagram literature and \emph{strong approximation} which has been used in recent algorithmic results. We then show lower bounds for approximation by {\dDNNF}, complementing the positive results from the literature.
\end{abstract}

\section{Introduction}
Knowledge compilation is a subarea of artificial intelligence which studies different representations for knowledge~\cite{DarwicheM02}. The basic idea is that different types of representation are more useful when solving reasoning problems than others. One general observation that has been made is that often representations that allow many reasoning tasks to be solved efficiently, such as the classical {\OBDD}s, are necessarily large in size whereas more succinct representations often make reasoning hard. This trade-off between succinctness and usefulness is studied systematically in knowledge compilation.

One canonical area where the representation languages introduced in knowledge compilation are applied is probabilistic reasoning. For example, one can translate or \emph{compile} classifiers based on graphical models, e.g.~Bayesian networks, into such representations and then reason about the classifiers by querying the compiled representation~\cite{ChanD03}. If the representation is of reasonable size and can be computed efficiently, then the overall reasoning process is efficient. The most important representation language in this setting are circuits in deterministic, decomposable negation normal form, short {\dDNNF}, which allow for efficient (weighted) model counting and probability computation and are thus particularly well suited for probabilistic reasoning~\cite{Darwiche01c}. {\dDNNF}s are generalizations of other important languages like {\OBDD}~\cite{Bryant86} and {\SDD}~\cite{Darwiche11} which have also found applications in probabilistic reasoning~\cite{ChanD03,ChoiKD13,Shih19}. Due to their importance, essentially all practical implementations of knowledge compilers create {\dDNNF}s or sub-classes thereof~\cite{Darwiche04,MuiseMBH12,Darwiche11,OztokD15,LagniezM17}. For these reasons we focus on {\dDNNF}s in this paper.

Unfortunately, in general, representations of knowledge in {\dDNNF} are large. This had been known under standard complexity theoretical assumptions for a long time~\cite{DarwicheM02} and more recently there has been a series of papers showing exponential, unconditional lower bounds for many representation languages~\cite{BovaCMS16,Beame0RS17,PipatsrisawatD10,Capelli17,BeameL15}. Moreover, \cite{BovaCMS16} gave an explicit connection between {\DNNF} lower bounds and communication complexity, a subarea of theoretical computer science. This makes it possible to use known results from communication complexity to get strong unconditional lower bounds in knowledge compilation. As one consequence, it is now known that the representation of many problems in {\dDNNF} is infeasible.

Fortunately, this bad news is not necessarily a fatal problem for probabilistic reasoning. Since graphical models like Bayesian networks are almost exclusively inferred by learning processes, they are inherently not exact representations of the world.  Thus, when reasoning about them, in most cases the results do no have to be exact but approximate reasoning is sufficient, assuming that the approximation error can be controlled and is small. It is thus natural in this context to consider \emph{approximate knowledge compilation}: the aim is no longer to represent knowledge exactly as one authorizes a small number of errors. Very recently, Chubarian and Turán~\shortcite{ChubarianT16} showed, building on~\cite{GopalanKMSVV11}, that this approach is feasible in some settings: it is possible to compile approximations of so-called Tree Augmented Naive Bayes classifiers ({\TAN}) (or more generally bounded pathwidth Bayes classifiers) into {\OBDD}s efficiently. Note that efficient exact compilation is ruled out in this setting due to strong lower bounds for threshold functions from~\cite{TakenagaNY97} which imply lower bounds for {\TAN}s. 

In this paper, we complement the positive results of~\cite{ChubarianT16} by extending lower bounds for exact representations to lower bounds for approximations.
Similar questions have been treated before for {\OBDD}s and some extensions such as \emph{read-$k$ branching programs}, see e.g.~\cite{KrauseSW99,BolligSW02}. We extend this line of work in two ways: we show that the techniques used in~\cite{BolligSW02} can be adapted to show lower bounds for the approximation by {\dDNNF}s and prove that there are functions for which any {\dDNNF} computing a non-trivial approximation must have exponential size.

As a second contribution, we refine the  approximation notion used in~\cite{BolligSW02} which we call \emph{weak approximation}. For this notion, the approximation quality is measured as the probability of encountering an error when comparing a function and its approximation on a random input. It follows that all families of Boolean functions for which the probability of encountering a model on a random input is not bounded by a constant, can be approximated trivially by constant functions (see Section~\ref{sec:strong_approximation} for details). This makes weak approximation easy for rather uninteresting reasons for many functions, e.g.~most functions given by {\CNF}-formulas. Moreover, it makes the approximation quality sensitive to encodings, in particular the use of auxiliary variables that functionally depend on the input. In general, the space of satisfying assignments is arguably badly described by weak approximations. In particular, the relative error for model counting and probability evaluation is unbounded which makes that notion useless for probabilistic reasoning.

We remedy the situation by formalizing a new notion of approximation for knowledge compilation which we call \emph{strong approximation}. It is modeled to allow efficient counting with approximation guarantees and is insensitive to addition of functionally dependent auxiliary variables, see Section~\ref{sec:strong_approximation} for the definition and detailed discussion. While not formalized as such, it can be verified that the {\OBDD}s of~\cite{ChubarianT16,GopalanKMSVV11} are in fact strong approximations in our sense. We then show that weak and strong approximations differ by exhibiting a family of functions that has trivial weak approximations but any {\dDNNF}s approximating it non-trivially must be of exponential size.

We remark that approximation in knowledge compilation has been considered before -- in fact one of the earliest lines of work in the setting was approximating Boolean functions by Horn formulas~\cite{SelmanK96}. However, the focus was different in this setting: on the one hand, Horn formulas are not fully expressive so the question becomes that of understanding the formulas that are the best out of all Horn formulas approximating a function instead of requesting error guarantees for the approximation. On the other hand, that line of work was less concerned with the size of the approximating formulas and more with their existence.
Our work is different in these respects: since we deal with a fully expressive representation language, the main concern becomes that of a trade-off between the quality of approximation (measured in the number of inputs in which the function at hand and its approximation differ) and the representation size of the approximation.

\paragraph{Outline of the paper.} We give some preliminaries in Section~\ref{sec:preliminaries}. We then introduce the notion of weak approximation and show our lower bound for it in Section~\ref{sec:weak_approximation}. We introduce and discuss strong approximations next in Section~\ref{sec:strong_approximation} and show that weak and strong approximations differ in Section~\ref{sec:main_result}. We close the paper with some conclusions and open questions in Section~\ref{sec:conclusion}. Due to space constraints some of the proofs are not contained in this version of the paper and will appear in the upcoming full version.

\section{Preliminaries}
\label{sec:preliminaries}
We describe some conventions of notation for Boolean algebra. In our framework, a Boolean variable takes value $0$ ($false$) or $1$ ($true$), we see it as a variable over $\mathbb{F}_2$, the field with two elements. Assignments of $n$ Boolean variables are vectors from $\mathbb{F}^n_2$ and operations on vectors and matrices are considered in this field. We use the notation $\mathbf{0}^n$ to denote the $0$-vector from $\mathbb{F}^n_2$. For clarity we also use the operators $\neg$, $\vee$ and~$\wedge$ for negation, disjunction and conjunction in $\mathbb{F}_2$. The conjunction of Boolean variables and the product in $\mathbb{F}_2$ are equivalent and used interchangeably.
Single variables are written in plain style ``$x$'' while assignments of $n > 1$ variables use bold style ``$\mathbf{x}$''. A Boolean function on $n$ variables is a mapping $f : \mathbb{F}^n_2 \rightarrow \mathbb{F}_2$ and its models are given by $\inv{f}(1)$. Given a set of assignments $S$, we sometimes denote $\mathbb{1}_S$ the Boolean function whose set of models is exactly $S$. We write $f \leq g$ when $\inv{f}(1) \subseteq \inv{g}(1)$, which corresponds to logical entailment. A distribution on truth assignments is a probabilistic distribution $\mathcal{D}$ on $\mathbb{F}^n_2$. We write $\Prbl{\mathbf{x} \sim \mathcal{D}}{\cdot}$ to denote the probability measure when sampling an assignment $\mathbf{x}$ according to~$\mathcal{D}$. For clarity, the uniform distribution on $\mathbb{F}^n_2$ is denoted $\mathcal{U}$ (regardless of $n$), $\mathbf{x} \sim \mathcal{U}$ means that any assignment is sampled with probability $1/2^n$.

\paragraph{Deterministic decomposable NNF.} Let $X$ be a finite set of Boolean variables. A circuit in \emph{negation normal form}  ({\NNF}) over $X$ is a single output Boolean circuit whose inputs gates are labeled with Boolean variables $x$ from $X$ and their negations $\neg x$ and whose internal gates are fanin-2 {\AND} and {\OR}-gates. The \emph{size} of a circuit is the number of its gates. A circuit over $X$ is said to \emph{accept} a truth assignment $\mathbf{x}$ of the variables if it outputs 1 ($true$) when its inputs are set as in $\mathbf{x}$. In this case $\mathbf{x}$ is a \emph{model} of the function represented by the circuit. An {\NNF} is \emph{decomposable} if, for any {\AND}-gate $g$, the two sub-circuits rooted at $g$ share no input variable, i.e., if $x$ or $\neg x$ is an input gate of the circuit rooted at the left input of $g$, then neither $x$ nor $\neg x$ is an input gate of the subcircuit rooted at the right input, and vice versa. An {\NNF} is \emph{deterministic} if, for any {\OR}-gate $g$, the sets of assignments accepted by the two subcircuits rooted at the children of $g$ are disjoint. A decomposable {\NNF} is called a {\DNNF}; if in addition it is deterministic, then it is called a {\dDNNF}.

\paragraph{Rectangle covers.} Let $X$ be a finite set of Boolean variables. A \emph{combinatorial rectangle} over $X$ (more succinctly a \emph{rectangle}) is a Boolean function $r$ defined as the conjunction of two Boolean functions $\rho_1$ and $\rho_2$ over disjoints variables of $X$. That is, there is a partition $(X_1, X_2)$ of $X$ such that $\rho_1 $ and $\rho_2$ are defined over $X_1$ and $X_2$, respectively, and $r = \rho_1 \wedge \rho_2$. We call $(X_1, X_2)$ the \emph{partition} of $r$. The rectangle is \emph{balanced} if $\card{X}/3 \leq \card{X_1} \leq 2\card{X}/3$ (the same bounds hold for $\card{X_2}$). A \emph{rectangle cover} of a Boolean function $f$ is any disjunction of rectangles over $X$ (possibly for different partitions of $X$) equivalent to $f$, i.e., $f=\bigvee_{i=1}^K r_i$ where the $r_i$ are rectangles. The \emph{size} of a cover is the number $K$ of its rectangles. A rectangle cover is called \emph{balanced} if its rectangles are balanced and it is said \emph{disjoint} if no two rectangles share a model. Note that any function $f$ has at least one balanced disjoint rectangle cover, because it can be written as a \textsf{DNF} in which every term contains all variables. There is a tight link between the smallest size of a balanced disjoint rectangle cover of a function and the size of any equivalent {\dDNNF}.

\begin{theorem}\textup{\cite{BovaCMS16}}\label{theorem:DNNF_size_rect_cover_size}
Let $D$ be a {\dDNNF} encoding a function $f$. Then $f$ has a balanced disjoint rectangle cover of size at most the size of $D$.
\end{theorem}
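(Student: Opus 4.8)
The plan is to prove this by structural induction on the \dDNNF{} $D$, extracting a balanced disjoint rectangle cover from the circuit by reading off what each gate computes and then balancing the partition afterward. The key observation is that decomposability of \AND-gates is exactly what produces rectangles, while determinism of \OR-gates is what keeps the cover disjoint.

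First I would associate to each gate $g$ of $D$ the Boolean function $f_g$ it computes, and prove by induction from the inputs to the output that $f_g$ admits a disjoint rectangle cover whose size is at most the number of gates in the subcircuit rooted at $g$. For an input gate (a literal $x$ or $\neg x$) the function is itself a trivial rectangle, so the base case is immediate. For an \OR-gate $g$ with children $g_1,g_2$, I would take the union of the covers obtained for $f_{g_1}$ and $f_{g_2}$; determinism guarantees that $\inv{f_{g_1}}(1)$ and $\inv{f_{g_2}}(1)$ are disjoint, so the rectangles coming from the two sides never share a model and the combined cover stays disjoint, with size adding up as required. For an \AND-gate $g$ with children $g_1,g_2$, decomposability gives a partition of the relevant variables into the two disjoint scopes; writing $f_{g_1}=\bigvee_i \rho_i$ and $f_{g_2}=\bigvee_j \sigma_j$ over these disjoint variable sets and distributing the conjunction, $f_g=\bigvee_{i,j}(\rho_i\wedge\sigma_j)$ becomes a disjunction of rectangles, each respecting the decomposition partition. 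I would check that disjointness is preserved (two distinct product terms disagree on at least one factor, hence share no model) and bound the number of product terms; this multiplicative blow-up is the first point that needs care.

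The multiplicative counting across \AND-gates is in fact where the naive induction fails to give the claimed linear-in-size bound, so the main obstacle is controlling cover size. The right fix is to track not the size of the cover directly but to build a single global disjunction of rectangles by a top-down decomposition of $D$ that reuses the circuit structure: one associates rectangles to certain paths or to a carefully chosen frontier of gates rather than expanding every \AND{} locally, so that each gate of $D$ contributes to at most one rectangle of the final cover. I would argue that the output function $f$ can be written as a disjunction indexed by the "accepting subtrees'' of $D$, and that determinism makes these contributions pairwise disjoint while decomposability makes each one a rectangle, yielding a cover of size bounded by the size of $D$.

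The remaining issue is \emph{balance}: the rectangles produced by decomposability partition the variables according to the \AND-gates encountered, but nothing forces the resulting $(X_1,X_2)$ to satisfy $\card{X}/3\le\card{X_1}\le 2\card{X}/3$. The hard part will be to rebalance each rectangle without destroying disjointness or blowing up the count. I would handle this by the standard balancing argument: given an unbalanced rectangle $r=\rho_1\wedge\rho_2$, move whole variable-blocks from the larger side to the smaller along the decomposition tree until the partition lands in the required interval, refining $\rho_1,\rho_2$ as conjunctions over the new variable sets if necessary; since any single variable changes a side's size by at most one and the scopes can be reorganized tree-wise, a balanced split always exists, and because we only repartition the variables of an already-fixed rectangle, the model set of each rectangle is unchanged, so disjointness and the size bound survive. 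Assembling these balanced rectangles gives the balanced disjoint rectangle cover of size at most the size of $D$.
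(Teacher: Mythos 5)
The paper itself gives no proof of this theorem (it is imported from the cited reference), so I am comparing your attempt against the argument in that reference. Your diagnosis of the naive induction is correct: distributing products at \AND-gates multiplies cover sizes and destroys the linear bound. But your proposed fix does not close the gap. A cover ``indexed by the accepting subtrees of $D$'' has one rectangle per proof tree, and the number of proof trees of a \dDNNF{} can be exponential in its size, so this yields no bound of the form $K \leq \card{D}$. The idea you are missing is a \emph{grouping} step: inside each proof tree $T$ (a binary tree, since gates have fanin 2) one chooses, by the standard $1/3$--$2/3$ tree-separator argument, a single gate $v(T)$ such that the variables occurring in $T$ below $v(T)$ number between $n/3$ and $2n/3$; one then merges \emph{all} proof trees $T$ with $v(T) = v$ into a single rectangle with partition (variables below $v$, remaining variables). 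Decomposability is what makes each merged class a rectangle -- any accepted assignment of the part below $v$ can be freely combined with any accepted assignment of the context above $v$ -- and determinism makes distinct classes disjoint. Since classes are indexed by gates, the cover has at most $\card{D}$ rectangles, and it is balanced by construction of $v(T)$. None of this grouping, nor the factorization claim it rests on, appears in your sketch; ``each gate contributes to at most one rectangle'' is asserted but never derived.

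Your final balancing step is not merely incomplete but false. A rectangle $r = \rho_1 \wedge \rho_2$ with partition $(X_1, X_2)$ has model set $S_1 \times S_2$; if you move a variable from $X_1$ to $X_2$, the \emph{same} model set is in general not a product with respect to the new partition, so you cannot ``repartition the variables of an already-fixed rectangle'' while keeping its model set unchanged. For a concrete failure, take $\rho_1(x_1,x_2) = (x_1 \wedge x_2) \vee (\neg x_1 \wedge \neg x_2)$ and any nonconstant $\rho_2(x_3)$: the function $\rho_1 \wedge \rho_2$ is a rectangle for $(\{x_1,x_2\},\{x_3\})$ but not for $(\{x_1\},\{x_2,x_3\})$, since its model set does not factor across that cut. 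This is precisely why the cited proof obtains balance at the moment the separator gate is chosen inside each proof tree, rather than repairing unbalanced rectangles afterwards; balance cannot be retrofitted onto a rectangle cover.
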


\noindent Theorem~\ref{theorem:DNNF_size_rect_cover_size} implies that, to show a lower bound on the size of any {\dDNNF} encoding $f$, it is sufficient to find a lower bound on the size of any balanced disjoint rectangle cover of $f$.

\section{Large d-DNNFs for Weak Approximations}
\label{sec:weak_approximation}
In this section, we start by considering the notion of approximation that has been studied for different forms of branching programs before, see e.g.~\cite{KrauseSW99,BolligSW02}. To differentiate it from other notions, we give it the name \emph{weak approximation}.

\begin{definition}[Weak approximation]\label{definition:weak_approximation}
Let $\mathcal{D}$ be a distribution on the truth assignments to $X$ and $\varepsilon > 0$. We say that~$\tilde{f}$ is a \emph{weak $\varepsilon$-approximation} of~$f$ (or weakly $\varepsilon$-approximates $f$) with respect to $\mathcal{D}$ if 
$$
\Prbu{\mathbf{x} \sim \mathcal{D}}{f(\mathbf{x}) \neq \tilde{f}(\mathbf{x})} \leq \varepsilon.
$$
\end{definition}

\noindent When $\mathcal{D}$ is the uniform distribution $\mathcal{U}$, then the condition of weak $\varepsilon$-approximability is equivalent to $\card{\{ \mathbf{x} : f(\mathbf{x}) \neq \tilde{f}(\mathbf{x}) \}} \leq \varepsilon 2^n$. 
\par
Note that weak $\varepsilon$-approximation is only useful when $\varepsilon < 1/2$. This is because every function has a trivial $(1/2)$-approximation: if $\Prbu{\mathbf{x} \sim \mathcal{D}}{f(\mathbf{x}) = 1}>1/2$, then the constant $1$-function is a $(1/2)$-approximation, otherwise this is the case for the constant $0$-function. Note that it might be hard to decide which case is true, but in any case we know that the approximation ratio of one of the constants is good.
\par
Bollig \emph{et al.}~\shortcite{BolligSW02} used a \emph{discrepancy} argument to show that there are classes of functions such that any $\varepsilon$-approximation w.r.t.~$\mathcal{U}$ requires exponential {\OBDD} size. We lift their techniques to {\dDNNF} showing that the same functions are also hard for {\dDNNF}.

\begin{theorem}\label{theorem:main_result_bilinear_form}
Let $0 \leq \varepsilon < 1/2$, there is a class of Boolean functions $\mathcal{C}$ such that, for any $f \in \mathcal{C}$ on $n$ variables, any {\dDNNF} encoding a weak $\varepsilon$-approximation of $f$ w.r.t.~$\mathcal{U}$ has size $2^{\Omega(n)}$.
\end{theorem}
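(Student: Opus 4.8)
The plan is to reduce the problem to the rectangle-cover lower bound of Theorem~\ref{theorem:DNNF_size_rect_cover_size} and then attack the resulting combinatorial object with a discrepancy argument, following the template of Bollig \emph{et al.} First I would specify the hard class $\mathcal{C}$: the natural candidate, given that variables live over $\mathbb{F}_2$ and the phrasing of Theorem~\ref{theorem:main_result_bilinear_form} mentions a ``bilinear form,'' is to take functions of the shape $f_M(\mathbf{x},\mathbf{y}) = \mathbf{x}^\top M \mathbf{y}$ for a suitably chosen matrix $M \in \mathbb{F}_2^{m \times m}$ (with $n = 2m$), or more precisely the indicator that this inner product equals $1$. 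The reason such inner-product-style functions are the right choice is that they are the canonical examples of functions with tiny discrepancy: under $\mathcal{U}$, any combinatorial rectangle is nearly balanced between models and non-models of $f_M$, provided $M$ has high rank, which is exactly the property that defeats any small rectangle cover.

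The core of the argument is a discrepancy bound. I would define, for a rectangle $r = \rho_1 \wedge \rho_2$ with partition $(X_1,X_2)$, the quantity $\Disc{f}{r}$ measuring how far $r$ is from being ``monochromatic'' with respect to $f$, i.e.\ the imbalance between $\card{\inv{r}(1) \cap \inv{f}(1)}$ and $\card{\inv{r}(1) \cap \inv{f}(0)}$ normalized by $2^n$. The key lemma, which is where the algebraic structure of $M$ enters, is that for the bilinear form every \emph{balanced} rectangle has exponentially small discrepancy, $\Disc{f}{r} \le 2^{-\Omega(n)}$; this follows from a standard Lindsey-type / eigenvalue estimate on the $\pm 1$ matrix associated to $f_M$ once one knows that the relevant submatrices determined by a balanced bipartition of the variables have full or near-full rank. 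The balance condition $\card{X}/3 \le \card{X_1} \le 2\card{X}/3$ guaranteed by Theorem~\ref{theorem:DNNF_size_rect_cover_size} is essential here, because it forces both sides of the partition to be large enough for the rank bound to bite.

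Next I would combine the discrepancy bound with the approximation hypothesis. Suppose $D$ is a {\dDNNF} of size $K$ computing a weak $\varepsilon$-approximation $\tilde f$ of $f$ with $\varepsilon < 1/2$. By Theorem~\ref{theorem:DNNF_size_rect_cover_size}, $\tilde f$ has a balanced disjoint rectangle cover $\tilde f = \bigvee_{i=1}^K r_i$ with $K \le \card{D}$. Because the cover is disjoint, the sets $\inv{r_i}(1)$ partition $\inv{\tilde f}(1)$, so I can write the discrepancy of $f$ with respect to $\tilde f$ as a sum of the per-rectangle discrepancies. On one hand, $\tilde f$ agrees with $f$ on all but an $\varepsilon$-fraction of inputs; since the global discrepancy of $f$ against its \emph{own} balanced behavior is large while $\varepsilon < 1/2$ keeps $\tilde f$ correlated with $f$, the total discrepancy $\bcard{\sum_i \Disc{f}{r_i}}$ must be bounded below by a constant. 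On the other hand, by the triangle inequality and the discrepancy lemma, it is at most $K \cdot 2^{-\Omega(n)}$. Confronting the constant lower bound with the $K \cdot 2^{-\Omega(n)}$ upper bound yields $K \ge 2^{\Omega(n)}$, and hence $\card{D} \ge 2^{\Omega(n)}$.

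The main obstacle I anticipate is the first half of the combination step: turning ``$\tilde f$ is an $\varepsilon$-approximation of $f$'' into a genuine \emph{lower} bound on the total discrepancy of $f$ with respect to the cover of $\tilde{f}$. The subtlety is that discrepancy is naturally an upper-bound tool, and here I need the approximation quality to certify that the cover cannot be too small — so I must carefully track how the $\varepsilon 2^n$ errors can only perturb the correlation between $f$ and $\bigvee_i r_i$ by a controlled additive amount, and check that for $\varepsilon$ bounded away from $1/2$ this perturbation cannot cancel the bias coming from $f$ itself. Establishing the clean $2^{-\Omega(n)}$ per-rectangle discrepancy bound for \emph{balanced} partitions (as opposed to the fixed bipartition one gets for free with {\OBDD}s) is the place where the $\mathbb{F}_2$ rank argument has to be done with some care, since an adversarial balanced split could in principle interact badly with $M$; choosing $M$ so that all sufficiently balanced restrictions remain high-rank is what makes the whole scheme go through.
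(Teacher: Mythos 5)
Your proposal follows essentially the same route as the paper's own proof: the hard class is the class of bilinear forms $\mathbf{x}^\top M \mathbf{y}$ over $\mathbb{F}_2$ for matrices all of whose large submatrices have high rank (the paper obtains these from Ajtai's lemma), the key lemma is an exponentially small discrepancy bound for every \emph{balanced} rectangle (which the paper proves by conditioning the rectangle onto a sub-bipartition compatible with $(X,Y)$ and then applying a Cauchy--Schwarz/Lindsey-type estimate), and your combination step is exactly the paper's Lemma~\ref{lemma:bound_size_cover_weak_approximations}, which together with the fact that these functions have close to $2^{n-1}$ models gives $K \geq 2^{\Omega(n)}$ for any $\varepsilon < 1/2$, after which Theorem~\ref{theorem:DNNF_size_rect_cover_size} concludes. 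The two subtleties you flag --- that an adversarial balanced split can mix the $\mathbf{x}$- and $\mathbf{y}$-variables, and that the $\varepsilon 2^n$ errors must not cancel the bias of $f$ --- are precisely the points the paper resolves via its conditioning lemmas and its claim that these bilinear forms have $2^{2n-1}(1-2^{-\delta' n})$ models.
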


Since {\dDNNF}s are strictly more succinct than {\OBDD}s~\cite{DarwicheM02}, Theorem~\ref{theorem:main_result_bilinear_form} is a generalization of the result on {\OBDD}s in \cite{BolligSW02}. However, since the proof is almost identical, differing near the end only, we defer the technical details to the full version. We here only introduce the notion of discrepancy that is central to the proof and will be useful later. 

\paragraph{The discrepancy method.} We want to use Theorem~\ref{theorem:DNNF_size_rect_cover_size} to bound the size of a {\dDNNF} encoding $\tilde{f}$ a weak $\varepsilon$-approximation of $f$ w.r.t.~some distribution. To this end we study disjoint balanced rectangle covers of $\tilde{f}$. Let~$r$ be a rectangle from such a cover. $r$ can make \emph{false positives} on~$f$, i.e.,~have models that are not models of~$f$. Similarly, \emph{true positives} are models shared by~$r$ and~$f$. The \emph{discrepancy} $\Disc{f}{r}$ of $f$ on $r$ is the difference between the number of false positives and true positives, normalized by the total number of assignments: $\Disc{f}{r} := \frac{1}{2^n} \bcard{\card{\inv{r}(1) \cap \inv{f}(1)} - \card{\inv{r}(1) \cap \inv{f}(0)}}$. A small discrepancy indicates that $r$ has few models or that it makes roughly as many false positives as true positives on~$f$. Discrepancy bounds have been used before to prove results in distributional communication complexity~\cite[Chapter 3.5]{KushilevitzN97}. Here we show that when there is an upper bound on $\Disc{f}{r}$ for any rectangle $r$ from a cover of $\tilde{f}$, one can obtain a lower bound on the size of the cover of~$\tilde{f}$.

\begin{lemma}\label{lemma:bound_size_cover_weak_approximations}
Let $f$ be a Boolean function on $n$ variables and let $\tilde{f}$ be a weak $\varepsilon$-approximation of $f$ w.r.t.~$\mathcal{U}$. Let $\tilde{f} = \bigvee_{k = 1}^K r_k$ be a disjoint balanced rectangle cover of $\tilde{f}$ and assume that there is an integer $\Delta > 0$ such that $\Disc{f}{r_k} \leq \Delta/2^n$ for for all $r_k$. Then $K \geq (\card{\inv{f}(1)} - \varepsilon 2^n)/\Delta$.
\end{lemma}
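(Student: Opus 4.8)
The plan is to exploit the disjointness of the cover to turn the global count of models of $f$ into a sum of per-rectangle contributions, and then to control each contribution using the discrepancy hypothesis. For each rectangle $r_k$ write $t_k := \card{\inv{r_k}(1) \cap \inv{f}(1)}$ for its number of true positives and $p_k := \card{\inv{r_k}(1) \cap \inv{f}(0)}$ for its number of false positives. Since the cover is disjoint, every model of $\tilde{f}$ lies in exactly one rectangle, so summing over $k$ recovers the global quantities: $\sum_{k=1}^K t_k = \card{\inv{\tilde{f}}(1) \cap \inv{f}(1)}$ and $\sum_{k=1}^K p_k = \card{\inv{\tilde{f}}(1) \cap \inv{f}(0)}$.

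The key step is to evaluate the signed sum $\sum_{k=1}^K (t_k - p_k)$ in two ways. On the one hand it equals $\card{\inv{\tilde{f}}(1) \cap \inv{f}(1)} - \card{\inv{\tilde{f}}(1) \cap \inv{f}(0)}$. The first term counts the models of $f$ that are also models of $\tilde{f}$, which is $\card{\inv{f}(1)}$ minus the number of false negatives (assignments with $f = 1$ but $\tilde{f} = 0$); the second term is exactly the number of false positives of $\tilde{f}$ (assignments with $f = 0$ but $\tilde{f} = 1$). Both false negatives and false positives are points of disagreement between $f$ and $\tilde{f}$, so together they number at most $\varepsilon 2^n$ by the weak $\varepsilon$-approximation hypothesis. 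Hence $\sum_{k=1}^K (t_k - p_k) \geq \card{\inv{f}(1)} - \varepsilon 2^n$.

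On the other hand, the discrepancy hypothesis says precisely that $\card{t_k - p_k} \leq \Delta$ for every $k$, so in particular $t_k - p_k \leq \Delta$ and therefore $\sum_{k=1}^K (t_k - p_k) \leq K\Delta$. Combining the two bounds yields $K\Delta \geq \card{\inv{f}(1)} - \varepsilon 2^n$, which rearranges to the claim. The one point requiring care is the second paragraph: the discrepancy controls only the \emph{absolute} difference of true and false positives per rectangle, so the argument must sum the signed differences \emph{before} invoking the bound, and recognize that this signed sum collapses to $\card{\inv{f}(1)}$ minus the total number of disagreements between $f$ and $\tilde{f}$. I note that the balancedness of the cover plays no role in this argument; only disjointness and the per-rectangle discrepancy bound are used.
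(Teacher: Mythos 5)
Your proof is correct and follows essentially the same route as the paper's: both use disjointness of the cover to express the true/false positive counts per rectangle, the identity $\card{\{\mathbf{x} : f(\mathbf{x}) \neq \tilde{f}(\mathbf{x})\}} = \card{\inv{f}(1)} - \sum_{k}(t_k - p_k)$, the per-rectangle discrepancy bound $t_k - p_k \leq \Delta$, and the weak-approximation hypothesis to conclude. Your observation that balancedness is never used is also accurate; it matters only when the lemma is later combined with the discrepancy bounds and Theorem~\ref{theorem:DNNF_size_rect_cover_size}.
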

\begin{proof} We have $\card{f \neq \tilde{f}} = \card{\{\mathbf{x} : f(\mathbf{x}) \neq \tilde{f}(\mathbf{x})\}}$ 
\begin{equation*}
\begin{aligned}
&= \card{\inv{f}(1) \cap \inv{\tilde f}(0) } + \card{\inv{f}(0) \cap \inv{\tilde f}(1)} 
\\
&= \big\vert \inv{f}(1) \cap \bigcap\nolimits_{k = 1}^K\inv{r_k}(0) \big\vert + \big\vert \inv{f}(0) \cap \bigcup\nolimits_{k = 1}^K \inv{r_k}(1) \big\vert  
\\
&= \card{\inv{f}(1)} - \sum_{k = 1}^K (\card{\inv{r_k}(1)  \cap \inv{f}(1)} - \card{\inv{r_k}(1) \cap \inv{f}(0)}) 
\\
&\geq \card{\inv{f}(1)} - 2^n \sum\nolimits_{k = 1}^K \Disc{f}{r_k} \geq \card{\inv{f}(1)} - K \Delta
\end{aligned}
\end{equation*}
where the last equality is due to the rectangles being disjoint. The weak $\varepsilon$-approximation w.r.t.~the uniform distribution $\mathcal{U}$ gives that $\card{\tilde{f} \neq f} \leq \varepsilon 2^n$, which we use to conclude.  
\end{proof}

\noindent Combining Lemma~\ref{lemma:bound_size_cover_weak_approximations} with Theorem~\ref{theorem:DNNF_size_rect_cover_size}, the proof of Theorem~\ref{theorem:main_result_bilinear_form} boils down to showing that there are functions such that for every balanced rectangle $r$, the discrepancy $\Disc{f}{r}$ can be suitably bounded, as shown in \cite{BolligSW02}.

\section{Strong Approximations}
\label{sec:strong_approximation}
In this section, we discuss some shortcomings of weak approximation and propose a stronger notion of approximation that avoids them. Let $f_0$ be the constant $0$-function. We say that a function is \emph{trivially weakly $\varepsilon$-approximable} (w.r.t.~some distribution) if $f_0$ is a weak $\varepsilon$-approximation. Considering approximations w.r.t. the uniform distribution, it is easy to find classes of functions that are trivially weakly approximable.

\begin{lemma}\label{lemma:limitations_weak_approx}
Let $\varepsilon > 0$ and $0 \leq \alpha < 1$. Let $\mathcal{C}$ be a class of functions such that every function in $\mathcal{C}$ on $n$ variables has at most $2^{\alpha n}$ models. Then there exists a constant $n_0$, such that any function from $\mathcal{C}$ on more than $n_0$ variables is trivially weakly $\varepsilon$-approximable w.r.t.~the uniform distribution. 
\begin{proof}
Take $n_0 = \frac{1}{1-\alpha}\log(\frac{1}{\varepsilon})$ and choose $f$ any function from~$\mathcal{C}$ on $n > n_0$ variables. Then $\card{\{\mathbf{x} : f(\mathbf{x}) \neq f_0(\mathbf{x})\}} = \card{\inv{f}(1)} \leq 2^{\alpha n} < \varepsilon 2^n$. Therefore $f_0$ is a weak $\varepsilon$-approximation (w.r.t.~the uniform distribution) of any function of $\mathcal{C}$ on sufficiently many variables.
\end{proof}
\end{lemma}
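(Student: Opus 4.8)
The plan is to show directly that the constant $0$-function $f_0$ satisfies the definition of a weak $\varepsilon$-approximation once $n$ is large enough, and to read off the threshold $n_0$ from the model bound. First I would unfold Definition~\ref{definition:weak_approximation}: since $f_0$ is identically $0$, an input $\mathbf{x}$ satisfies $f(\mathbf{x}) \neq f_0(\mathbf{x})$ exactly when $f(\mathbf{x}) = 1$. Hence under the uniform distribution $\Prbu{\mathbf{x} \sim \mathcal{U}}{f(\mathbf{x}) \neq f_0(\mathbf{x})} = \card{\inv{f}(1)}/2^n$, so $f_0$ weakly $\varepsilon$-approximates $f$ w.r.t.~$\mathcal{U}$ precisely when $\card{\inv{f}(1)} \leq \varepsilon 2^n$.

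Next I would feed in the hypothesis on $\mathcal{C}$. Any $f \in \mathcal{C}$ on $n$ variables has $\card{\inv{f}(1)} \leq 2^{\alpha n}$, so it suffices to force $2^{\alpha n} \leq \varepsilon 2^n$, equivalently $2^{-(1-\alpha)n} \leq \varepsilon$. Taking base-$2$ logarithms and using $1 - \alpha > 0$ (this is where the assumption $\alpha < 1$ enters), this rearranges to $n \geq \frac{1}{1-\alpha}\log(1/\varepsilon)$. I would therefore set $n_0 = \frac{1}{1-\alpha}\log(1/\varepsilon)$; for every $n > n_0$ the displayed chain of inequalities holds, so $f_0$ is a weak $\varepsilon$-approximation of $f$, which is exactly the claimed trivial approximability.

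There is no genuine obstacle here: the argument is a one-line counting estimate, and the only points worth care are structural rather than technical. One must keep the role of $\alpha < 1$ in view, since it is precisely what makes the exponent $-(1-\alpha)n$ negative and lets the model count $2^{\alpha n}$ be driven below $\varepsilon 2^n$ by increasing $n$; for $\alpha = 1$ the ratio $2^{\alpha n}/2^n$ stays pinned at $1$ and the conclusion fails. One should also observe that the threshold is uniform over the whole class, since $n_0$ depends only on $\varepsilon$ and $\alpha$ and not on the individual $f$. Finally, in the degenerate regime $\varepsilon \geq 1$ one gets $n_0 \leq 0$ and the statement becomes vacuous, matching the intuition that $f_0$ approximates anything once the error budget is that large.
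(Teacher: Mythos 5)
Your proof is correct and follows essentially the same route as the paper: choose $n_0 = \frac{1}{1-\alpha}\log(1/\varepsilon)$ and verify the chain $\card{\inv{f}(1)} \leq 2^{\alpha n} < \varepsilon 2^n$ for $n > n_0$, which is exactly the paper's argument. The extra remarks on the role of $\alpha < 1$, the uniformity of $n_0$ over the class, and the degenerate case $\varepsilon \geq 1$ are sound but do not change the substance.
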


We remark that similar trivial approximation results can be shown for other distributions if the probability of a random assignment w.r.t.~this distribution being a model is very small.

As a consequence, weak approximation makes no sense for functions with ``few'' (or ``improbable'') models. However such functions are often encountered, for example, random $k$-{\CNF} with sufficiently many clauses are expected to have few models. Furthermore, even for functions with ``many'' models, one often studies encodings over larger sets of variables. For instance, when using Tseitin encoding to transform Boolean circuits into {\CNF}, one introduces auxiliary variables that compute the value of sub-circuits under a given assignment. Generally, auxiliary variables are often used in practice since they reduce the representation size of functions, see e.g.~\cite[Chapter~2]{Handbook09}. The resulting encodings have more variables but most of the time the same number of models as the initial function. Consequently, they are likely to be trivially weakly approximable from Lemma~\ref{lemma:limitations_weak_approx}. For these reasons we define a stronger notion of approximation.

\begin{definition}[Strong approximation]\label{definition:strong_approximation}
Let $\mathcal{D}$ be a distribution of the truth assignments to $X$ and $\varepsilon > 0$. We say that $\tilde{f}$ is a \emph{strong $\varepsilon$-approximation} of $f$ (or strongly $\varepsilon$-approximates $f$) with respect to $\mathcal{D}$ if 
$$
\Prbu{\mathbf{x} \sim \mathcal{D}}{f(\mathbf{x}) \neq \tilde{f}(\mathbf{x})} \leq \varepsilon \Prbu{\mathbf{x} \sim \mathcal{D}}{f(\mathbf{x}) = 1}.
$$
\end{definition}
\noindent When $\mathcal{D}$ is the uniform distribution $\mathcal{U}$, then the condition of strong approximability is equivalent to $\card{\{ \mathbf{x} : f(\mathbf{x}) \neq \tilde{f}(\mathbf{x}) \}} \leq \varepsilon \card{\inv{f}(1)}$. It is easy to see that strong approximation does not have the problem described in Lemma~\ref{lemma:limitations_weak_approx} for weak approximation.
We also remark that strong approximation has been modeled to allow for efficient counting. In fact, a {\dDNNF} computing a strong $\varepsilon$-approximation of a function~$f$ allows approximate model counting for~$f$ with approximation factor~$\varepsilon$.

\par Strong approximation has implicitly already been used in knowledge compilation. For instance it has been shown in \cite{GopalanKMSVV11} -- although the authors use a different terminology -- that for $\varepsilon > 0$, any Knapsack functions on $n$ variables has a strong $\varepsilon$-approximation w.r.t.~$\mathcal{U}$ that can be encoded by an {\OBDD} of size polynomial in $n$ and $1/\varepsilon$. The generalization to {\TAN}s~\cite{ChubarianT16} is also for strong approximations. These results are all the more significant since we know from~\cite{TakenagaNY97} that there exist threshold functions for which exact encodings by {\OBDD} require size exponential in $n$.
\par Obviously, a strong approximation of $f$ w.r.t.~some distribution is also a weak approximation. Thus the statement of Theorem~\ref{theorem:main_result_bilinear_form} can trivially be lifted to strong approximation. However the hard functions from Theorem~\ref{theorem:main_result_bilinear_form} necessarily have sufficiently many models: if we are to consider only functions with few models, then they all are trivially weakly approximable. Yet we prove in the next section that there exist such functions whose exact encoding and strong $\varepsilon$-approximation encodings by {\dDNNF} require size exponential in $n$. Our proof follows the discrepancy method but relies on the following variant of Lemma~\ref{lemma:bound_size_cover_weak_approximations} for strong approximation.

\begin{lemma}\label{lemma:bound_size_cover_strong_approximations}
Let $f$ be a Boolean function on $n$ variables and let $\tilde{f}$ be a strong $\varepsilon$-approximation of $f$ w.r.t.~$\mathcal{U}$. Let $\tilde{f} = \bigvee_{k = 1}^K r_k$ be a disjoint balanced rectangle cover of $\tilde{f}$ and assume that there is an integer $\Delta > 0$ such that $\Disc{f}{r_k} \leq \Delta/2^n$ for for all $r_k$. Then $K \geq (1-\varepsilon)\card{\inv{f}(1)}/\Delta$.
\begin{proof}
The proof is essentially the same as for Lemma~\ref{lemma:bound_size_cover_weak_approximations}, differing only in the last lines where we use $\card{\tilde{f} \neq f} \leq \varepsilon \card{\inv{f}(1)}$ rather than $\card{\tilde{f} \neq f} \leq \varepsilon 2^n$.
\end{proof}
\end{lemma}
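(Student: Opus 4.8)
The plan is to mirror the proof of Lemma~\ref{lemma:bound_size_cover_weak_approximations} step for step, since the authors themselves indicate that the two arguments diverge only at the very end. First I would begin with the identical chain of equalities computing $\card{f \neq \tilde f}$. Writing $\card{f \neq \tilde f} = \card{\inv{f}(1) \cap \inv{\tilde f}(0)} + \card{\inv{f}(0) \cap \inv{\tilde f}(1)}$, I would substitute the disjoint balanced rectangle cover $\tilde f = \bigvee_{k=1}^K r_k$ to replace $\inv{\tilde f}(1)$ by $\bigcup_k \inv{r_k}(1)$ and $\inv{\tilde f}(0)$ by $\bigcap_k \inv{r_k}(0)$. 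Because the cover is disjoint, the union over the rectangles splits additively, which lets me rewrite the expression as $\card{\inv{f}(1)} - \sum_{k=1}^K (\card{\inv{r_k}(1) \cap \inv{f}(1)} - \card{\inv{r_k}(1) \cap \inv{f}(0)})$, exactly as in the weak case.

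Next I would bound the summed term by the discrepancies. Each summand is at most $\card{\inv{r_k}(1) \cap \inv{f}(1)} - \card{\inv{r_k}(1) \cap \inv{f}(0)} \leq 2^n \Disc{f}{r_k}$ by the definition of discrepancy (the absolute value only helps us here), so the whole sum is at most $2^n \sum_{k=1}^K \Disc{f}{r_k} \leq K \Delta$ using the hypothesis $\Disc{f}{r_k} \leq \Delta/2^n$. This yields $\card{f \neq \tilde f} \geq \card{\inv{f}(1)} - K\Delta$, which is the common inequality shared by both lemmas and is where the proofs coincide.

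The only genuinely new ingredient is the final substitution: instead of invoking the weak-approximation bound $\card{\tilde f \neq f} \leq \varepsilon 2^n$, I would invoke the strong-approximation bound $\card{\tilde f \neq f} \leq \varepsilon \card{\inv{f}(1)}$, which is precisely the reformulation of Definition~\ref{definition:strong_approximation} under $\mathcal{U}$ stated right after that definition. Chaining this with $\card{f \neq \tilde f} \geq \card{\inv{f}(1)} - K\Delta$ gives $\card{\inv{f}(1)} - K\Delta \leq \varepsilon \card{\inv{f}(1)}$, and rearranging yields $K\Delta \geq (1-\varepsilon)\card{\inv{f}(1)}$, hence $K \geq (1-\varepsilon)\card{\inv{f}(1)}/\Delta$, as claimed.

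There is no real obstacle here: the work is entirely mechanical once one has the weak-approximation lemma in hand, and the substitution of $\varepsilon \card{\inv{f}(1)}$ for $\varepsilon 2^n$ is the sole point of departure. The only thing worth double-checking is that the factor $(1-\varepsilon)$ comes out correctly rather than a stray $\varepsilon$ or an inverted inequality; keeping careful track of the direction of the bound through the rearrangement is all that is required, and indeed the authors confirm this in their one-line proof by pointing exactly to that replacement.
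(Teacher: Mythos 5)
Your proposal is correct and follows exactly the paper's intended argument: reproduce the derivation of $\card{f \neq \tilde{f}} \geq \card{\inv{f}(1)} - K\Delta$ from Lemma~\ref{lemma:bound_size_cover_weak_approximations} verbatim, then replace the final bound $\card{\tilde{f} \neq f} \leq \varepsilon 2^n$ by the strong-approximation bound $\card{\tilde{f} \neq f} \leq \varepsilon \card{\inv{f}(1)}$ and rearrange. The algebra, including the direction of the inequalities and the emergence of the factor $(1-\varepsilon)$, checks out.
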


\section{Large d-DNNFs for Strong Approximations}
\label{sec:main_result}
In this section, we show a lower bound for strong approximations of some functions that have weak approximations by small {\dDNNF}s.
The functions we consider are characteristic functions of linear codes which we introduce now: a \emph{linear code} of length $n$ is a linear subspace of the vector space $\mathbb{F}^n_2$. Vectors from this subspace are called \emph{code words}. A linear code is characterized by a \emph{parity check matrix} $H$ from $\mathbb{F}^{m \times n}_2$ as follows: a vector $\mathbf{x} \in \mathbb{F}^n_2$ is a code word if and only if $H\mathbf{x} = \mathbf{0}^m$ (operations are modulo 2 in $\mathbb{F}^n_2$). The \emph{characteristic function} of a linear code is a Boolean function which accepts exactly the code words. Note that the characteristic function of a length $n$ linear code of check matrix $H$ has $2^{n-\rk{H}}$ models, where $\rk{H}$ denotes the rank of~$H$. Following ideas developed in \cite{Duris04}, we focus on linear codes whose check matrices~$H$ have the following property: $H$ is called \emph{$s$-good} for some integer $s$ if any submatrix obtained by taking at least $n/3$ columns\footnote{Duris \emph{et al.}~\shortcite{Duris04} limit to submatrices built from at least $n/2$ columns rather than $n/3$; however their result can easily be adapted.} from~$H$ has rank at least $s$. The existence of $s$-good matrices for $s = m-1$ is guaranteed by the next lemma.

\begin{lemma}\textup{\cite{Duris04}}\label{lemma:good_matrices}
Let $m = n/100$ and sample a parity check matrix $H$ uniformly at random from $\mathbb{F}^{m \times n}_2$. Then $H$ is $(m-1)$-good with probability $1 - 2^{-\Omega(n)}$.
\end{lemma}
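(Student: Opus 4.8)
The plan is to establish the contrapositive by a union bound, where the decisive move is to union bound over low-dimensional subspaces of $\mathbb{F}_2^m$ rather than over subsets of columns. First I would observe that $H$ fails to be $(m-1)$-good exactly when there is a set $S$ of columns with $\card{S} \ge n/3$ whose submatrix $H_S$ has $\rk{H_S} \le m-2$. A set of columns spans a space of dimension at most $m-2$ if and only if all its members lie in $W^\perp$ for some $2$-dimensional subspace $W \subseteq \mathbb{F}_2^m$ (here $W^\perp$ denotes the orthogonal complement with respect to the standard bilinear form, so $\dim W^\perp = m-2$). Hence the bad event is equivalent to the existence of a $2$-dimensional subspace $W$ such that at least $n/3$ of the $n$ columns of $H$ lie in $W^\perp$.

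This reformulation turns the problem into a concentration statement. Fixing a $2$-dimensional $W$, the columns of $H$ are independent and uniform in $\mathbb{F}_2^m$ and $\card{W^\perp} = 2^{m-2}$, so each column lands in $W^\perp$ independently with probability $1/4$; thus the count $N_W$ of columns in $W^\perp$ is distributed as $\mathrm{Bin}(n, 1/4)$, with mean $n/4 < n/3$. A Chernoff bound in its sharp relative-entropy form gives $\Prb{N_W \ge n/3} \le 2^{-D(\frac13 \| \frac14)\, n}$, where $D(\frac13\|\frac14)$ is the binary Kullback--Leibler divergence, numerically about $0.025$. Since there are at most $2^{2m}$ subspaces of dimension $2$, a union bound yields
\[
\Prb{H \text{ is not } (m-1)\text{-good}} \;\le\; 2^{2m}\cdot 2^{-D(\frac13\|\frac14)\,n}.
\]
With $m = n/100$ the exponent equals $\big(\tfrac{1}{50} - D(\tfrac13\|\tfrac14)\big)n = -\Omega(n)$, giving the claim.

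The main obstacle is purely quantitative: making the two exponents close, which is where the constant $1/100$ does its work. The naive alternative---fix one set $S$ of size $n/3$, bound the probability that its columns are rank-deficient, and union bound over all $\binom{n}{n/3} \approx 2^{0.92n}$ such sets---fails, because the number of sets swamps the per-set probability. Passing to subspaces reduces the number of union-bound terms to only $2^{2m} = 2^{n/50}$, and the concentration of $N_W$ around $n/4$ then dominates, but only because $2m/n = 1/50$ is strictly below $D(\tfrac13\|\tfrac14) \approx 0.025$. I would therefore be careful to invoke the Chernoff bound with its optimal exponent: the loose multiplicative form gives an exponent below $1/50$ and is not good enough here. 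Finally, the $n/2$ threshold of the cited work only widens this gap, since it raises the relevant divergence to $D(\tfrac12\|\tfrac14) \approx 0.21$, which is why the adaptation to $n/3$ claimed in the footnote goes through.
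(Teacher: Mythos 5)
Your proof is correct and is essentially the paper's own argument in dual form: over $\mathbb{F}_2$ a pair of distinct nonzero vectors spans exactly a $2$-dimensional subspace $W$, so your union bound over the at most $2^{2m}$ subspaces $W$ with at least $n/3$ columns falling in $W^\perp$ (each column independently with probability $1/4$) is the same event decomposition as the paper's union bound over pairs $\{\mathbf{x},\mathbf{y}\}$ of distinct nonzero vectors with $\mathbf{x}^\top\mathbf{c}_i = \mathbf{y}^\top\mathbf{c}_i = 0$. The only difference is the final concentration inequality: the paper uses Hoeffding's bound $\Prb{Z \geq n/3} \leq e^{-n/72}$, which beats the $2^{2m} = 2^{n/50}$ union-bound factor only barely, whereas your relative-entropy form of the Chernoff bound (exponent $D(\tfrac13\|\tfrac14) \approx 0.025$ bits) gives the same conclusion with a more comfortable margin.
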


Our interest in linear codes characterized by $s$-good matrices is motivated by another result from \cite{Duris04} which states that the maximal size of any rectangle entailing the characteristic function of such a code decreases as $s$ increases.

\begin{lemma}\textup{\cite{Duris04}}\label{lemma:size_perfect_rectangle}
Let $f$ be the characteristic function of a linear code of length $n$ characterized by the $s$-good matrix $H$. Let $r$ be a balanced rectangle such that $r \leq$~$f$. Then $\card{\inv{r}(1)} \leq 2^{n-2s}$.
\end{lemma}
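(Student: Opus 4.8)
The plan is to exploit the product structure of a rectangle together with the rank guarantee of $s$-good matrices. Write the partition of $r$ as $(X_1,X_2)$ with $r=\rho_1\wedge\rho_2$, and set $n_1=\card{X_1}$, $n_2=\card{X_2}$, $A=\inv{\rho_1}(1)$, $B=\inv{\rho_2}(1)$. Because $\rho_1$ and $\rho_2$ are defined over disjoint variables, the model set of $r$ is the product $A\times B$, so $\card{\inv{r}(1)}=\card{A}\cdot\card{B}$; the goal is therefore to bound each factor by roughly $2^{n_i-s}$. If $A$ or $B$ is empty the bound is trivial, so I would assume both are nonempty.

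Next I would reorganize the columns of $H$ according to the partition, writing $H=[\,H_1\mid H_2\,]$, where $H_1$ collects the $n_1$ columns indexed by $X_1$ and $H_2$ the $n_2$ columns indexed by $X_2$. Since $r$ is balanced, both $n_1$ and $n_2$ are at least $n/3$, so each of $H_1,H_2$ is a submatrix built from at least $n/3$ columns of $H$; the $s$-goodness of $H$ then gives $\rk{H_1}\ge s$ and $\rk{H_2}\ge s$.

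The heart of the argument is the following observation, which transfers the global parity-check constraint into a separate linear constraint on each side. Every $(\mathbf{a},\mathbf{b})\in A\times B$ is a code word, hence $H_1\mathbf{a}+H_2\mathbf{b}=\mathbf{0}^m$. Fixing any $\mathbf{b}_0\in B$, this forces $H_1\mathbf{a}=H_2\mathbf{b}_0$ for every $\mathbf{a}\in A$, so all elements of $A$ share the same image under $H_1$ and thus lie in a single coset of $\ker H_1$. Consequently $\card{A}\le\card{\ker H_1}=2^{n_1-\rk{H_1}}\le 2^{n_1-s}$, and symmetrically, fixing an $\mathbf{a}_0\in A$ gives $\card{B}\le 2^{n_2-s}$. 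Multiplying the two bounds yields $\card{\inv{r}(1)}=\card{A}\cdot\card{B}\le 2^{n_1+n_2-2s}=2^{n-2s}$, as claimed.

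I expect the only real subtlety to be the coset observation in the last paragraph: once one sees that fixing the assignment on one block of variables pins $H_1\mathbf{a}$ to a constant, the rest is bookkeeping with ranks. The balancedness hypothesis is used exactly once, and only to ensure both column blocks are large enough to trigger the $s$-goodness rank bound, while the disjointness of the rectangle's partition is precisely what makes the model set a genuine product, so that the two factor bounds multiply.
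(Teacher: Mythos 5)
Your proof is correct and follows essentially the same route as the paper's own argument: split $H$ into column blocks $H_1, H_2$ according to the rectangle's partition, use balancedness plus $s$-goodness to get $\rk{H_1}, \rk{H_2} \geq s$, observe that the parity-check constraint pins every model of $\rho_1$ to a single coset of $\ker H_1$ (and symmetrically for $\rho_2$), and multiply the two coset-size bounds. The only difference is cosmetic: you make explicit the product structure $\card{A}\cdot\card{B}$ and the trivial empty case, which the paper leaves implicit.
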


Combining Lemmas~\ref{lemma:good_matrices} and~\ref{lemma:size_perfect_rectangle} with Theorem~\ref{theorem:DNNF_size_rect_cover_size}, one gets the following result that was already observed in~\cite{Mengel16}:

\begin{theorem}\label{theorem:large_dDNNF_linear_code}
There exists a class of linear codes $\mathcal{C}$ such that, for any code from $\mathcal{C}$ of length $n$, any {\dDNNF} encoding its characteristic function has size $2^{\Omega(n)}$.
\end{theorem}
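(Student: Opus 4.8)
The plan is to combine the three cited ingredients in a direct counting argument, with no new machinery needed. First I would fix the class $\mathcal{C}$: by Lemma~\ref{lemma:good_matrices}, setting $m = n/100$, for every (large enough) $n$ an $(m-1)$-good parity check matrix $H \in \mathbb{F}^{m \times n}_2$ exists (indeed a uniformly random one is such with probability $1 - 2^{-\Omega(n)}$), so I let $\mathcal{C}$ consist of one linear code obtained from such an $H$ for each $n$, and write $s := m-1 = n/100 - 1$.

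Next, take the characteristic function $f$ of a code in $\mathcal{C}$ and let $D$ be any {\dDNNF} encoding $f$. Theorem~\ref{theorem:DNNF_size_rect_cover_size} yields a disjoint balanced rectangle cover $f = \bigvee_{k=1}^K r_k$ with $K \leq$ the size of $D$. The two key observations are: (i) since the disjunction equals $f$, each rectangle entails $f$, i.e.\ $r_k \leq f$, so Lemma~\ref{lemma:size_perfect_rectangle} applies and gives $\card{\inv{r_k}(1)} \leq 2^{n-2s}$ for every $k$; and (ii) since the cover is disjoint and equivalent to $f$, the models of $f$ are partitioned among the rectangles, so that $\card{\inv{f}(1)} = \sum_{k=1}^K \card{\inv{r_k}(1)}$.

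Then I would count models. Because an $(m-1)$-good matrix has rank at most $m$, the code has at least $2^{n-m} = 2^{99n/100}$ words, that is $\card{\inv{f}(1)} = 2^{n - \rk{H}} \geq 2^{n-m}$. Combining this with (i) and (ii) gives
\[
2^{n-m} \leq \card{\inv{f}(1)} = \sum_{k=1}^K \card{\inv{r_k}(1)} \leq K \cdot 2^{n-2s},
\]
whence $K \geq 2^{2s - m} = 2^{m-2} = 2^{n/100 - 2} = 2^{\Omega(n)}$, and therefore the size of $D$ is $2^{\Omega(n)}$ as well.

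Since the real content sits in the cited lemmas (the existence of good matrices and the per-rectangle size bound), the argument here is essentially bookkeeping. The only point needing care is lining up the exponents: one must check that the model-count lower bound $2^{n-m}$ for $f$ beats the per-rectangle upper bound $2^{n-2s}$ by the surplus $2^{2s-m} = 2^{m-2}$ in the exponent, which is exactly what drives the size lower bound. The one genuinely essential conceptual step is the use of \emph{disjointness} in (ii), which converts the rectangle-size bound into an additive partition of $\card{\inv{f}(1)}$ rather than a mere union bound; without disjointness the counting would not force a large $K$.
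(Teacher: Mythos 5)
Your proof is correct and follows essentially the same route as the paper's own argument: existence of $(m-1)$-good matrices via Lemma~\ref{lemma:good_matrices}, the per-rectangle bound $2^{n-2s}$ from Lemma~\ref{lemma:size_perfect_rectangle} (using that every rectangle in the cover entails $f$), disjointness to partition the $\geq 2^{n-m}$ models among the rectangles, and Theorem~\ref{theorem:DNNF_size_rect_cover_size} to transfer the bound $K \geq 2^{m-2}$ to {\dDNNF} size. The exponent bookkeeping matches the paper exactly ($\frac{1}{4}2^m = 2^{m-2}$).
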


In the following, we will show that not only are characteristic functions hard to represent exactly as {\dDNNF}, they are even hard to strongly approximate.

Given the characteristic function $f$ of a length $n$ linear code of check matrix $H$, $f$ has exactly $2^{n - \rk{H}}$ models. When $\rk{H}$ is at least a constant fraction of $n$, $f$ satisfies the condition of Lemma~\ref{lemma:limitations_weak_approx}, so for every $\varepsilon > 0$ and $n$ large enough, $f$ is trivially weakly $\varepsilon$-approximable (w.r.t.~the uniform distribution). However we will show that
any strong $\varepsilon$-approximation $\tilde{f}$ of $f$ (w.r.t.~the uniform distribution) only has {\dDNNF} encodings of size exponential in $n$. 

To show this result, we will use the discrepancy method: we are going to find a bound on the discrepancy of $f$ on any rectangle from a balanced disjoint rectangle cover of $\tilde{f}$. Then we will use the bound in Lemma~\ref{lemma:bound_size_cover_strong_approximations} and combine the result with Theorem~\ref{theorem:DNNF_size_rect_cover_size} to finish the proof. 

Note that it is possible that a rectangle from a disjoint rectangle cover of $\tilde{f}$ makes no false positives on $f$. In fact, if this is the case for all rectangles in the cover, then $\tilde{f} \leq f$. In this case, lower bounds can be shown essentially as in the proof of Theorem~\ref{theorem:large_dDNNF_linear_code}. The more interesting case is thus that in which rectangles make false positives. In this case, we assume that no rectangle makes more false positives on $f$ than it accepts models of $f$, because if such a rectangle $r$ exists in a disjoint cover of $\tilde{f}$, then deleting $r$ leads to a better approximation of $f$ than $\tilde{f}$. Thus it is sufficient to consider approximations and rectangle covers in which all rectangles verify $\card{\inv{r}(1) \cap \inv{f}(1)} \geq \card{\inv{r}(1) \cap \inv{f}(0)}$.

\begin{definition}\label{definition:core_rectangle}
Let $r$ be a rectangle. A \emph{core rectangle} (more succinctly a \emph{core}) of $r$ w.r.t.~$f$ is a rectangle $r_{\text{core}}$ with the same partition as $r$ such that 
\begin{enumerate}
\item[a)] $r_\text{core} \leq f$ and $r_\text{core} \leq r$,
\item[b)] $r_{core}$ is maximal in the sense that there is no $r'$ satisfying a) such that $\card{\inv{r'}(1)} > \card{\inv{r_{core}}(1)}$.
\end{enumerate}
\end{definition}

\noindent
Note that if $r \leq f$, then the only core rectangle of $r$ is $r$ itself. Otherwise $r$ may have several core rectangles. We next state a crucial lemma on the relation of discrepancy and cores whose proof we defer to later parts of this section.

\begin{lemma}\label{lemma:discr}
Let $f$ be the characteristic function of some length $n$ linear code, let $r$ be a rectangle with more true positives than false positives on $f$, and let $r_\text{core}$ be a core rectangle of~$r$ with respect to $f$, then 
$$
\Disc{f}{r} \leq \frac{1}{2^n}\card{\inv{r_{\text{core}}}(1)}.
$$
\end{lemma}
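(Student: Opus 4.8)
The plan is to exploit the linearity of the code to decompose everything according to syndromes, and then to reduce the statement to a clean combinatorial inequality. First, since $r$ has at least as many true positives as false positives, the absolute value in the definition of $\Disc{f}{r}$ can be dropped, so it suffices to show
$$
\card{\inv{r}(1)\cap \inv{f}(1)} - \card{\inv{r}(1)\cap \inv{f}(0)} \leq \card{\inv{r_{\text{core}}}(1)}.
$$
Writing $\inv{r}(1) = A\times B$ for the partition $(X_1,X_2)$ of $r$ (with $A$ and $B$ the assignments to $X_1$ and $X_2$ accepted by the two factors) and $M = \inv{r}(1)\cap\inv{f}(1)$, every model of $r$ is either a true or a false positive, so the left-hand side equals $2\card{M} - \card{A}\card{B}$.

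Next I would bring in linearity. Splitting the check matrix as $H = [H_1 \mid H_2]$ along $(X_1,X_2)$, an assignment $(\mathbf{a},\mathbf{b})$ is a code word iff $H_1\mathbf{a} = H_2\mathbf{b}$. Grouping $A$ and $B$ by their \emph{syndrome} in $\mathbb{F}^m_2$, i.e.\ setting $A_{\mathbf{v}} = \{\mathbf{a}\in A : H_1\mathbf{a} = \mathbf{v}\}$ and $B_{\mathbf{v}} = \{\mathbf{b}\in B : H_2\mathbf{b} = \mathbf{v}\}$, a pair lies in $M$ exactly when its two parts share a syndrome, so $\card{M} = \sum_{\mathbf{v}} \card{A_{\mathbf{v}}}\card{B_{\mathbf{v}}}$. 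The key structural observation is that cores correspond to single syndrome blocks: if $A'\times B' \subseteq \inv{f}(1)$ with $A'\subseteq A$ and $B'\subseteq B$ both nonempty, then for all $\mathbf{a},\mathbf{a}'\in A'$ and $\mathbf{b}\in B'$ we get $H_1\mathbf{a} = H_2\mathbf{b} = H_1\mathbf{a}'$, so all of $A'$ (and symmetrically all of $B'$) shares one common syndrome $\mathbf{v}$; hence $A'\subseteq A_{\mathbf{v}}$ and $B'\subseteq B_{\mathbf{v}}$. Since $A_{\mathbf{v}}\times B_{\mathbf{v}}$ is itself a valid core candidate for each $\mathbf{v}$, maximality gives $\card{\inv{r_{\text{core}}}(1)} = \max_{\mathbf{v}}\card{A_{\mathbf{v}}}\card{B_{\mathbf{v}}}$.

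Writing $a_{\mathbf{v}} = \card{A_{\mathbf{v}}}$ and $b_{\mathbf{v}} = \card{B_{\mathbf{v}}}$, the statement then reduces to the purely numerical inequality
$$
2\sum_{\mathbf{v}} a_{\mathbf{v}} b_{\mathbf{v}} - \Big(\sum_{\mathbf{v}} a_{\mathbf{v}}\Big)\Big(\sum_{\mathbf{v}} b_{\mathbf{v}}\Big) \leq \max_{\mathbf{v}} a_{\mathbf{v}} b_{\mathbf{v}}.
$$
Expanding the product and cancelling the diagonal, its left-hand side equals $\sum_{\mathbf{v}} a_{\mathbf{v}} b_{\mathbf{v}} - \sum_{\mathbf{v}\neq \mathbf{w}} a_{\mathbf{v}} b_{\mathbf{w}}$; so after moving the maximal term (attained at some argmax $\mathbf{v}^\ast$) to the right it is enough to bound each remaining diagonal entry by $a_{\mathbf{v}} b_{\mathbf{v}} \leq a_{\mathbf{v}} b_{\mathbf{v}^\ast} + a_{\mathbf{v}^\ast} b_{\mathbf{v}}$, which holds by AM--GM since $a_{\mathbf{v}} b_{\mathbf{v}^\ast} + a_{\mathbf{v}^\ast} b_{\mathbf{v}} \geq 2\sqrt{(a_{\mathbf{v}} b_{\mathbf{v}})(a_{\mathbf{v}^\ast} b_{\mathbf{v}^\ast})} \geq 2a_{\mathbf{v}} b_{\mathbf{v}}$, using $a_{\mathbf{v}^\ast} b_{\mathbf{v}^\ast} \geq a_{\mathbf{v}} b_{\mathbf{v}}$. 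I expect this final termwise inequality, together with correctly identifying the core size as $\max_{\mathbf{v}} a_{\mathbf{v}} b_{\mathbf{v}}$, to be the crux of the argument; everything else is bookkeeping. Note that only the linearity of the code is used here — the $s$-goodness of $H$ enters later, when bounding the size of the core itself via Lemma~\ref{lemma:size_perfect_rectangle}.
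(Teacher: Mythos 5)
Your proof is correct, but it takes a genuinely different route from the paper's. The paper never decomposes by syndromes; instead it works with an abstract \emph{core extraction operator} $\mathcal{C}_f$ and extracts cores \emph{iteratively}: $(A_1,B_1)$ from $r$, then $(A_2,B_2)$ from what remains, and so on, proving two auxiliary claims --- that the cross-terms $(A_i \times \overB_i) \cup (\overA_i \times B_i)$ are pairwise disjoint false positives, and that $\bigvee_i (\mathbb{1}_{A_i}\wedge\mathbb{1}_{B_i})$ is a disjoint cover of $r \wedge f$ --- and then bounds the false positives by comparing \emph{consecutive} blocks, $\card{A_i}\card{B_{i+1}}+\card{A_{i+1}}\card{B_i} \geq \card{A_{i+1}}\card{B_{i+1}}$, which telescopes against $\sum_{i\geq 2}\card{A_i}\card{B_i}$. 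Your syndrome decomposition $A_{\mathbf{v}}=\{\mathbf{a}: H_1\mathbf{a}=\mathbf{v}\}$, $B_{\mathbf{v}}=\{\mathbf{b}: H_2\mathbf{b}=\mathbf{v}\}$ makes this block structure canonical rather than greedy: linearity gives you for free that the true positives are exactly $\bigcup_{\mathbf{v}} A_{\mathbf{v}}\times B_{\mathbf{v}}$, that all off-diagonal products $A_{\mathbf{v}}\times B_{\mathbf{w}}$ ($\mathbf{v}\neq\mathbf{w}$) are false positives, and the exact identity $\card{\inv{r_{\text{core}}}(1)}=\max_{\mathbf{v}}\card{A_{\mathbf{v}}}\card{B_{\mathbf{v}}}$ (the paper only needs, and only establishes, that the first extracted core is a largest one). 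In fact, unwinding the paper's iteration on a linear code recovers precisely your syndrome blocks sorted by decreasing size, and both proofs close with the same AM--GM trick --- yours compares every diagonal block to the argmax block $\mathbf{v}^\ast$, the paper compares block $i$ to block $i+1$. What each buys: your argument is shorter and self-contained, replacing the operator formalism and both deferred claims with one linear-algebra observation; the paper's formulation mirrors Definition~\ref{definition:core_extraction_operator} directly and isolates the single place where linearity of the code enters (its Claim~\ref{claim:false_positives}), which makes it clearer what an extension beyond linear codes would require. Both proofs correctly use only linearity here, deferring $s$-goodness to Lemma~\ref{lemma:size_perfect_rectangle}.
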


Lemma~\ref{lemma:discr} says the following: consider a rectangle $r_{\text{core}} \leq f$ which is a core of a rectangle $r$. If $r$ accepts more models of $f$ than $r_{\text{core}}$, then for each additional such model $r$ accepts at least one false positive. With Lemma~\ref{lemma:discr}, it is straightforward to show the main result of this section.

\begin{theorem}\label{theorem:main_result_linear_code}
Let $0 \leq \varepsilon < 1$. There is a class of Boolean functions $\mathcal{C}$ such that any $f \in \mathcal{C}$ on $n$ variables is trivially weakly $\varepsilon$-approximable w.r.t.~$\mathcal{U}$ but any {\dDNNF} encoding a strong $\varepsilon$-approximation w.r.t.~$\mathcal{U}$ has size $2^{\Omega(n)}$.
\end{theorem}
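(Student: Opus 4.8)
The plan is to combine the machinery the paper has already set up. We want a class $\mathcal{C}$ of functions that (i) has so few models that Lemma~\ref{lemma:limitations_weak_approx} makes them trivially weakly $\varepsilon$-approximable, yet (ii) resists strong approximation by {\dDNNF}. The natural candidates are exactly the characteristic functions of linear codes from $(m-1)$-good matrices introduced for Theorem~\ref{theorem:large_dDNNF_linear_code}. With $m = n/100$ and $\rk{H} = m$ (since an $(m-1)$-good matrix of $m$ rows has rank $m$), such an $f$ has $2^{n-m} = 2^{0.99n}$ models, so $\alpha = 0.99 < 1$ and condition (i) holds by Lemma~\ref{lemma:limitations_weak_approx}. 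The whole content is therefore showing (ii).

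First I would take a strong $\varepsilon$-approximation $\tilde f$ of $f$ w.r.t.~$\mathcal{U}$ and, by Theorem~\ref{theorem:DNNF_size_rect_cover_size}, reduce bounding any {\dDNNF} for $\tilde f$ to bounding the size $K$ of a disjoint balanced rectangle cover $\tilde f = \bigvee_{k=1}^K r_k$. As the discussion preceding Definition~\ref{definition:core_rectangle} argues, we may assume each $r_k$ has at least as many true positives as false positives on $f$ (otherwise deleting it only improves the approximation and shrinks the cover). Now invoke Lemma~\ref{lemma:discr}: for each $r_k$ there is a core $r_{k,\text{core}} \leq f$ with $\Disc{f}{r_k} \leq \frac{1}{2^n}\card{\inv{r_{k,\text{core}}}(1)}$. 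Crucially, $r_{k,\text{core}}$ is a balanced rectangle entailing $f$, so Lemma~\ref{lemma:size_perfect_rectangle} with $s = m-1$ gives $\card{\inv{r_{k,\text{core}}}(1)} \leq 2^{n-2(m-1)}$. Hence every rectangle in the cover satisfies $\Disc{f}{r_k} \leq 2^{n-2(m-1)}/2^n = \Delta/2^n$ with $\Delta = 2^{n-2(m-1)}$.

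With this uniform discrepancy bound in hand, Lemma~\ref{lemma:bound_size_cover_strong_approximations} yields
\begin{equation*}
K \;\geq\; \frac{(1-\varepsilon)\card{\inv{f}(1)}}{\Delta} \;=\; \frac{(1-\varepsilon)\,2^{n-m}}{2^{n-2(m-1)}} \;=\; (1-\varepsilon)\,2^{m-2}.
\end{equation*}
Since $m = n/100$ and $\varepsilon < 1$ is a fixed constant, this is $2^{\Omega(n)}$, and by Theorem~\ref{theorem:DNNF_size_rect_cover_size} every {\dDNNF} for $\tilde f$ has size $2^{\Omega(n)}$ as well. By Lemma~\ref{lemma:good_matrices} the required $(m-1)$-good matrices exist (in fact with probability $1-2^{-\Omega(n)}$), so the class $\mathcal{C}$ is nonempty for each $n$; this completes the argument.

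The genuinely hard step is Lemma~\ref{lemma:discr}, which the paper has deferred — everything above is essentially bookkeeping once that discrepancy-versus-core estimate is available. I expect the obstacle there to be controlling the false positives: one must show that a rectangle accepting $t$ more code words than its core forces at least $t$ false positives, so that the signed difference $\card{\inv{r}(1)\cap \inv{f}(1)} - \card{\inv{r}(1)\cap \inv{f}(0)}$ is bounded by $\card{\inv{r_{\text{core}}}(1)}$. This presumably exploits the product (rectangle) structure together with the linearity of the code: extending the core along one side of the partition to capture extra models must, by a coset/translation argument over $\mathbb{F}_2^n$, simultaneously sweep in non-code words on the other side.
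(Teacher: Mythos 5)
Your proposal follows essentially the same route as the paper's own proof: the same class of characteristic functions of linear codes with $(m-1)$-good check matrices for $m=n/100$, the same reduction via the discussion preceding Definition~\ref{definition:core_rectangle}, the same combination of Lemma~\ref{lemma:discr} with Lemma~\ref{lemma:size_perfect_rectangle} to bound the discrepancy of every rectangle by $2^{n-2(m-1)}/2^n$, then Lemma~\ref{lemma:bound_size_cover_strong_approximations} and Theorem~\ref{theorem:DNNF_size_rect_cover_size} to conclude, yielding the identical bound $(1-\varepsilon)2^{m-2}=2^{\Omega(n)}$. The only slip is your claim that an $(m-1)$-good matrix with $m$ rows has rank exactly $m$ (goodness only guarantees rank at least $m-1$), but this is harmless: trivial weak approximability needs only $\card{\inv{f}(1)}\leq 2^{n-(m-1)}\leq 2^{\alpha n}$ for some $\alpha<1$, and the final count needs only $\card{\inv{f}(1)}\geq 2^{n-m}$, which follows from $H$ having $m$ rows, exactly as the paper argues.
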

\begin{proof}
Choose $\mathcal{C}$ to be the class of characteristic functions for length $n$ linear codes characterized by $(m-1)$-good check matrices with $m = n/100$. Existence of these functions as $n$ increases is guaranteed by Lemma~\ref{lemma:good_matrices}. Let $\tilde{f}$ be a strong $\varepsilon$-approximation of $f \in \mathcal{C}$ w.r.t.~$\mathcal{U}$ and let $\bigvee_{k = 1}^K r_k$ be a rectangle cover of $\tilde{f}$. Combining Lemma~\ref{lemma:discr} with Lemma~\ref{lemma:size_perfect_rectangle}, we obtain $\Disc{f}{r_k} \leq 2^{-n}2^{n-2(m-1)}$. We then use Lemma~\ref{lemma:bound_size_cover_strong_approximations} to get $K \geq (1-\varepsilon)2^{2m-n}\card{\inv{f}(1)}/4$. The rank of the check matrix of $f$ is at most $m$ so $\card{\inv{f}(1)} \geq 2^{n-m}$ and $K \geq (1-\varepsilon)2^{m}/4 = (1-\varepsilon)2^{\Omega(n)}$. We use Theorem~\ref{theorem:DNNF_size_rect_cover_size} to conclude.
\end{proof}

Note that Theorem~\ref{theorem:main_result_linear_code} is optimal w.r.t.~$\varepsilon$ since for $\varepsilon=1$ there is always the trivial approximation by the constant $0$-function.

It remains to show Lemma~\ref{lemma:discr} in the remainder of this section to complete the proof of Theorem~\ref{theorem:main_result_linear_code}. To this end, we make another definition.

\begin{definition}\label{definition:core_extraction_operator}
 Let $(X_1, X_2)$  be a partition of the variables of~$f$. A \emph{core extraction operator} w.r.t.~$f$ is a mapping $\mathcal{C}_f$ that maps every pair $(S_1, S_2)$ of sets of assignments over $X_1$ and $X_2$, respectively, to a pair $(S_1', S_2')$ with
\begin{enumerate}
\item[a)] $S_1' \subseteq S_1$ and $S_2' \subseteq S_2$,
\item[b)] assignments from $S_1' \times S_2'$ are models of $f$,
\item[c)] if $f$ has no model in $S_1 \times S_2$, then $S_1' = S_2' = \emptyset$,
\item[d)] $S_1'$ and $S_2'$ are maximal in the sense that for every $S_1'' \subseteq S_1$ and every $S_2'' \subseteq S_2$ respecting the properties a), b) and c), we have $\card{S_1'}\card{S_2'} \geq \card{S_1''}\card{S_2''}$.
\end{enumerate}
\end{definition}

Intuitively $S'_1$ and $S'_2$ are the largest subsets one can extract from $S_1$ and $S_2$ such that assignments from $S_1' \times S_2'$ are models of $f$. Note that, similarly to rectangle cores, the sets $S_1'$ and $S_2'$ are not necessarily uniquely defined. In this case, we assume that $\mathcal{C}_f$ returns an arbitrary pair with the required properties. One can show that core extraction operators yield core rectangles, as their name suggests.

\begin{claim}\label{claim:core}
Let $r = \rho_1 \wedge \rho_2$ be a rectangle w.r.t.~the partition $(X_1, X_2)$ and denote $(A,B) = \mathcal{C}_f(\inv{\rho_1}(1), \inv{\rho_2}(1))$. Then the rectangle $\mathbb{1}_A \wedge \mathbb{1}_B$ is a core rectangle of $r$ w.r.t.~$f$.
\end{claim}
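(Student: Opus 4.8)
The plan is to reduce the two rectangle-level requirements of Definition~\ref{definition:core_rectangle} to the set-level properties that $\mathcal{C}_f$ already guarantees. Write $S_1 = \inv{\rho_1}(1)$ and $S_2 = \inv{\rho_2}(1)$ for the model sets of the two factors of $r$; identifying a full assignment with the pair of its restrictions to $X_1$ and $X_2$, we have $\inv{r}(1) = S_1 \times S_2$. By construction $(A,B) = \mathcal{C}_f(S_1, S_2)$, and since $\mathbb{1}_A$ and $\mathbb{1}_B$ are functions over $X_1$ and $X_2$ with model sets $A$ and $B$, the candidate $r_{\text{core}} := \mathbb{1}_A \wedge \mathbb{1}_B$ is a rectangle over the same partition $(X_1, X_2)$ as $r$, with $\inv{r_{\text{core}}}(1) = A \times B$. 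This bookkeeping is what makes the two definitions line up, and the rest is verification.

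Condition a) of Definition~\ref{definition:core_rectangle} is immediate from properties a) and b) of $\mathcal{C}_f$. Property a) gives $A \subseteq S_1$ and $B \subseteq S_2$, hence $A \times B \subseteq S_1 \times S_2$, i.e. $r_{\text{core}} \leq r$; property b) says every element of $A \times B$ is a model of $f$, i.e. $r_{\text{core}} \leq f$.

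The substance is condition b), maximality, which I would prove by contradiction. Suppose some rectangle $r'$ over the same partition satisfies $r' \leq f$, $r' \leq r$ and has strictly more than $\card{A}\card{B}$ models. Write $S_1''$ and $S_2''$ for the model sets of the two factors of $r'$, so $r'$ has exactly $\card{S_1''}\card{S_2''}$ models; the goal is to verify that $(S_1'', S_2'')$ respects properties a), b) and c) of $\mathcal{C}_f$, after which property d) forces $\card{S_1''}\card{S_2''} \leq \card{A}\card{B}$, the contradiction. The one delicate point --- and the step I expect to be the main obstacle --- is deducing $S_1'' \subseteq S_1$ and $S_2'' \subseteq S_2$, since $r' \leq r$ only yields the product containment $S_1'' \times S_2'' \subseteq S_1 \times S_2$, which does not factorise when a factor is empty. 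I would first dispose of that degenerate case: if $S_1''$ or $S_2''$ is empty then $r'$ has no models and cannot exceed $\card{A}\card{B} \geq 0$. Assuming both nonempty, fixing a witness in one factor shows the containment factorises, giving property a); the hypothesis $r' \leq f$ gives property b); and property c) holds, because if $f$ had no model in $S_1 \times S_2$ then every model of $r'$ would be a model of $f$ inside $S_1 \times S_2$, contradicting that $r'$ is nonempty, so its premise is never triggered. With a), b), c) in hand, property d) of $\mathcal{C}_f$ applies and closes the argument.
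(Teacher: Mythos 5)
Your proof is correct and follows essentially the same route as the paper's: condition a) of Definition~\ref{definition:core_rectangle} is read off from properties a) and b) of $\mathcal{C}_f$, and maximality is proved by contradiction via property d). You are in fact more careful than the paper, which silently assumes that any competing rectangle $r' \leq r$ over the same partition has factor model sets contained in $\inv{\rho_1}(1)$ and $\inv{\rho_2}(1)$; your explicit handling of the empty-factor degenerate case and your verification of condition c) fill that minor gap.
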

The proof of Claim~\ref{claim:core} and those of several other claims in this section are deferred to the full version due to space constraints. At this point, recall that $f$ is the characteristic function of a linear code for a $m \times n$ check matrix $H$.

\begin{claim}\label{claim:false_positives}
Let $r = \rho_1 \wedge \rho_2$ be a rectangle w.r.t.~the partition $(X_1,X_2)$. Let $(A, B) = \mathcal{C}_f(\inv{\rho_1}(1), \inv{\rho_2}(1))$ and consider the core rectangle $r_\text{core} = \mathbb{1}_A \wedge \mathbb{1}_B$. Let $\overA = \inv{\rho_1}(1) \setminus A$ and $\overB = \inv{\rho_2}(1) \setminus B$. Then all assignments from $\overA \times B$ and $A \times\overB$ are false positives of $r$ on $f$.
\end{claim}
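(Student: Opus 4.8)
The plan is to exploit the linear structure of $f$ in order to describe the core $(A,B)$ explicitly in terms of syndromes. First I would fix an ordering of the columns of $H$ compatible with the partition $(X_1,X_2)$ and write $H = [H_1 \mid H_2]$, where $H_1 \in \mathbb{F}_2^{m \times \card{X_1}}$ collects the columns indexed by $X_1$ and $H_2$ those indexed by $X_2$. Writing an assignment as a pair $(\mathbf{x}_1,\mathbf{x}_2)$, we then have $H\mathbf{x} = H_1\mathbf{x}_1 + H_2\mathbf{x}_2$, so over $\mathbb{F}_2$ the pair $(\mathbf{x}_1,\mathbf{x}_2)$ is a model of $f$ (a code word) if and only if $H_1\mathbf{x}_1 = H_2\mathbf{x}_2$. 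I call $H_1\mathbf{x}_1$ and $H_2\mathbf{x}_2$ the \emph{syndromes} of $\mathbf{x}_1$ and $\mathbf{x}_2$; membership in the code is thus equivalent to the two syndromes coinciding.

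Next I would identify $A$ and $B$ with single syndrome classes. If $f$ has no model in $\inv{\rho_1}(1) \times \inv{\rho_2}(1)$, then $A = B = \emptyset$ by property c) of $\mathcal{C}_f$, and then $\overA \times B = A \times \overB = \emptyset$, so the claim holds vacuously. Otherwise both $A$ and $B$ are nonempty, since an empty factor would give product $0$ and contradict maximality. As every pair in $A \times B$ is a code word, fixing any $\mathbf{b}_0 \in B$ forces $H_1\mathbf{a} = H_2\mathbf{b}_0 =: \mathbf{v}$ for all $\mathbf{a} \in A$, and symmetrically every $\mathbf{b} \in B$ satisfies $H_2\mathbf{b} = \mathbf{v}$. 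Hence $A \subseteq A_{\mathbf{v}}$ and $B \subseteq B_{\mathbf{v}}$, where $A_{\mathbf{v}} := \{\mathbf{a} \in \inv{\rho_1}(1) : H_1\mathbf{a} = \mathbf{v}\}$ and $B_{\mathbf{v}} := \{\mathbf{b} \in \inv{\rho_2}(1) : H_2\mathbf{b} = \mathbf{v}\}$. The pair $(A_{\mathbf{v}},B_{\mathbf{v}})$ itself satisfies properties a)--c) of the core extraction operator, so maximality (property d)) yields $\card{A}\card{B} \geq \card{A_{\mathbf{v}}}\card{B_{\mathbf{v}}}$; combined with the two inclusions and finiteness this forces $A = A_{\mathbf{v}}$ and $B = B_{\mathbf{v}}$.

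With this explicit description the conclusion is immediate. Take $(\mathbf{a},\mathbf{b}) \in \overA \times B$. Since $\overA \subseteq \inv{\rho_1}(1)$ and $B \subseteq \inv{\rho_2}(1)$, the pair satisfies $\rho_1 \wedge \rho_2 = r$, so it is a model of $r$. On the other hand, $\mathbf{a} \in \inv{\rho_1}(1)$ with $\mathbf{a} \notin A = A_{\mathbf{v}}$ means $H_1\mathbf{a} \neq \mathbf{v}$, whereas $\mathbf{b} \in B$ gives $H_2\mathbf{b} = \mathbf{v}$; thus $H_1\mathbf{a} \neq H_2\mathbf{b}$ and $(\mathbf{a},\mathbf{b})$ is not a code word, i.e.\ not a model of $f$. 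Hence it is a false positive of $r$ on $f$. The argument for $A \times \overB$ is identical with the roles of the two syndromes exchanged.

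I expect the only real content to be the maximality upgrade in the second paragraph --- turning the inclusions $A \subseteq A_{\mathbf{v}}$ and $B \subseteq B_{\mathbf{v}}$ into equalities. Everything else is bookkeeping resting on the observation that, for a linear code, the code words inside a product set $\inv{\rho_1}(1) \times \inv{\rho_2}(1)$ partition into disjoint combinatorial rectangles indexed by syndrome value, so that a maximal all-code-word rectangle must be exactly one such class.
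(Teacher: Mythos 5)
Your proof is correct and follows essentially the same route as the paper's: split $H = [H_1 \mid H_2]$ along the partition $(X_1,X_2)$, observe that property b) of $\mathcal{C}_f$ forces the syndrome $H_2\mathbf{b}$ to be constant on $B$ (and $H_1\mathbf{a}$ on $A$), and invoke maximality of the core to conclude that any $\mathbf{a}' \in \overA$ has a mismatched syndrome. The only difference is packaging: the paper applies maximality pointwise (if some $\mathbf{a}' \in \overA$ formed models with all of $B$, then $(A \cup \{\mathbf{a}'\}, B)$ would contradict property d)), whereas you first upgrade the inclusions $A \subseteq A_{\mathbf{v}}$ and $B \subseteq B_{\mathbf{v}}$ to the exact characterization $A = A_{\mathbf{v}}$, $B = B_{\mathbf{v}}$ --- a slightly stronger intermediate statement than needed, which also has the small benefit of handling the empty-core case explicitly.
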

\begin{proof}
Index the $n$ columns of $H$ with the variables in $X$ ($x_1$ for column 1, $x_2$ for column 2, and so on). Let $H_1$ (resp. $H_2$) be the matrix obtained taking only the columns of $H$ whose indices are in $X_1$ (resp. $X_2$).
Obviously all vectors in $\overA \times B$ and $A \times \overB$ are models of $r$, but we will prove that they are not models of $f$.
For every $\mathbf{a}' \in \overA$ there is $\mathbf{b} \in B$ such that $H(\mathbf{a}', \mathbf{b}) = H_1\mathbf{a}' + H_2\mathbf{b} \neq \mathbf{0}^m$, otherwise the core rectangle would not be maximal. By definition of $A$ and $B$, given $\mathbf{a} \in A$, for all $\mathbf{b} \in B$ we have $H(\mathbf{a}, \mathbf{b}) = H_1\mathbf{a} + H_2\mathbf{b} = \mathbf{0}^m$, so $H_2\mathbf{b}$ is constant over $B$. Therefore if $H_1\mathbf{a}' \neq H_2\mathbf{b}$ for \emph{some} $\mathbf{b} \in B$ then $H_1\mathbf{a}' \neq H_2\mathbf{b}$ for \emph{all} $\mathbf{b} \in B$. But then no vector from $\{\mathbf{a}'\} \times B$ can be a model of $f$ and since $\mathbf{a}'$ has been chosen arbitrarily in $\overA$, all vectors from $\overA \times B$ are false positives. The case for $A \times \overB$ follows analogously.
\end{proof}

For $A$ and $B$ defined as in Claim~\ref{claim:false_positives}, we know that the assignments from $A \times B$ are models of $f$, and that those from $\overA \times B$ and $A \times \overB$ are not, but we have yet to discuss the case of $\overA \times \overB$. There may be additional models in this last set. The key to proving Lemma~\ref{lemma:discr} is to iteratively extract core rectangles from $\mathbb{1}_{\overA} \wedge \mathbb{1}_{\overB}$ and control how many false positives are generated at each step of the iteration. To this end we define the collection $((A_i, B_i))_{i = 0}^{l+1}$ as follows:
\begin{enumerate}[leftmargin=*]
\item[$\bullet$] $A_0 = \inv{\rho_1}(1)$ and $B_0 = \inv{\rho_2}(1)$,
\item[$\bullet$] for $i \geq 1$, $(A_{i}, B_{i}) = \mathcal{C}_f(A_0 \setminus \bigcup_{j = 1}^{i-1} A_j \text{ },\text{ } B_0 \setminus \bigcup_{j = 1}^{i-1} B_j)$,
\item[$\bullet$] $A_{l+1}$ and $B_{l+1}$ are empty, but for any $i < l+1$, neither $A_i$ nor $B_i$ is empty.
\end{enumerate}
Denoting $\overA_i \coloneqq A_0 \setminus \bigcup_{j = 1}^{i} A_j$ and $\overB_i \coloneqq B_0 \setminus \bigcup_{j = 1}^{i} B_j$, we can write $(A_{i}, B_{i}) = \mathcal{C}_f(\overA_{i-1},\overB_{i-1})$ (note that $\overA_0 = A_0$ and $\overB_0 = B_0$).
Basically, we extract a core $(\mathbb{1}_{A_1} \wedge \mathbb{1}_{B_1})$ from $r$, then we extract a core $(\mathbb{1}_{A_2} \wedge \mathbb{1}_{B_2})$ from $(\mathbb{1}_{\overA_1} \wedge \mathbb{1}_{\overB_1})$, and so on until there is no model of $f$ left in $\overA_l \times \overB_l$, in which case no core can be extracted from $(\mathbb{1}_{\overA_l} \wedge \mathbb{1}_{\overB_l})$ and $\mathcal{C}_f(\overA_l, \overB_l)$ returns $(\emptyset, \emptyset)$.
The construction is illustrated in Figure~\ref{fig:iterative}.
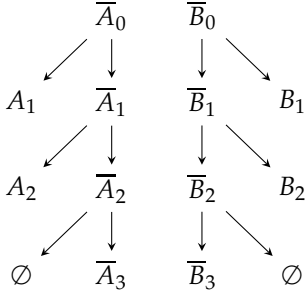
\begin{figure}[t]
\centering
\begin{tikzpicture}
\def\x{0.6}
\def\y{1.15}
	\node (A0) at (-\x,0) {$\overA_0$};
	\node (A01) at (-\x,-\y) {$\overA_1$};  \node (A1) at (-3*\x,-\y) {$A_1$};
	\node (A02) at (-\x,-2*\y) {$\overA_2$};  \node (A2) at (-3*\x,-2*\y) {$A_2$};
	\node (A03) at (-\x,-3*\y) {$\overA_3$};  \node (A3) at (-3*\x,-3*\y) {$\emptyset$};

	\draw[-stealth] (A0) -- (A01); 
	\draw[-stealth] (A0) -- (A1);
	\draw[-stealth] (A01) -- (A02); \draw[-stealth] (A01) -- (A2);
	\draw[-stealth] (A02) -- (A03); \draw[-stealth] (A02) -- (A3);
	
	\node (B0) at (+\x,0) {$\overB_0$};
	\node (B01) at (+\x,-\y) {$\overB_1$};  \node (B1) at (+3*\x,-\y) {$B_1$};
	\node (B02) at (+\x,-2*\y) {$\overB_2$};  \node (B2) at (+3*\x,-2*\y) {$B_2$};
	\node (B03) at (+\x,-3*\y) {$\overB_3$};  \node (B3) at (+3*\x,-3*\y) {$\emptyset$};

	\draw[-stealth] (B0) -- (B01); \draw[-stealth] (B0) -- (B1);
	\draw[-stealth] (B01) -- (B02); \draw[-stealth] (B01) -- (B2);
	\draw[-stealth] (B02) -- (B03); \draw[-stealth] (B02) -- (B3);
\end{tikzpicture}
\caption{An iterative core extraction with $l = 2$}
\label{fig:iterative}
\end{figure}
\begin{claim}\label{claim:rec_false_positives}
For any $i > 0$, all assignments from $F_i:= (A_i \times \overB_i) \cup (\overA_i \times B_i)$ are false positives of $r$ on $f$. Furthermore for every $i\ne j$ we have $F_i \cap F_j = \emptyset$.
\end{claim}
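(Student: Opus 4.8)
The plan is to establish the two assertions separately: the false-positive statement I would derive from Claim~\ref{claim:false_positives} by recognizing that each stage of the iteration is itself a single application of the core extraction operator, and the disjointness statement I would obtain by a direct bookkeeping of the nested set differences $\overA_i$ and $\overB_i$.

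For the first assertion, fix $i>0$. The key observation is that the $i$-th step is the core extraction $(A_i,B_i)=\mathcal{C}_f(\overA_{i-1},\overB_{i-1})$, where the residual sets satisfy $\overA_{i-1}=\inv{\rho_1}(1)\setminus\bigcup_{j<i}A_j\subseteq\inv{\rho_1}(1)$ and $\overB_{i-1}\subseteq\inv{\rho_2}(1)$. I would therefore instantiate Claim~\ref{claim:false_positives} with the rectangle $r'=\mathbb{1}_{\overA_{i-1}}\wedge\mathbb{1}_{\overB_{i-1}}$, whose core with respect to $f$ is exactly $(A_i,B_i)$. Since $\overA_i=\overA_{i-1}\setminus A_i$ and $\overB_i=\overB_{i-1}\setminus B_i$, the claim yields immediately that the assignments in $\overA_i\times B_i$ and $A_i\times\overB_i$ are models of $r'$ that are not models of $f$. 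Because $\overA_{i-1}\subseteq\inv{\rho_1}(1)$ and $\overB_{i-1}\subseteq\inv{\rho_2}(1)$, every model of $r'$ is also a model of $r$, so these assignments are false positives of $r$ on $f$; hence all of $F_i$ consists of false positives. The degenerate step $i=l+1$ is vacuous, since there $A_{l+1}=B_{l+1}=\emptyset$ and so $F_{l+1}=\emptyset$.

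For the second assertion I would fix $i<j$ and verify that each of the four products obtained by intersecting a part of $F_i$ with a part of $F_j$ is empty, using two elementary properties of the construction: the cores $A_1,A_2,\dots$ are pairwise disjoint (and likewise the $B_k$, since $A_k\subseteq\overA_{k-1}$ excludes $A_1,\dots,A_{k-1}$), and the residual set $\overA_j=\inv{\rho_1}(1)\setminus\bigcup_{k\le j}A_k$ is disjoint from every $A_k$ with $k\le j$ (symmetrically for $\overB_j$). Concretely, $(A_i\times\overB_i)\cap(A_j\times\overB_j)$ and $(\overA_i\times B_i)\cap(\overA_j\times B_j)$ vanish on the first and second coordinate respectively by disjointness of the cores; the mixed product $(A_i\times\overB_i)\cap(\overA_j\times B_j)$ vanishes because $A_i\subseteq\bigcup_{k\le j}A_k$ is disjoint from $\overA_j$, and symmetrically $(\overA_i\times B_i)\cap(A_j\times\overB_j)$ vanishes because $B_i\subseteq\bigcup_{k\le j}B_k$ is disjoint from $\overB_j$. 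Together these give $F_i\cap F_j=\emptyset$.

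I do not expect a serious obstacle, as the first assertion reduces to the already-proved Claim~\ref{claim:false_positives}; the only point requiring care there is to check that the inputs $(\overA_{i-1},\overB_{i-1})$ to the extraction are nested subsets of $(\inv{\rho_1}(1),\inv{\rho_2}(1))$, so that ``false positive of $r'$'' upgrades to ``false positive of $r$''. The most error-prone part is the disjointness bookkeeping, where one must identify, in each of the four cases, which coordinate forces the intersection to be empty and correctly invoke both the pairwise disjointness of the cores and the nesting $\overA_0\supseteq\overA_1\supseteq\cdots$ (and $\overB_0\supseteq\overB_1\supseteq\cdots$) of the residual sets.
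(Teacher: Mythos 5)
Your proposal is correct and follows essentially the same route as the paper's proof: the false-positive part is obtained by applying Claim~\ref{claim:false_positives} to the rectangle $\mathbb{1}_{\overA_{i-1}} \wedge \mathbb{1}_{\overB_{i-1}}$ (whose core extraction yields $(A_i,B_i)$) and then lifting via $\mathbb{1}_{\overA_{i-1}} \wedge \mathbb{1}_{\overB_{i-1}} \leq r$, while the disjointness part is the same bookkeeping on the nested residual sets, namely that for $j>i$ both $A_j$ and $\overA_j$ are disjoint from $A_i$ and both $B_j$ and $\overB_j$ are disjoint from $B_i$. Your case-by-case verification of the four cross-intersections is just a more explicit rendering of the paper's one-line argument.
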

\begin{claim}\label{claim:rec_rect_coverage}
The function $\bigvee_{i = 1}^l (\mathbb{1}_{A_i} \wedge \mathbb{1}_{B_i})$ is a disjoint rectangle cover of $r \wedge f$. Furthermore, if $r$ is balanced, so are the rectangles from $\bigvee_{i = 1}^l (\mathbb{1}_{A_i} \wedge \mathbb{1}_{B_i})$.
\end{claim}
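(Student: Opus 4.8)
The plan is to interpret the statement as an identity of model sets. Since $r=\rho_1\wedge\rho_2$ with $A_0=\inv{\rho_1}(1)$ and $B_0=\inv{\rho_2}(1)$, the models of $r$ are exactly $A_0\times B_0$, so the models of $r\wedge f$ are the true positives $(A_0\times B_0)\cap\inv{f}(1)$. I would thus establish
\[
\bigcup_{i=1}^{l}(A_i\times B_i)=(A_0\times B_0)\cap\inv{f}(1),
\]
and separately handle disjointness and balance. Both of the latter are immediate: each $\mathbb{1}_{A_i}\wedge\mathbb{1}_{B_i}$ is a rectangle for the very same partition $(X_1,X_2)$ as $r$ (as $A_i\subseteq A_0$ lives over $X_1$ and $B_i\subseteq B_0$ over $X_2$), so balance, which only depends on $\card{X_1}$ and $\card{X_2}$, is inherited directly from $r$; and since the $A_i$ are pairwise disjoint by construction (each $A_i$ is extracted from $\overA_{i-1}=A_0\setminus\bigcup_{j<i}A_j$), a common element of $A_i\times B_i$ and $A_j\times B_j$ with $i\ne j$ would force $\mathbf{a}\in A_i\cap A_j=\emptyset$, so the cover is disjoint.

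For the inclusion $\subseteq$, property b) of the core extraction operator guarantees that every assignment of $A_i\times B_i$ is a model of $f$, and $A_i\times B_i\subseteq A_0\times B_0$ are models of $r$; hence each $A_i\times B_i$ consists of true positives.

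The reverse inclusion $\supseteq$ is where the real content lies and is the step I expect to be the main obstacle; it is exactly where Claim~\ref{claim:rec_false_positives} is used. Fix a true positive $(\mathbf{a},\mathbf{b})$, so $\mathbf{a}\in A_0$, $\mathbf{b}\in B_0$ and $f(\mathbf{a},\mathbf{b})=1$. Because $A_0=\overA_l\sqcup\bigsqcup_{i=1}^{l}A_i$, the point $\mathbf{a}$ either enters a unique $A_p$ (``leaves at step $p$'') or stays in $\overA_l$, and likewise for $\mathbf{b}$. If both stay, then $(\mathbf{a},\mathbf{b})\in\overA_l\times\overB_l$; but the iteration halts precisely because $\mathcal{C}_f(\overA_l,\overB_l)=(\emptyset,\emptyset)$, which by the maximality in property d) means $f$ has no model there, a contradiction. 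Otherwise at least one leaves; assume without loss of generality that $\mathbf{a}$ leaves at a step $p$ no later than $\mathbf{b}$. If $\mathbf{b}$ leaves strictly after $p$ or never, then it has not been removed by step $p$, so $\mathbf{b}\in\overB_p$ (using the monotonicity $\overB_l\subseteq\overB_p$ for the ``never'' case), whence $(\mathbf{a},\mathbf{b})\in A_p\times\overB_p\subseteq F_p$ is a false positive by Claim~\ref{claim:rec_false_positives}, contradicting $f(\mathbf{a},\mathbf{b})=1$. Thus $\mathbf{b}$ must leave exactly at step $p$, i.e.\ $\mathbf{b}\in B_p$, giving $(\mathbf{a},\mathbf{b})\in A_p\times B_p$ and closing the cover. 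The only delicate point is the bookkeeping of the nested filtration $\overA_0\supseteq\overA_1\supseteq\cdots$ and its $B$-analogue: one must verify that ``leaving later or never'' genuinely places $\mathbf{b}$ in $\overB_p$, which is exactly the monotonicity built into the construction.
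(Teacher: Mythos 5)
Your proof is correct and takes essentially the same approach as the paper's: balance and disjointness come from the shared partition and the pairwise disjointness of the $A_i$, the inclusion $\subseteq$ from properties of $\mathcal{C}_f$, and the reverse inclusion by excluding the false-positive sets $F_i$ via Claim~\ref{claim:rec_false_positives} and excluding $\overA_l \times \overB_l$ via the maximality of the core extraction operator. Your explicit case analysis on when $\mathbf{a}$ and $\mathbf{b}$ ``leave'' the filtration merely spells out the bookkeeping that the paper compresses into the single assertion that a model of $r\wedge f$ outside all $A_i \times B_i$ must lie in $\overA_l \times \overB_l$.
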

\noindent
With Claim~\ref{claim:rec_false_positives} and Claim~\ref{claim:rec_rect_coverage}, we can now prove Lemma~\ref{lemma:discr}.
\begin{proof}[Proof of Lemma~\ref{lemma:discr}]
Claims~\ref{claim:rec_false_positives} and~\ref{claim:rec_rect_coverage} show that $\bigcup_{i=1}^l (A_i \times B_i)$ $=  \inv{r}(1) \cap \inv{f}(1)$ and $\bigcup_{i=1}^l \left((A_i \times \overB_i) \cup (\overA_i \times B_i)\right) \subseteq \inv{r}(1) \cap \inv{f}(0)$ and that these unions are disjoint. First we focus on the models of $f$ covered by $r$.
\begin{equation*}
\begin{aligned}
\card{\inv{r}(1) \cap \inv{f}(1)} &= \sum_{i = 1}^l \card{A_i}\card{B_i}
= \card{\inv{r_\text{core}}(1)} + \sum_{i = 2}^l \card{A_i}\card{B_i}
\end{aligned}
\end{equation*}
where $r_{\text{core}} = \mathbb{1}_{A_1} \wedge \mathbb{1}_{B_1}$ is the first (therefore the largest) core rectangle extracted from $r$ w.r.t.~$f$. Now focus on the false positives of $r$ on $f$ 
\begin{equation*}
\begin{aligned}
\card{\inv{r}(1) \cap \inv{f}(0)} 
&\geq \sum\nolimits_{i = 1}^l\left(\card{A_i}\card{\overB_i} + \card{\overA_i}\card{B_i}\right) \\
&\geq \sum\nolimits_{i = 1}^l\left(\card{A_i}\card{B_{i+1}} + \card{A_{i+1}}\card{B_i}\right)
\end{aligned}
\end{equation*}
The maximality property of $\mathcal{C}_f$ implies $\card{A_i}\card{B_i} \geq$ $ \card{A_{i+1}}\card{B_{i+1}}$, and it follows that $\card{A_i}\card{B_{i+1}} + \card{A_{i+1}}\card{B_i} \geq \card{A_{i+1}}\card{B_{i+1}}$. Thus
\begin{equation*}
\begin{aligned}
\card{\inv{r}(1) \cap \inv{f}(0)} \geq \card{\inv{r}(1) \cap \inv{f}(1)} - \card{\inv{r_{\text{core}}}(1)}.
\end{aligned}
\end{equation*}
By assumption, $r$ accepts more models of $f$ than false positives so $\Disc{f}{r} = (\card{\inv{r}(1) \cap \inv{f}(1)} - \card{\inv{r}(1) \cap \inv{f}(0)})/2^n$ and the lemma follows directly.
\end{proof}

\section{Conclusion}
\label{sec:conclusion}
In this paper, we have formalized and studied weak and strong approximation in knowledge compilation and shown functions that are hard to approximate by {\dDNNF}s with respect to these two notions. In particular, we have shown that strong approximations by {\dDNNF}s generally require exponentially bigger {\dDNNF} representations than weak approximations.

Let us sketch some directions for future research. One obvious question is to find for which classes of functions there \emph{are} efficient algorithms computing approximations by {\dDNNF}s. In~\cite{ChubarianT16}, it is shown that this is the case for certain Bayesian networks. It would be interesting to extend this to other settings to make approximation more applicable in knowledge compilation. Of particular interest are in our opinion settings in which models are typically learned from data and thus inherently inexact, e.g.~other forms of graphical models and neural networks.

Another question is defining and analyzing more approximation notions beyond weak and strong approximation. In fact, the latter was designed to allow approximate (weighted) counting as needed in probabilistic reasoning. Are there ways of defining notions of approximation that are useful for other problems, say optimization or entailment queries?

A more technical question is if one can show lower bounds for non-deterministic {\DNNF}s. In that setting, different rectangles may share the same false positives in which case our lower bound techniques break down. Are there approaches to avoid this problem?
\section*{Acknowledgments}
This work has been partly supported by the PING/ACK project of the French National Agency for Research (ANR-18-CE40-0011).

\bibliographystyle{named}
\bibliography{main}

\begin{thebibliography}{}

\bibitem[\protect\citeauthoryear{Ajtai}{2005}]{Ajtai05}
Mikl{\'{o}}s Ajtai.
\newblock {A Non-linear Time Lower Bound for Boolean Branching Programs}.
\newblock {\em Theory of Computing}, 1(1):149--176, 2005.

\bibitem[\protect\citeauthoryear{Beame and Liew}{2015}]{BeameL15}
Paul Beame and Vincent Liew.
\newblock {New Limits for Knowledge Compilation and Applications to Exact Model
  Counting}.
\newblock In {\em Conference on Uncertainty in Artificial Intelligence, {UAI}},
  pages 131--140, 2015.

\bibitem[\protect\citeauthoryear{Beame \bgroup \em et al.\egroup
  }{2017}]{Beame0RS17}
Paul Beame, Jerry Li, Sudeepa Roy, and Dan Suciu.
\newblock {Exact Model Counting of Query Expressions: Limitations of
  Propositional Methods}.
\newblock {\em {ACM} Trans. Database Syst.}, 42(1):1:1--1:46, 2017.

\bibitem[\protect\citeauthoryear{Biere \bgroup \em et al.\egroup
  }{2009}]{Handbook09}
Armin Biere, Marijn Heule, Hans van Maaren, and Toby Walsh, editors.
\newblock {\em {Handbook of Satisfiability}}, volume 185 of {\em Frontiers in
  Artificial Intelligence and Applications}. {IOS} Press, 2009.

\bibitem[\protect\citeauthoryear{Bollig \bgroup \em et al.\egroup
  }{2002}]{BolligSW02}
Beate Bollig, Martin Sauerhoff, and Ingo Wegener.
\newblock {On the Nonapproximability of Boolean Functions by OBDDs and
  Read-k-Times Branching Programs}.
\newblock {\em Inf. Comput.}, 178(1):263--278, 2002.

\bibitem[\protect\citeauthoryear{Bova \bgroup \em et al.\egroup
  }{2016}]{BovaCMS16}
Simone Bova, Florent Capelli, Stefan Mengel, and Friedrich Slivovsky.
\newblock {Knowledge \mbox{Compilation} Meets Communication Complexity}.
\newblock In {\em International Joint Conference on Artificial Intelligence,
  {IJCAI}}, pages 1008--1014, 2016.

\bibitem[\protect\citeauthoryear{Bryant}{1986}]{Bryant86}
Randal~E. Bryant.
\newblock {Graph-Based Algorithms for Boolean Function Manipulation}.
\newblock {\em {IEEE} Trans. Computers}, 35(8):677--691, 1986.

\bibitem[\protect\citeauthoryear{Capelli}{2017}]{Capelli17}
Florent Capelli.
\newblock {Understanding the complexity of {\#}SAT using knowledge
  compilation}.
\newblock In {\em {ACM/IEEE} Symposium on Logic in Computer Science, {LICS}},
  pages 1--10, 2017.

\bibitem[\protect\citeauthoryear{Chan and Darwiche}{2003}]{ChanD03}
Hei Chan and Adnan Darwiche.
\newblock {Reasoning about Bayesian Network Classifiers}.
\newblock In {\em Conference in Uncertainty in Artificial Intelligence, UAI},
  pages 107--115, 2003.

\bibitem[\protect\citeauthoryear{Choi \bgroup \em et al.\egroup
  }{2013}]{ChoiKD13}
Arthur Choi, Doga Kisa, and Adnan Darwiche.
\newblock {Compiling Probabilistic Graphical Models Using Sentential Decision
  Diagrams}.
\newblock In {\em Symbolic and Quantitative Approaches to Reasoning with
  Uncertainty, {ECSQARU}}, volume 7958, pages 121--132, 2013.

\bibitem[\protect\citeauthoryear{Chubarian and Tur\'{a}n}{2020}]{ChubarianT16}
Karine Chubarian and Gy\"{o}rgy Tur\'{a}n.
\newblock {Interpretability of Bayesian Network Classifiers: OBDD Approximation
  and Polynomial Threshold Functions}.
\newblock In {\em International Symposium on Artificial Intelligence and
  Mathematics, ISAIM}, 2020.

\bibitem[\protect\citeauthoryear{Darwiche and Marquis}{2002}]{DarwicheM02}
Adnan Darwiche and Pierre Marquis.
\newblock {A Knowledge Compilation Map}.
\newblock {\em J. Artif. Intell. Res.}, 17:229--264, 2002.

\bibitem[\protect\citeauthoryear{Darwiche}{2001}]{Darwiche01c}
Adnan Darwiche.
\newblock {On the Tractable Counting of Theory Models and its Application to
  Truth Maintenance and Belief Revision}.
\newblock {\em Journal of Applied Non-Classical Logics}, 11(1-2):11--34, 2001.

\bibitem[\protect\citeauthoryear{Darwiche}{2004}]{Darwiche04}
Adnan Darwiche.
\newblock {New Advances in Compiling CNF into Decomposable Negation Normal
  Form}.
\newblock In {\em European Conference on Artificial Intelligence, ECAI}, pages
  328--332, 2004.

\bibitem[\protect\citeauthoryear{Darwiche}{2011}]{Darwiche11}
Adnan Darwiche.
\newblock {SDD: A New Canonical Representation of Propositional Knowledge
  Bases}.
\newblock In {\em International Joint Conference on Artificial Intelligence,
  {IJCAI}}, pages 819--826, 2011.

\bibitem[\protect\citeauthoryear{Duris \bgroup \em et al.\egroup
  }{2004}]{Duris04}
Pavol Duris, Juraj Hromkovic, Stasys Jukna, Martin Sauerhoff, and Georg
  Schnitger.
\newblock {On multi-partition communication complexity}.
\newblock {\em Inf. Comput.}, 194(1):49--75, 2004.

\bibitem[\protect\citeauthoryear{Gopalan \bgroup \em et al.\egroup
  }{2011}]{GopalanKMSVV11}
Parikshit Gopalan, Adam~R. Klivans, Raghu Meka, Daniel Stefankovic, Santosh~S.
  Vempala, and Eric Vigoda.
\newblock {An FPTAS for {\#}Knapsack and Related Counting Problems}.
\newblock In {\em {IEEE} Symposium on Foundations of Computer Science, {FOCS}},
  pages 817--826, 2011.

\bibitem[\protect\citeauthoryear{Krause \bgroup \em et al.\egroup
  }{1999}]{KrauseSW99}
Matthias Krause, Petr Savick{\'{y}}, and Ingo Wegener.
\newblock {Approximations by OBDDs and the Variable Ordering Problem}.
\newblock In {\em International Colloquium Automata, Languages and Programming,
  ICALP}, pages 493--502, 1999.

\bibitem[\protect\citeauthoryear{Kushilevitz and Nisan}{1997}]{KushilevitzN97}
Eyal Kushilevitz and Noam Nisan.
\newblock {\em Communication complexity}.
\newblock Cambridge University Press, 1997.

\bibitem[\protect\citeauthoryear{Lagniez and Marquis}{2017}]{LagniezM17}
Jean{-}Marie Lagniez and Pierre Marquis.
\newblock {An Improved Decision-DNNF Compiler}.
\newblock In {\em International Joint Conference on Artificial Intelligence,
  {IJCAI}}, pages 667--673, 2017.

\bibitem[\protect\citeauthoryear{Mengel}{2016}]{Mengel16}
Stefan Mengel.
\newblock {Parameterized Compilation Lower Bounds for Restricted CNF-Formulas}.
\newblock In {\em International Conference Theory and Applications of
  Satisfiability Testing, {SAT}}, pages 3--12, 2016.

\bibitem[\protect\citeauthoryear{Muise \bgroup \em et al.\egroup
  }{2012}]{MuiseMBH12}
Christian~J. Muise, Sheila~A. McIlraith, J.~Christopher Beck, and Eric~I. Hsu.
\newblock {Dsharp: Fast d-DNNF Compilation with sharpSAT}.
\newblock In {\em Canadian Conference on Artificial Intelligence}, pages
  356--361, 2012.

\bibitem[\protect\citeauthoryear{Oztok and Darwiche}{2015}]{OztokD15}
Umut Oztok and Adnan Darwiche.
\newblock {A Top-Down Compiler for Sentential Decision Diagrams}.
\newblock In {\em International Joint Conference on Artificial Intelligence,
  {IJCAI}}, pages 3141--3148, 2015.

\bibitem[\protect\citeauthoryear{Pipatsrisawat and
  Darwiche}{2010}]{PipatsrisawatD10}
Thammanit Pipatsrisawat and Adnan Darwiche.
\newblock {A Lower Bound on the Size of Decomposable Negation Normal Form}.
\newblock In {\em {AAAI} Conference on Artificial Intelligence, {AAAI}}, pages
  345--350, 2010.

\bibitem[\protect\citeauthoryear{Selman and Kautz}{1996}]{SelmanK96}
Bart Selman and Henry~A. Kautz.
\newblock {Knowledge Compilation and Theory Approximation}.
\newblock {\em J. {ACM}}, 43(2):193--224, 1996.

\bibitem[\protect\citeauthoryear{Shih \bgroup \em et al.\egroup
  }{2019}]{Shih19}
Andy Shih, Arthur Choi, and Adnan Darwiche.
\newblock {Compiling Bayesian Network Classifiers into Decision Graphs}.
\newblock In {\em {AAAI} Conference on Artificial Intelligence, {AAAI}}, pages
  7966--7974, 2019.

\bibitem[\protect\citeauthoryear{Takenaga \bgroup \em et al.\egroup
  }{1997}]{TakenagaNY97}
Yasuhiko Takenaga, Mitsushi Nouzoe, and Shuzo Yajima.
\newblock {Size and Variable Ordering of OBDDs Representing Treshold
  Functions}.
\newblock In {\em International Conference on Computing and Combinatorics,
  {COCOON}}, pages 91--100, 1997.

\end{thebibliography}

\section*{Appendices}
\subsection*{Missing Proofs of Section~\ref{sec:main_result}}
\label{app:codes}

Proofs for Lemmas~\ref{lemma:good_matrices} and~\ref{lemma:size_perfect_rectangle} can be found in~\cite{Duris04}. For the reader convenience they are included in this appendix almost as is. 

\setcounter{lemma}{3}

\begin{lemma}\textup{\cite{Duris04}}\label{lemma:good_matrices_appendix}
Let $m = n/100$ and sample a parity check matrix $H$ uniformly at random from $\mathbb{F}^{m \times n}_2$. Then $H$ is $(m-1)$-good with probability $1 - 2^{-\Omega(n)}$.
\begin{proof}
Let $\mathbf{c}_1$, $\dots$, $\mathbf{c}_n$ be the column vectors of $H$. Sampling $H$ uniformly at random from $\mathbb{F}^{m \times n}_2$ is equivalent to sampling the $\mathbf{c}_i$s independently and uniformly at random from $\mathbb{F}^m_2$. 
\par
By definition, $H$ is $(m-1)$-good if and only if any submatrix made of $\geq n/3$ columns from $H$ spans a space of dimension $\geq m-1$. We claim that it is the case if and only if, for any two distinct $\mathbf{x}$, $\mathbf{y}$ from $\mathbb{F}^m_2 \setminus \{\mathbf{0}^m\}$, and any subset $I \subseteq [n]$ of size $n/3$, it holds that $\mathbf{x}^\top\mathbf{c}_i \neq 0$ or $\mathbf{y}^\top\mathbf{c}_i \neq 0$ for some $i \in I$.
\par
To see this, first note $H_I$ the submatrix which columns are $(\mathbf{c}_i)_{i \in I}$. Then $H$ is $(m-1)$-good if and only if the dimension of $\inv{H_I}(\mathbf{0}^n)$ is at most $1$ for all $I \in [n]$ of size $n/3$. That is, for all such $I$, there must not be two different $\mathbf{x}$ and $\mathbf{y}$ distinct from $\mathbf{0}^m$ such that $H_I\mathbf{x} = H_I\mathbf{y} = \mathbf{0}^n$, or equivalently such that $\mathbf{x}^\top H_I^\top = \mathbf{y}^\top H_I^\top= \mathbf{0}^m$. Thus for some $i \in I$ there must be $\mathbf{x}^\top\mathbf{c}_i \neq 0$ or $\mathbf{y}^\top\mathbf{c}_i \neq 0$.

Fix any two distinct $\mathbf{x}$ and $\mathbf{y}$ different from $\mathbf{0}^m$. Let $Z_i$ be the random variable taking value $1$ if $\mathbf{c}_i$ is sampled such that $\mathbf{x}^\top\mathbf{c}_i = \mathbf{y}^\top\mathbf{c}_i = 0$, and $0$ otherwise. Note $Z = \sum_{i =1 }^n Z_i$. There is $\Prb{Z_i = 1} = 1/4$ and $\Ex{Z} = n/4$ by linearity of expectation. 
If the sampled matrix $H$ renders $Z \geq n/3$ true, then $H$ is not $(m-1)$-good from the previous claim. Using Hoeffding bounds we obtain
\begin{equation*}
\begin{aligned}
\Prb{Z \geq n/3} &= \Prb{Z - \Ex{Z} \geq n/12}
\\
&\leq \exp(-2(n/12)^2/n) = \exp(-n/72).
\end{aligned}
\end{equation*}
$H$ is $(m-1)$-good if and only if $Z < n/3$ holds for any $\mathbf{x}$ and $\mathbf{y}$. There are $\binom{2^m-1}{2} \leq 2^{2m}$ choices for these two vectors so by union bound $H$ is $(m-1)$-good with probability at least $1 - 2^{2m}e^{-n/72} = 1 - 2^{n/50}e^{-n/72} = 1 - 2^{-\Omega(n)}$.
\end{proof}
\end{lemma}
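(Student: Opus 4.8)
The plan is to translate the combinatorial condition of being $(m-1)$-good into a statement about the random column vectors of $H$ and then to combine a concentration bound with a union bound. Writing $\mathbf{c}_1, \dots, \mathbf{c}_n \in \mathbb{F}_2^m$ for the columns of $H$, sampling $H$ uniformly is the same as drawing the $\mathbf{c}_i$ independently and uniformly. First I would observe that it suffices to control submatrices with \emph{exactly} $n/3$ columns: since deleting a column can only lower the rank, if every submatrix $H_I$ with $\card{I} = n/3$ has rank at least $m-1$, then so does every larger submatrix (delete columns down to a size-$n/3$ subset). Next I would invoke the rank--kernel duality over $\mathbb{F}_2$: for $H_I$ the submatrix with columns indexed by $I$, one has $\rk{H_I} = m - \dim W_I$, where $W_I = \{\mathbf{x} \in \mathbb{F}_2^m : \mathbf{x}^\top \mathbf{c}_i = 0 \text{ for all } i \in I\}$ is the left kernel. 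Hence $\rk{H_I} \geq m-1$ fails exactly when $\dim W_I \geq 2$, i.e.\ when $W_I$ contains two distinct nonzero vectors $\mathbf{x}, \mathbf{y}$ (over $\mathbb{F}_2$ two distinct nonzero vectors are automatically linearly independent).

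This reduces the task to bounding, for each fixed pair of distinct nonzero $\mathbf{x}, \mathbf{y} \in \mathbb{F}_2^m$, the probability that at least $n/3$ columns are simultaneously orthogonal to both. I would set $Z_i$ to be the indicator that $\mathbf{x}^\top \mathbf{c}_i = \mathbf{y}^\top \mathbf{c}_i = 0$ and $Z = \sum_{i=1}^n Z_i$. Because $\mathbf{x}$ and $\mathbf{y}$ are linearly independent, these are two independent linear constraints on the uniform vector $\mathbf{c}_i$, so $\Prb{Z_i = 1} = 1/4$ and, using independence of the columns, $\Ex{Z} = n/4$. The pair $(\mathbf{x}, \mathbf{y})$ witnesses a violation of $(m-1)$-goodness precisely when $Z \geq n/3$. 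A Hoeffding bound on the sum of $n$ independent $\{0,1\}$ variables then gives $\Prb{Z \geq n/3} = \Prb{Z - \Ex{Z} \geq n/12} \leq \exp(-2(n/12)^2/n) = \exp(-n/72)$.

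Finally I would take a union bound over all $\binom{2^m - 1}{2} \leq 2^{2m}$ candidate pairs $\{\mathbf{x}, \mathbf{y}\}$, concluding that $H$ fails to be $(m-1)$-good with probability at most $2^{2m}\exp(-n/72)$. Substituting $m = n/100$ turns this into $2^{n/50}\exp(-n/72) = 2^{-\Omega(n)}$, which is the claim.

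The step I expect to be most delicate is the very last one: the constants must be chosen so that the decay $\exp(-n/72)$ from concentration strictly dominates the growth $2^{2m} = 2^{n/50}$ of the union bound. This is a razor-thin margin — it rests on $72\ln 2 > 50$, giving exponent $\tfrac{1}{50} - \tfrac{1}{72\ln 2} < 0$ — and it is exactly what forces $m$ to be taken as small as $n/100$ relative to the $n/3$ column threshold; loosening either constant would break the bound. Everything upstream, namely the reduction to size-$n/3$ submatrices, the rank--kernel duality, and the exact value $1/4$ coming from the independence of the two linear conditions, is routine by comparison, though getting that independence (and hence the correct mean $n/4$, which is what produces the Hoeffding gap of $n/12$) right is the one place where a careless argument could fail.
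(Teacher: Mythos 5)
Your proof is correct and follows essentially the same route as the paper's: reduce $(m-1)$-goodness to the absence of two distinct nonzero vectors in the left kernel of every size-$n/3$ column submatrix, apply Hoeffding to the count $Z$ of columns orthogonal to both vectors of a fixed pair (with $\Prb{Z_i=1}=1/4$, deviation $n/12$, bound $e^{-n/72}$), and union-bound over $\binom{2^m-1}{2}\leq 2^{2m}$ pairs. One sign slip in your closing remark: the margin rests on $72\ln 2 < 50$ (not $>$), which is what makes $2^{n/50}e^{-n/72} = e^{-n\left(1/72 - \ln 2/50\right)}$ exponentially small and is equivalent to your (correctly stated) inequality $\frac{1}{50}-\frac{1}{72\ln 2}<0$.
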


\begin{lemma}\textup{\cite{Duris04}}\label{lemma:size_perfect_rectangle_appendix}
Let $f$ be the characteristic function of a linear code of length $n$ characterized by the $s$-good matrix $H$. Let $r$ be a balanced rectangle such that $r \leq$~$f$. Then $\card{\inv{r}(1)} \leq 2^{n-2s}$.
\begin{proof}
Let $X =  \{x_1, \dots, x_n\}$ be the variables of $f$ and $(X_1, X_2)$ be the partition of $X$ for $r$ and note $r = \rho_1 \wedge \rho_2$. Index the columns of $H$ with the Boolean variables ($x_1$ for column 1, $x_2$ for column 2, and so on) and denote $H_1$ and $H_2$ the submatrices obtained keeping the columns indexed in $X_1$ and $X_2$ respectively. $H$ is $s$-good and the partition $(X_1, X_2)$ is balanced so $\rk{H_1} \geq s$ and $\rk{H_2} \geq s$.\par 
Take $\mathbf{x}$ a model of $r$ and denote $\mathbf{x}_1$ and $\mathbf{x}_2$ its restriction to $X_1$ and $X_2$ respectively. $\mathbf{x}_1$ and $\mathbf{x}_2$ are models of $\rho_1$ and $\rho_2$ respectively. Since $r \leq f$, $\mathbf{x}$ is a model of $f$ and it holds that $H\mathbf{x} = H_1\mathbf{x}_1 + H_2\mathbf{x}_2 = \mathbf{0}^m$. 
Note $\mathbf{w} = H_2\mathbf{x}_2$, then any $\mathbf{x}'_1$ model of $\rho_1$ has to satisfy $H_1 \mathbf{x}'_1 = \mathbf{w}$ (otherwise $r \nleq f$). Consequently there are $2^{\card{X_1} - \rk{H_1}} \leq 2^{\card{X_1} - s} $ models of $\rho_1$. Symmetrically $\rho_2$ has $\leq 2^{\card{X_2} - s}$ models. So $r$ has fewer than $2^{\card{X_1} + \card{X_2} - 2s} = 2^{n-2s}$ models.
\end{proof}
\end{lemma}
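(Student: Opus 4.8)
The plan is to exploit that $\inv{f}(1)$ is exactly the kernel $\{\mathbf{x} : H\mathbf{x} = \mathbf{0}^m\}$ of $H$ and that the rectangle structure of $r$ makes its models factor across the partition. Write $X = \{x_1, \dots, x_n\}$, index the columns of $H$ by the variables, and let $(X_1, X_2)$ be the partition of $r$, so $r = \rho_1 \wedge \rho_2$ with $\rho_1$ over $X_1$ and $\rho_2$ over $X_2$. Let $H_1$ and $H_2$ collect the columns of $H$ indexed by $X_1$ and $X_2$ respectively, so that $H\mathbf{x} = H_1\mathbf{x}_1 + H_2\mathbf{x}_2$ when $\mathbf{x}_1, \mathbf{x}_2$ are the restrictions of $\mathbf{x}$ to $X_1, X_2$. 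The first step is to record that, because $r$ is balanced, we have $\card{X_1} \geq n/3$ and $\card{X_2} \geq n/3$; since $H$ is $s$-good, this yields $\rk{H_1} \geq s$ and $\rk{H_2} \geq s$.

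The heart of the argument is the following consequence of the entailment $r \leq f$. Every model of $r$ is a pair $(\mathbf{x}_1, \mathbf{x}_2)$ with $\mathbf{x}_1 \in \inv{\rho_1}(1)$ and $\mathbf{x}_2 \in \inv{\rho_2}(1)$, and since such a pair is a model of $f$ it satisfies $H_1\mathbf{x}_1 + H_2\mathbf{x}_2 = \mathbf{0}^m$, i.e.\ $H_1\mathbf{x}_1 = H_2\mathbf{x}_2$ in $\mathbb{F}_2$. Because the rectangle accepts \emph{every} cross-pairing of a model of $\rho_1$ with a model of $\rho_2$, this identity must hold for all such pairs simultaneously. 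Fixing one model $\mathbf{x}_2^\star$ of $\rho_2$ and setting $\mathbf{w} := H_2\mathbf{x}_2^\star$, I would conclude that $H_1\mathbf{x}_1 = \mathbf{w}$ for \emph{every} $\mathbf{x}_1 \in \inv{\rho_1}(1)$; symmetrically $H_2\mathbf{x}_2 = \mathbf{w}$ for every $\mathbf{x}_2 \in \inv{\rho_2}(1)$. In other words, all models of $\rho_1$ sit inside the single affine fiber $\{\mathbf{x}_1 : H_1\mathbf{x}_1 = \mathbf{w}\}$, and likewise for $\rho_2$.

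It then remains to count. A nonempty affine fiber of $H_1$ is a coset of $\ker H_1$ and hence has cardinality exactly $2^{\card{X_1} - \rk{H_1}}$, so $\card{\inv{\rho_1}(1)} \leq 2^{\card{X_1} - s}$, and symmetrically $\card{\inv{\rho_2}(1)} \leq 2^{\card{X_2} - s}$. Since the models of $r$ are exactly the pairs in $\inv{\rho_1}(1) \times \inv{\rho_2}(1)$, multiplying yields
\[
\card{\inv{r}(1)} = \card{\inv{\rho_1}(1)} \cdot \card{\inv{\rho_2}(1)} \leq 2^{\card{X_1} - s} \cdot 2^{\card{X_2} - s} = 2^{n - 2s},
\]
as desired.

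The step I expect to be the crux is the ``constant-fiber'' observation of the second paragraph: it is precisely here that decomposability (the product structure of the rectangle) is converted into an algebraic rigidity statement, namely that $H_1$ must send \emph{all} models of $\rho_1$ to one and the same vector $\mathbf{w}$. Everything before it is bookkeeping about ranks, and everything after is a one-line counting argument; the balancedness hypothesis enters only to guarantee that both column blocks are large enough to trigger $s$-goodness.
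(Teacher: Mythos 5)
Your proof is correct and takes essentially the same approach as the paper's: balancedness plus $s$-goodness yield $\rk{H_1} \geq s$ and $\rk{H_2} \geq s$, the entailment $r \leq f$ together with the cross-pairing structure of the rectangle forces all models of $\rho_1$ (resp.\ $\rho_2$) into a single affine fiber of $H_1$ (resp.\ $H_2$), and multiplying the fiber bounds gives $\card{\inv{r}(1)} \leq 2^{\card{X_1}-s} \cdot 2^{\card{X_2}-s} = 2^{n-2s}$. The only (harmless) difference is presentational: you make explicit the coset-counting and the implicit assumption that $r$ has a model, which the paper also makes when it fixes a model $\mathbf{x}$ of $r$ (the empty case being trivial).
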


\setcounter{theorem}{2}

\begin{theorem}\label{theorem:large_dDNNF_linear_code_appendix}
There exists a class of linear codes $\mathcal{C}$ such that, for any code from $\mathcal{C}$ of length $n$, any {\dDNNF} encoding its characteristic function has size $2^{\Omega(n)}$.
\begin{proof}
Let $m = n/100$. Lemma~\ref{lemma:good_matrices_appendix} ensures the existence of $(m-1)$-good matrices in $\mathbb{F}^{m \times n}_2$ for $n$ large enough. Let $\mathcal{C}$ be the class of linear codes characterized by these matrices. Choose a code in $\mathcal{C}$ for the $(m-1)$-good matrix $H$ and denote $f$ its characteristic function, it has $2^{n - \rk{H}} \geq 2^{n - m}$ models. Let $\bigvee_{r \in R} r$ be a disjoint balanced rectangle cover of $f$. It holds that $\card{\inv{f}(1)} = \sum_{r \in R} \card{\inv{r}(1)}$ and we know from Lemma~\ref{lemma:size_perfect_rectangle_appendix} that any $r$ has $\leq 2^{n - 2s} = 4 \times 2^{n - 2m}$ models so $\card{\inv{f}(1)} \leq 4\card{R} \times 2^{n-2m}$. Using the lower bound on the number of models of $f$ we obtain $\card{R} \geq \frac{1}{4}2^{m}$. Applying Theorem~\ref{theorem:DNNF_size_rect_cover_size} finishes the proof.
\end{proof}
\end{theorem}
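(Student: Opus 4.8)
The plan is to invoke Theorem~\ref{theorem:DNNF_size_rect_cover_size} and thereby reduce the {\dDNNF} lower bound to a lower bound on the number of rectangles in any balanced disjoint rectangle cover of the characteristic function. First I would fix the class $\mathcal{C}$: using Lemma~\ref{lemma:good_matrices} with $m = n/100$, for every large enough $n$ there exists an $(m-1)$-good check matrix $H \in \mathbb{F}^{m \times n}_2$, and I take $\mathcal{C}$ to consist of the characteristic functions $f$ of the corresponding linear codes. Since $\rk{H} \leq m$, such an $f$ has $\card{\inv{f}(1)} = 2^{n - \rk{H}} \geq 2^{n-m}$ models, which is the quantity that any cover must account for.

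Next I would exploit the structure of a balanced disjoint rectangle cover $f = \bigvee_{r \in R} r$. The key observation is that, because the cover is \emph{equivalent} to $f$, every rectangle $r$ in it satisfies $r \leq f$, i.e.~each model of $r$ is a model of $f$. This is exactly the hypothesis of Lemma~\ref{lemma:size_perfect_rectangle} with $s = m-1$, so every balanced rectangle in the cover has at most $2^{n - 2(m-1)} = 4 \cdot 2^{n - 2m}$ models.

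Then I would combine disjointness with these two bounds. Disjointness gives $\card{\inv{f}(1)} = \sum_{r \in R} \card{\inv{r}(1)}$, whence $2^{n-m} \leq \card{\inv{f}(1)} \leq \card{R} \cdot 4 \cdot 2^{n-2m}$, which rearranges to $\card{R} \geq \tfrac{1}{4} 2^{m} = \tfrac{1}{4} 2^{n/100} = 2^{\Omega(n)}$. Applying Theorem~\ref{theorem:DNNF_size_rect_cover_size} (any {\dDNNF} for $f$ is at least as large as its smallest balanced disjoint rectangle cover) then yields the claimed $2^{\Omega(n)}$ {\dDNNF} size.

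I expect the only genuine subtlety to reside in the two imported lemmas rather than in the assembly above: Lemma~\ref{lemma:good_matrices} rests on a probabilistic existence argument (a Hoeffding bound on the number of columns simultaneously annihilated by a fixed pair of nonzero dual vectors, followed by a union bound over all such pairs), and Lemma~\ref{lemma:size_perfect_rectangle} rests on a rank argument showing that a balanced partition of an $s$-good matrix forces both halves to have rank at least $s$, capping the number of models of any entailing rectangle. Taking these as black boxes, the theorem itself is a short counting argument; the one point I would state carefully is that every rectangle of a cover equivalent to $f$ entails $f$, since this is precisely what licenses the use of Lemma~\ref{lemma:size_perfect_rectangle}.
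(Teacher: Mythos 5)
Your proposal is correct and follows essentially the same argument as the paper: fix the class via Lemma~\ref{lemma:good_matrices} with $m = n/100$, bound each rectangle's model count by $4 \cdot 2^{n-2m}$ via Lemma~\ref{lemma:size_perfect_rectangle} with $s = m-1$, use disjointness to get $\card{R} \geq \frac{1}{4}2^m$, and conclude with Theorem~\ref{theorem:DNNF_size_rect_cover_size}. Your explicit remark that every rectangle of a cover equivalent to $f$ satisfies $r \leq f$ (which licenses the lemma) is a point the paper leaves implicit, but it is the same proof.
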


\setcounter{claim}{0}
\begin{claim}
Let $r = \rho_1 \wedge \rho_2$ be a rectangle w.r.t.~the partition $(X_1, X_2)$ and denote $(A,B) = \mathcal{C}_f(\inv{\rho_1}(1), \inv{\rho_2}(1))$. Then the rectangle $\mathbb{1}_A \wedge \mathbb{1}_B$ is a core rectangle of $r$ w.r.t.~$f$.
\end{claim}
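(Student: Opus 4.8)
The plan is to verify the two defining conditions of a core rectangle (Definition~\ref{definition:core_rectangle}) directly from the four conditions imposed on the core extraction operator (Definition~\ref{definition:core_extraction_operator}). Writing $(A,B) = \mathcal{C}_f(\inv{\rho_1}(1), \inv{\rho_2}(1))$, I would first observe that $\mathbb{1}_A \wedge \mathbb{1}_B$ is indeed a rectangle over the partition $(X_1,X_2)$, since $A$ is a set of assignments over $X_1$ and $B$ over $X_2$; its model set is exactly $A \times B$, so that $\card{\inv{(\mathbb{1}_A \wedge \mathbb{1}_B)}(1)} = \card{A}\,\card{B}$. This identification of the model count with a product is what lets the maximality of $\mathcal{C}_f$ (stated in terms of products $\card{S_1'}\card{S_2'}$) translate into maximality of a rectangle (stated in terms of $\card{\inv{r_{\text{core}}}(1)}$).

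For condition a), I would use properties a) and b) of $\mathcal{C}_f$. Property b) says every assignment in $A \times B$ is a model of $f$, which immediately gives $\mathbb{1}_A \wedge \mathbb{1}_B \leq f$. Property a) gives $A \subseteq \inv{\rho_1}(1)$ and $B \subseteq \inv{\rho_2}(1)$, so any $(\mathbf{x}_1,\mathbf{x}_2) \in A \times B$ satisfies $\rho_1(\mathbf{x}_1) = \rho_2(\mathbf{x}_2) = 1$ and is therefore a model of $r = \rho_1 \wedge \rho_2$; hence $\mathbb{1}_A \wedge \mathbb{1}_B \leq r$.

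The substance of the claim is condition b), the maximality, and this is the step I expect to demand the most care. I would take an arbitrary competitor: a rectangle $r' = \rho_1' \wedge \rho_2'$ over the same partition $(X_1,X_2)$ with $r' \leq f$ and $r' \leq r$, and set $S_1'' = \inv{\rho_1'}(1)$, $S_2'' = \inv{\rho_2'}(1)$, so that $\card{\inv{r'}(1)} = \card{S_1''}\,\card{S_2''}$. If either projection is empty the inequality $\card{\inv{r'}(1)} = 0 \leq \card{A}\,\card{B}$ is trivial, so I would assume both are nonempty. The crucial translation is that $r' \leq r$ forces $S_1'' \subseteq \inv{\rho_1}(1)$ and $S_2'' \subseteq \inv{\rho_2}(1)$: fixing any $\mathbf{x}_2 \in S_2''$, each $\mathbf{x}_1 \in S_1''$ gives a model $(\mathbf{x}_1,\mathbf{x}_2)$ of $r'$, hence of $r$, forcing $\rho_1(\mathbf{x}_1)=1$; the argument for $S_2''$ is symmetric. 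Likewise $r' \leq f$ makes all of $S_1'' \times S_2''$ models of $f$. Thus $(S_1'',S_2'')$ respects conditions a) and b) of $\mathcal{C}_f$ relative to $(\inv{\rho_1}(1),\inv{\rho_2}(1))$, while condition c) is vacuous, because the nonemptiness of $S_1'' \times S_2''$ together with b) witnesses that $f$ does have a model in $\inv{\rho_1}(1) \times \inv{\rho_2}(1)$. Invoking the maximality property d) of $\mathcal{C}_f$ then yields $\card{S_1''}\,\card{S_2''} \leq \card{A}\,\card{B}$, i.e. $\card{\inv{r'}(1)} \leq \card{\inv{(\mathbb{1}_A \wedge \mathbb{1}_B)}(1)}$, which is precisely the required maximality. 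The points that genuinely need attention are the empty-projection cases and the implication from ``models of $r'$ are models of $r$'' to ``the projections of $r'$ lie inside those of $r$''; this last step is not formal nonsense but really relies on a rectangle's models forming a product set, so that a single fixed $\mathbf{x}_2$ can test membership of all of $S_1''$ at once.
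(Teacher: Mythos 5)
Your proof is correct and follows essentially the same route as the paper's: condition a) of Definition~\ref{definition:core_rectangle} from properties a) and b) of $\mathcal{C}_f$, and maximality from property d). The only difference is that you make explicit (via the nonemptiness/product-set argument) the step the paper leaves implicit, namely that any competitor rectangle $r' \leq r$ over the same partition has model set $S_1'' \times S_2''$ with $S_1'' \subseteq \inv{\rho_1}(1)$ and $S_2'' \subseteq \inv{\rho_2}(1)$, which is a worthwhile clarification but not a different proof.
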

\begin{proof}
The rectangle $r_0 = \mathbb{1}_A \wedge \mathbb{1}_B$ is defined w.r.t.~the same partition as $r$. We know justify that it is core rectangle for $f$, as defined in Definition~\ref{definition:core_rectangle}:
\begin{enumerate}
\item[$a)$] $A \subseteq \inv{\rho_1}(1)$ and $B \subseteq \inv{\rho_2}(1)$ so $r_0 \leq r$ and all assignments from $A \times B$ are models of $f$ so $r_0 \leq f$. 
\item[$b)$] Assume $r_0$ is not maximal, that is, there exist $A' \subseteq \inv{\rho_1}(1)$ and $B' \subseteq \inv{\rho_2}(1)$ such that $r' = \mathbb{1}_{A'} \wedge \mathbb{1}_{B'} \leq f$ and $\card{\inv{r'}(1)} > \card{\inv{r_0}(1)}$. Then $\card{A'}\card{B'} > \card{A}\card{B}$, which contradicts the properties of $\mathcal{C}_f$.\qedhere
\end{enumerate}
\end{proof}

\setcounter{claim}{2}

\begin{claim}
For any $i > 0$, all assignments from $F_i= (A_i \times \overB_i) \cup (\overA_i \times B_i)$ are false positives of $r$ on $f$. Furthermore for every $i\ne j$ we have $F_i \cap F_j = \emptyset$.
\begin{proof}
For the first part, it is clear from Claim~\ref{claim:false_positives} that assignments from $A_i \times \overB_i$ and $\overA_i \times B_i$ are false positives of $\mathbb{1}_{\overA_{i-1}} \wedge \mathbb{1}_{\overB_{i-1}}$ on $f$, and since $\mathbb{1}_{\overA_{i-1}} \wedge \mathbb{1}_{\overB_{i-1}} \leq r$, they are indeed false positives of $r$ on $f$. For the second part, let $j > i > 0$, 
$F_i=(A_i \times \overB_i) \cup (\overA_i \times B_i)$ and $F_j= (A_j \times \overB_j) \cup (\overA_j \times B_j)$ are disjoint because both $A_j$ and $\overA_j$ are disjoint from  $A_i$ and both $B_j$ and $\overB_j$ are disjoint from  $B_i$.
\end{proof}
\end{claim}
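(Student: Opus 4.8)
The plan is to reduce both halves of the claim to Claim~\ref{claim:false_positives}, applied not to $r$ itself but to the sub-rectangle that is fed to the core extraction operator at step $i$ of the iterative construction. The observation to set up first is that, by definition, $(A_i,B_i) = \mathcal{C}_f(\overA_{i-1},\overB_{i-1})$, so $(A_i,B_i)$ is exactly the core that Claim~\ref{claim:core} extracts from the rectangle $\mathbb{1}_{\overA_{i-1}} \wedge \mathbb{1}_{\overB_{i-1}}$, and moreover $\overA_i = \overA_{i-1}\setminus A_i$ and $\overB_i = \overB_{i-1}\setminus B_i$.

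For the first part, I would simply invoke Claim~\ref{claim:false_positives} on the rectangle $\mathbb{1}_{\overA_{i-1}} \wedge \mathbb{1}_{\overB_{i-1}}$: with the identifications above, it states that every assignment of $\overA_i \times B_i$ and of $A_i \times \overB_i$ is a false positive of $\mathbb{1}_{\overA_{i-1}} \wedge \mathbb{1}_{\overB_{i-1}}$ on $f$. It then remains to transfer this conclusion to $r$. For that I would note $\overA_{i-1} \subseteq A_0 = \inv{\rho_1}(1)$ and $\overB_{i-1} \subseteq B_0 = \inv{\rho_2}(1)$, so $F_i \subseteq \overA_{i-1}\times\overB_{i-1} \subseteq \inv{r}(1)$; since the assignments of $F_i$ are not models of $f$, they are false positives of $r$ as well.

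For the disjointness, I would read off the construction that the sets $A_1,A_2,\dots$ are pairwise disjoint, since each $A_i$ lies in $\overA_{i-1} = A_0\setminus\bigcup_{j=1}^{i-1}A_j$, and that each $\overA_i$ has $A_1,\dots,A_i$ removed from $A_0$; symmetrically on the $B$-side. Hence for $j>i$ the set $A_i$ is disjoint from both $A_j$ and $\overA_j$, and $B_i$ is disjoint from both $B_j$ and $\overB_j$. Expanding $F_i\cap F_j$ into its four products $(A_i\times\overB_i)\cap(A_j\times\overB_j)$, $(A_i\times\overB_i)\cap(\overA_j\times B_j)$, $(\overA_i\times B_i)\cap(A_j\times\overB_j)$ and $(\overA_i\times B_i)\cap(\overA_j\times B_j)$, I would check that the first two are empty on their first coordinate (disjoint $A$-sets) and the last two are empty on their second coordinate (disjoint $B$-sets), so $F_i\cap F_j=\emptyset$.

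I do not expect a genuine mathematical obstacle: once Claim~\ref{claim:false_positives} is available, the content is light and the real work is bookkeeping with the nested chain $\overA_0\supseteq\overA_1\supseteq\cdots$ (and its $B$-counterpart) and applying the core extraction operator to the correct sub-rectangle at each step. The one place to be careful is the indexing convention $(A_i,B_i)=\mathcal{C}_f(\overA_{i-1},\overB_{i-1})$ with $\overA_0=A_0$ and $\overB_0=B_0$: an off-by-one here would misalign which sub-rectangle Claim~\ref{claim:false_positives} is invoked on and break the reduction.
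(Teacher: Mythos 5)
Your proposal is correct and takes essentially the same route as the paper's proof: the first part is obtained by applying Claim~\ref{claim:false_positives} to the rectangle $\mathbb{1}_{\overA_{i-1}} \wedge \mathbb{1}_{\overB_{i-1}}$ (whose core extraction yields $(A_i,B_i)$) and then transferring the false positives to $r$ via the containment $\mathbb{1}_{\overA_{i-1}} \wedge \mathbb{1}_{\overB_{i-1}} \leq r$, exactly as in the paper. Your disjointness argument is also the paper's, just spelled out: the paper states that $A_j$ and $\overA_j$ are disjoint from $A_i$ and that $B_j$ and $\overB_j$ are disjoint from $B_i$ for $j>i$, and your four-case expansion of $F_i \cap F_j$ (using the $A$-coordinate for two cases and the $B$-coordinate for the other two) is the explicit bookkeeping behind that one-line conclusion.
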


\begin{claim}
The function $\bigvee_{i = 1}^l (\mathbb{1}_{A_i} \wedge \mathbb{1}_{B_i})$ is a disjoint rectangle cover of $r \wedge f$. Furthermore, if $r$ is balanced, so are the rectangles from $\bigvee_{i = 1}^l (\mathbb{1}_{A_i} \wedge \mathbb{1}_{B_i})$.
\begin{proof}
By construction, the functions $(\mathbb{1}_{A_i} \wedge \mathbb{1}_{B_i})$ are rectangles with respect to the same partition as $r$. So if $r$ is balanced, so are these rectangles.
\par
For all $i$ there is $(A_i \times B_i) \subseteq \inv{r}(1)$, so $\bigvee_{i = 1}^l (\mathbb{1}_{A_i} \wedge \mathbb{1}_{B_i}) \leq r$. And by definition of $\mathcal{C}_f$, assignments from $A_i \times B_i$ are models of $f$, so $\bigvee_{i = 1}^l (\mathbb{1}_{A_i} \wedge \mathbb{1}_{B_i}) \leq r \wedge f$.
\par
To prove equality, assume that there exists $\mathbf{x}$ a model of~$r$ and $f$ that is not a model of $\bigvee_{i = 1}^l (\mathbb{1}_{A_i} \wedge \mathbb{1}_{B_i})$, that is, $\mathbf{x}$ does not belong to any $A_i \times B_i$ for $i > 0$. Then by Claim 3, $\mathbf{x}$ must be in $\overA_l \times \overB_l$ (figure~\ref{fig:iterative} may help seeing this), but since $\overA_l \times \overB_l$ contains no models of $f$, this contradicts our assumption.
\par
This proves that $\bigvee_{i = 1}^l (\mathbb{1}_{A_i} \wedge \mathbb{1}_{B_i})$ is a rectangle cover of $r \wedge f$. The only thing left to prove is that the rectangles are disjoint. To see this, it is sufficient to observe that, for all $i > 1$, $A_i \subseteq \overA_{i-1}$ which is disjoint from $A_{i-1}$ and $B_i \subseteq \overB_{i-1}$ which is disjoint from $B_{i-1}$.
\end{proof}
\end{claim}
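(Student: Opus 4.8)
The plan is to verify the claim's assertions one at a time: the entailment $\bigvee_{i=1}^l (\mathbb{1}_{A_i} \wedge \mathbb{1}_{B_i}) \leq r \wedge f$, the reverse inclusion that yields equality, disjointness, and balance. Of these, only the reverse inclusion requires real work.

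First I would dispatch the easy properties. Balance is immediate: each $\mathbb{1}_{A_i} \wedge \mathbb{1}_{B_i}$ is a rectangle for the \emph{same} partition $(X_1, X_2)$ as $r$, since the operator $\mathcal{C}_f$ never alters the variable split, and whether a rectangle is balanced depends only on the sizes $\card{X_1}, \card{X_2}$; hence if $r$ is balanced, so is every $\mathbb{1}_{A_i} \wedge \mathbb{1}_{B_i}$. For the entailment, by construction $A_i \subseteq A_0 = \inv{\rho_1}(1)$ and $B_i \subseteq B_0 = \inv{\rho_2}(1)$, so $\mathbb{1}_{A_i} \wedge \mathbb{1}_{B_i} \leq \rho_1 \wedge \rho_2 = r$; and property~b) of $\mathcal{C}_f$ guarantees that every assignment in $A_i \times B_i$ is a model of $f$, so $\mathbb{1}_{A_i} \wedge \mathbb{1}_{B_i} \leq f$. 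Taking the disjunction gives $\bigvee_{i=1}^l (\mathbb{1}_{A_i} \wedge \mathbb{1}_{B_i}) \leq r \wedge f$. Disjointness follows from the nesting of the extraction: for $j > i$ we have $A_j \subseteq \overA_{j-1} \subseteq \overA_i$, and $\overA_i$ is disjoint from $A_i$ by definition, so $A_1, \dots, A_l$ are pairwise disjoint; therefore $(A_i \times B_i) \cap (A_j \times B_j) = (A_i \cap A_j) \times (B_i \cap B_j) = \emptyset$ whenever $i \neq j$.

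The main obstacle is the reverse inclusion: I must show that every model $\mathbf{x}$ of $r \wedge f$ lies in some $A_i \times B_i$. Here I would exploit the telescoping decomposition $A_0 = A_1 \sqcup \cdots \sqcup A_l \sqcup \overA_l$ (and symmetrically $B_0 = B_1 \sqcup \cdots \sqcup B_l \sqcup \overB_l$) supplied by the disjointness argument. Write $\mathbf{x} = (\mathbf{x}_1, \mathbf{x}_2)$; since $\mathbf{x}$ is a model of $r$ we have $\mathbf{x}_1 \in A_0$ and $\mathbf{x}_2 \in B_0$, so there are unique ``levels'' $p, q \in \{1, \dots, l, \star\}$ with $\mathbf{x}_1 \in A_p$ and $\mathbf{x}_2 \in B_q$, where the symbol $\star$ (ordered above all of $1, \dots, l$) denotes membership in $\overA_l$, resp.~$\overB_l$. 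If $p = q \neq \star$, then $\mathbf{x} \in A_p \times B_p$ and we are done. If $p \neq q$, say $p < q$ (so $p$ is finite), then $\mathbf{x}_2 \in B_q \subseteq \overB_p$, hence $\mathbf{x} \in A_p \times \overB_p \subseteq F_p$; by Claim~\ref{claim:rec_false_positives} this makes $\mathbf{x}$ a false positive, contradicting that $\mathbf{x}$ is a model of $f$ (the case $p > q$ is symmetric, via $\overA_q \times B_q \subseteq F_q$).

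The only remaining possibility is $p = q = \star$, which puts $\mathbf{x} \in \overA_l \times \overB_l$; but the extraction halts precisely because $\mathcal{C}_f(\overA_l, \overB_l) = (\emptyset, \emptyset)$, so by property~c) of $\mathcal{C}_f$ this product contains no model of $f$—again a contradiction. Thus no model of $r \wedge f$ escapes the cover, which establishes equality and completes the proof. The one delicate point is this last argument: it is tempting to conclude coverage directly from $\mathbf{x} \notin A_i \times B_i$, but that only gives an ``or'' per level; the leverage comes from feeding the \emph{off-diagonal} products $A_p \times \overB_p$ and $\overA_q \times B_q$ into Claim~\ref{claim:rec_false_positives}, which is exactly what forces $\mathbf{x}$ down to the terminal leftover block.
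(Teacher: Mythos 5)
Your proof is correct and takes essentially the same route as the paper's: identical arguments for balance, entailment and disjointness, and your level-by-level case analysis (forcing an uncovered model of $r \wedge f$ either into an off-diagonal block $A_p \times \overB_p$ or $\overA_q \times B_q$, handled by Claim~3, or into the terminal block $\overA_l \times \overB_l$) is just a more explicit rendering of the paper's terse appeal to Claim~3. One minor citation slip in the final case: the implication ``$\mathcal{C}_f(\overA_l, \overB_l) = (\emptyset,\emptyset)$ implies $\overA_l \times \overB_l$ contains no model of $f$'' is the \emph{converse} of property~c) and actually follows from the maximality property~d) --- if $(\mathbf{a},\mathbf{b}) \in \overA_l \times \overB_l$ were a model of $f$, the pair $(\{\mathbf{a}\},\{\mathbf{b}\})$ would satisfy a)--c) with $\card{\{\mathbf{a}\}}\card{\{\mathbf{b}\}} = 1 > 0$, contradicting the maximality of the empty output.
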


\subsection*{Missing Proofs of Section~\ref{sec:weak_approximation}}
\label{appendix:bilinear_form}

\setcounter{theorem}{1}

\begin{theorem}\label{theorem:thm_weak_approx}
Let $0 \leq \varepsilon < 1/2$, there is a class of Boolean functions $\mathcal{C}$ such that, for any $f \in \mathcal{C}$ on $n$ variables, any {\dDNNF} encoding a weak $\varepsilon$-approximation of $f$ w.r.t.~$\mathcal{U}$ has size $2^{\Omega(n)}$.
\end{theorem}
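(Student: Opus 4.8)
The plan is to realize the ``hard class'' $\mathcal{C}$ as a family of bilinear (equivalently, quadratic) forms over $\mathbb{F}_2$ and to feed them into the discrepancy machinery already set up, namely Lemma~\ref{lemma:bound_size_cover_weak_approximations} together with Theorem~\ref{theorem:DNNF_size_rect_cover_size}. Concretely, for $M \in \mathbb{F}_2^{n \times n}$ I would take $f_M(\mathbf{z}) = \sum_{i < j} M_{ij}\, z_i z_j$, a quadratic form on the $n$ variables $\mathbf{z}$; such a form has $\card{\inv{f_M}(1)}$ within $2^{n/2}$ of $2^{n-1}$, so roughly half of all assignments are models. Given any \dDNNF{} computing a weak $\varepsilon$-approximation $\tilde f$ of $f_M$, Theorem~\ref{theorem:DNNF_size_rect_cover_size} hands us a balanced disjoint rectangle cover $\tilde f = \bigvee_{k=1}^K r_k$ whose size is at most the \dDNNF{} size. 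Everything then reduces to a uniform upper bound $\Disc{f_M}{r_k} \le 2^{-\Omega(n)}$, since plugging $\Delta = 2^{n}\cdot 2^{-\Omega(n)}$ into Lemma~\ref{lemma:bound_size_cover_weak_approximations} gives $K \ge (\card{\inv{f_M}(1)} - \varepsilon 2^n)/\Delta \ge (\tfrac12 - \varepsilon)\,2^{\Omega(n)}$, which is $2^{\Omega(n)}$ precisely because $\varepsilon < \tfrac12$.

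Second, I would fix one balanced rectangle $r = \rho_1 \wedge \rho_2$ with partition $(X_1, X_2)$ and bound its discrepancy. The idea is to split the quadratic form into three groups of monomials: those internal to $X_1$, those internal to $X_2$, and the cross terms. The two internal groups are functions of $\mathbf{z}_{X_1}$ alone and of $\mathbf{z}_{X_2}$ alone, so they can be absorbed into the factors $\rho_1$ and $\rho_2$ without changing the discrepancy over the partition. The cross terms form a genuine bilinear form whose matrix is the $X_1 \times X_2$ block $N := (M + M^\top)[X_1, X_2]$. I would then invoke the standard Fourier/spectral estimate from distributional communication complexity: the $\pm 1$ sign-matrix of a bilinear form of rank $\rho = \rk{N}$ has all nonzero singular values equal, hence spectral norm $2^{(\card{X_1}+\card{X_2})/2 - \rho/2}$, and therefore $\Disc{f_M}{r} \le 2^{-\rho/2}$. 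Thus it suffices to guarantee $\rk{N} = \Omega(n)$ for every balanced partition.

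Third, I would produce such an $M$ by the probabilistic method, in direct analogy with the ``good matrix'' argument of Lemma~\ref{lemma:good_matrices}. For a uniformly random symmetric $M+M^\top$ with zero diagonal, a fixed balanced partition yields an off-diagonal block $N$ of dimensions at least $n/3 \times n/3$ that is essentially uniform, so $\rk{N} \ge cn$ fails only with probability $2^{-\Omega(n^2)}$; a union bound over the at most $2^n$ balanced partitions still leaves positive probability, so a matrix $M$ all of whose balanced blocks have rank $\Omega(n)$ exists. This delivers the uniform bound $\Disc{f_M}{r} \le 2^{-\Omega(n)}$ needed above, and the theorem follows.

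The hard part, and the only place where the argument really has to be executed carefully, is the rank-to-discrepancy step of the second paragraph: one must verify that absorbing the within-part monomials into $\rho_1$ and $\rho_2$ is legitimate (they genuinely only reindex the rectangle's two sides) and that the spectral-norm estimate translates exactly into the normalized quantity $\Disc{f_M}{r}$ as defined here, including the $2^{-n}$ factor and the possibility of degenerate blocks. Everything else --- the model count near $2^{n-1}$ and the random-matrix rank bound --- is routine. As the paper notes, this is where the proof coincides with that of \cite{BolligSW02}; the sole adaptation is that Theorem~\ref{theorem:DNNF_size_rect_cover_size} supplies the balanced rectangle cover in place of the layer structure of an \OBDD{}, after which the discrepancy computation is identical.
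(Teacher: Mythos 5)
Your proposal is correct in outline and follows the same overall skeleton as the paper (Theorem~\ref{theorem:DNNF_size_rect_cover_size} plus Lemma~\ref{lemma:bound_size_cover_weak_approximations}, driven by a uniform discrepancy bound), but the two technically substantive ingredients are genuinely different from the paper's proof. The paper, following Bollig et al., takes $\mathcal{C}$ to be bilinear forms $f(\mathbf{x},\mathbf{y})=\mathbf{x}^\top A\mathbf{y}$ on $2n$ variables with $A$ an Ajtai matrix (Lemma~\ref{lemma:ajtai}: every $\delta n\times\delta n$ submatrix has rank $\geq\delta' n$), and then must fight the mismatch between an arbitrary balanced rectangle partition and the fixed split $(X,Y)$: it conditions on all variables outside a carefully chosen set $S$ (Lemma~\ref{lemma:subrectangles}), re-bilinearizes the conditioned function by adding two variables (Lemma~\ref{lemma:bilinearExtension}), applies the rank-to-discrepancy bound (Lemma~\ref{lemma:bilinear_discrepancy}), and averages over conditionings (Lemma~\ref{lemma:partialDiscr}). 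You instead take a quadratic form on $n$ variables and let the rectangle's own partition induce the bipartite structure, absorbing the within-part monomials as signs into the two sides; this removes the need for any conditioning, at the price of requiring the cross block $N=(M+M^\top)[X_1,X_2]$ to have rank $\Omega(n)$ for \emph{every} balanced partition, which you obtain by an elementary probabilistic argument (rank $\leq k$ has probability at most $2^{k(a+b)-ab}=2^{-\Omega(n^2)}$ per block, union bound over $\leq 2^n$ partitions). So your route trades the cited, nontrivial Ajtai lemma and three conditioning lemmas for a self-contained counting estimate, while the paper's route has the virtue of being a minimal adaptation of the known {\OBDD} proof, which is exactly what the authors emphasize. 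Two points in your sketch need care in execution, and both work out: (a) the absorption step means the discrepancy bound must hold with signed weights $\sigma_i=\rho_i\cdot(-1)^{q_i}\in\{-1,0,1\}$ in place of the $0/1$ rectangle sides; the Cauchy--Schwarz proof of Lemma~\ref{lemma:bilinear_discrepancy} (or your spectral-norm argument) only uses $\card{\sigma_i}\leq 1$, so this is legitimate; (b) your claim that $\card{\inv{f_M}(1)}$ is within $2^{n/2}$ of $2^{n-1}$ needs $\rk{M+M^\top}\geq n-O(1)$, which your block-rank property alone does not give---it only yields $\rk{M+M^\top}=\Omega(n)$, hence a model count of $2^{n-1}(1\pm 2^{-\Omega(n)})$---but this weaker estimate already suffices, since it gives $K\geq(\tfrac{1}{2}-\varepsilon-2^{-\Omega(n)})2^{\Omega(n)}=2^{\Omega(n)}$ for any fixed $\varepsilon<1/2$.
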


Theorem~\ref{theorem:thm_weak_approx} is essentially proved in~\cite{BolligSW02} for a class of Boolean bilinear forms. 
\par
A function $f : \mathbb{F}_2^n \times \mathbb{F}_2^n \rightarrow \mathbb{F}_2$ is a \emph{bilinear form} if it is linear in each of its two arguments. Every bilinear form is characterized by a matrix $A$ from $\mathbb{F}_2^{n \times n}$ by the relation $f(\mathbf{x},\mathbf{y}) = \mathbf{x}^\top A \mathbf{y}$. 
Bilinear forms can be seen as Boolean functions from $\ZO^{2n}$ to $\ZO$, yet we find convenient to keep the notation $f(\mathbf{x},\mathbf{y})$.
\par The authors of~\cite{BolligSW02} find a class of bilinear forms which discrepancy with respect to combinatorial rectangles are small enough to apply~\ref{lemma:bound_size_cover_weak_approximations}. For the reader convenience we rewrite the proof in this appendix almost as is, the only statement we do not give proof of is the following lemma due to Ajtai. It states that for $n$ large enough, there exist matrices with a lower bound on the rank of any large enough submatrix.

\begin{lemma}[Ajtai Lemma~\cite{Ajtai05}]\label{lemma:ajtai}
Take $0 < \delta \leq 1/2$ such that $\delta \log(1/\delta)^2 \leq 2^{-16}$. There exist exponentially many matrices of $\mathbb{F}^{n \times n}_2$ for which each square submatrix of size at least $\delta n \times \delta n$ has rank at least $\delta'n$, where $\delta' = \delta/(256 \log(1/\delta))^2$.
\end{lemma}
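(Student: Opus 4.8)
The plan is to prove the lemma by the probabilistic method: I would show that a matrix $A$ drawn uniformly at random from $\mathbb{F}^{n \times n}_2$ has the stated rank property with probability tending to $1$, which already yields exponentially -- in fact doubly exponentially -- many good matrices. The first step is to cut down the family of submatrices one must control. A square submatrix of side length $t \geq \delta n$ obtained by selecting some $t$ rows and $t$ columns contains, after discarding rows and columns, a submatrix on any chosen $\lceil \delta n \rceil$ of its rows and $\lceil \delta n \rceil$ of its columns; since deleting rows or columns cannot increase the rank, the rank of the larger submatrix is at least that of this smaller piece. Hence it suffices to guarantee that every submatrix on exactly $\lceil \delta n \rceil$ rows and $\lceil \delta n \rceil$ columns has rank at least $\delta' n$.

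Next I would fix one such submatrix $M$ of side $k := \lceil \delta n \rceil$. Since $A$ is uniform, the entries of $M$ are independent uniform elements of $\mathbb{F}_2$, so I can estimate $\Prb{\rk{M} \leq r}$ by a counting argument: if $\rk{M} \leq r$ then the $k$ columns of $M$ lie in a common $r$-dimensional subspace of $\mathbb{F}^k_2$. There are $\binom{k}{r}_2 \leq 4\cdot 2^{\,r(k-r)}$ such subspaces (Gaussian binomial coefficient), and the probability that all $k$ independent uniform columns land in one fixed subspace is $(2^{r-k})^k = 2^{-k(k-r)}$. A union bound over subspaces then gives
\[
\Prb{\rk{M} \leq r} \leq 4\cdot 2^{\,r(k-r)}\, 2^{-k(k-r)} = 4\cdot 2^{-(k-r)^2}.
\]
Taking $r = \lceil \delta' n \rceil - 1$ and using $\delta' \leq \delta/2$ (which holds under the hypothesis on $\delta$), we get $k - r \geq (\delta - \delta')n \geq \delta n/2$, so this probability is at most $4\cdot 2^{-\delta^2 n^2/4} = 2^{-\Omega(\delta^2 n^2)}$.

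Finally I would union-bound over the choice of submatrix. The number of $k \times k$ submatrices is $\binom{n}{k}^2 \leq 2^{2 H(\delta) n}$, where $H$ is the binary entropy function; for small $\delta$ this is roughly $2^{O(\delta \log(1/\delta)\, n)}$, which is dwarfed by the $2^{\Omega(\delta^2 n^2)}$ gain coming from a single submatrix. Concretely the total failure probability is at most $2^{2 H(\delta) n - \Omega(\delta^2 n^2)}$, which drops below $1/2$ once $n$ exceeds a bound of order $\log(1/\delta)/\delta$; the normalization $\delta\log(1/\delta)^2 \leq 2^{-16}$ in the statement is exactly the kind of inequality that makes these two exponents separate cleanly and keeps the admissible range of $\delta$ in the small regime where $H(\delta) \approx \delta\log(1/\delta)$. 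Consequently at least $2^{n^2-1}$ matrices are good, which is far more than exponentially many.

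The conceptual content here is routine; the delicate point is the bookkeeping of constants. In fact the crude probabilistic argument above \emph{over-delivers}, producing submatrices of rank $(\delta - o(1))n$ rather than merely $\delta' n$, so reproducing Ajtai's \emph{precise} threshold $\delta' = \delta/(256\log(1/\delta))^2$ together with the side condition $\delta\log(1/\delta)^2 \leq 2^{-16}$ is where one would have to follow his specific argument rather than the counting sketch above. I expect the main obstacle to be controlling the Gaussian-binomial and entropy estimates tightly and uniformly enough to land on exactly these constants for all admissible $\delta$, as opposed to only asymptotically in $n$; for the mere existence of exponentially many good matrices that is needed downstream, however, the probabilistic argument already suffices.
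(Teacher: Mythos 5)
The paper offers no proof of this lemma to compare against: it is explicitly the one statement the authors leave unproved ("the only statement we do not give proof of is the following lemma due to Ajtai"), deferring entirely to Ajtai's paper. Your self-contained probabilistic argument is therefore by construction a different route, and it is correct for the statement as quoted. The reduction to submatrices of side exactly $\lceil\delta n\rceil$ is valid since deleting rows and columns cannot increase rank; the estimate $\Prb{\rk{M}\leq r}\leq 4\cdot 2^{r(k-r)}\cdot 2^{-k(k-r)}=4\cdot 2^{-(k-r)^2}$ is the standard Gaussian-binomial bound; and since $\delta\leq 1/2$ forces $\log(1/\delta)\geq 1$, one has $\delta'\leq \delta/256^2<\delta/2$, so $k-r\geq \delta n/2$ and a union bound over at most $\binom{n}{k}^2\leq 4^n$ submatrices leaves failure probability at most $4^{n+1}\cdot 2^{-\delta^2n^2/4}$, which is below $1/2$ for all large $n$. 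You thus prove something stronger than the quoted statement: rank at least $\delta n/2\geq \delta' n$, for all but a vanishing fraction of the $2^{n^2}$ matrices, without ever using the hypothesis $\delta\log(1/\delta)^2\leq 2^{-16}$ (carrying an unused hypothesis is harmless). Two comments. A minor one: $2^{n^2-1}$ matrices is exponential in $n^2$, not "doubly exponential". A substantive one: you correctly sense that Ajtai's precise constants cannot fall out of this counting argument, and the reason is structural rather than bookkeeping. In Ajtai's paper the lemma is proved for \emph{Hankel} matrices (matrices constant along anti-diagonals), a family with only $2^{2n-1}$ members whose submatrix entries are heavily dependent, so the independence-based union bound you rely on is simply unavailable there; that is what makes the original lemma hard, explains the phrasing "exponentially many" (meaningful only for an exponentially small ambient family), and is where the constants $256$ and $2^{-16}$ originate. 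The statement quoted in this paper drops the Hankel restriction, and for its downstream use — the appendix only needs, for each $n$, one matrix all of whose balanced square submatrices have rank at least $cn$ for some constant $c>0$, in order to define the hard bilinear forms — your simpler argument, with the better constant $c=\delta/2$, fully suffices.
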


One can prove Theorem~\ref{theorem:thm_weak_approx} for the class of bilinear forms characterized by the matrices describe by Ajtai's lemma. For the rest of the appendix note $\mathcal{C}$ this class of functions and define $\delta$ and $\delta'$ as in the lemma.

\begin{claim}\label{claim:number_models_bilinear_form}
For $f$ a bilinear form in $\mathcal{C}$ on $2n$ variables, there is $\card{\inv{f}(1)} \geq 2^{2n-1}(1 - 2^{-\delta'n})$.
\begin{proof}
A bilinear form characterized by a $n \times n$ matrix $M$ has $2^{2n-1}(1 - 2^{-\rk{M}})$ models (there are $2^{n} - 2^{n - \rk{M}}$ vectors $\mathbf{y}$ such that $M\mathbf{y} \neq \mathbf{0}^n$ and for each such $M\mathbf{y}$ there are $2^{n-1}$ vectors $\mathbf{x}$ such that $\mathbf{x}^\top M\mathbf{y} \neq 0$). When $M$ characterizes $f \in \mathcal{C}$, Ajtai's lemma tells us that any $\delta n \times \delta n$ submatrix of $M$ has rank at least $\delta'n$, thus $\rk{M} \geq \delta'n$ and the claim holds.
\end{proof}
\end{claim}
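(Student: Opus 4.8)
The plan is to reduce the statement to a purely linear-algebraic computation: first I would count exactly how many models an arbitrary bilinear form has as a function of the rank of its defining matrix, and then I would substitute the rank lower bound that the matrices defining $\mathcal{C}$ inherit from Lemma~\ref{lemma:ajtai}.

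For the counting step I would write $f(\mathbf{x},\mathbf{y}) = \mathbf{x}^\top M \mathbf{y}$ with $M \in \mathbb{F}_2^{n \times n}$ and condition on the value of $\mathbf{y}$. Fixing $\mathbf{y}$, the map $\mathbf{x} \mapsto \mathbf{x}^\top(M\mathbf{y})$ is an $\mathbb{F}_2$-linear form in $\mathbf{x}$; it is identically zero exactly when $M\mathbf{y} = \mathbf{0}^n$, and otherwise it is a nonzero linear functional, which over $\mathbb{F}_2$ is satisfied by exactly half of its inputs, i.e.\ by $2^{n-1}$ assignments of $\mathbf{x}$. The number of $\mathbf{y}$ with $M\mathbf{y} \neq \mathbf{0}^n$ is $2^n - 2^{n-\rk{M}}$ by the rank--nullity theorem applied to $M$, so multiplying the two counts gives
$$
\card{\inv{f}(1)} = \left(2^n - 2^{n-\rk{M}}\right)\cdot 2^{n-1} = 2^{2n-1}\left(1 - 2^{-\rk{M}}\right).
$$

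It then remains to lower-bound $\rk{M}$. By Lemma~\ref{lemma:ajtai} every $\delta n \times \delta n$ submatrix of $M$ has rank at least $\delta' n$, and since the rank of a matrix is at least the rank of any of its submatrices we obtain $\rk{M} \geq \delta' n$. As $1 - 2^{-t}$ is increasing in $t$, substituting this bound into the exact count yields $\card{\inv{f}(1)} \geq 2^{2n-1}(1 - 2^{-\delta' n})$, which is the claim. I expect the only genuinely delicate point to be the counting step, and in particular the clean dichotomy that a nonzero $\mathbb{F}_2$-linear form is satisfied by exactly half of all inputs; once that is in hand, the rank bound and the monotonicity of $1 - 2^{-t}$ make the conclusion immediate, so I do not anticipate a serious obstacle.
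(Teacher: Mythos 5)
Your proposal is correct and follows essentially the same argument as the paper: the same exact model count $2^{2n-1}(1-2^{-\rk{M}})$ obtained by counting the $2^n - 2^{n-\rk{M}}$ vectors $\mathbf{y}$ with $M\mathbf{y}\neq\mathbf{0}^n$ and the $2^{n-1}$ vectors $\mathbf{x}$ with $\mathbf{x}^\top M\mathbf{y}\neq 0$ for each, followed by the same appeal to Ajtai's lemma to get $\rk{M}\geq\delta' n$. Your write-up merely makes explicit the rank--nullity and half-space counting details that the paper leaves implicit.
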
 

\noindent The next claim is technical and is proved later in the appendix.

\begin{claim}\label{claim:discrepancy_ajtai_bilinear_form}
For $f$ a bilinear form in $\mathcal{C}$ on $2n$ variables, and $r$ a combinatorial rectangle on the same variables as $f$, there is $\Disc{f}{r} \leq 2^{-\delta'n/2}/4$.
\end{claim}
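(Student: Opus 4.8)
The plan is to run the standard discrepancy-via-characters argument for bilinear forms, reduce the bound to the rank of a cross submatrix of $A$, and then feed that submatrix into Ajtai's Lemma~\ref{lemma:ajtai}. First I would rewrite the discrepancy as a signed sum. Since $\card{\inv{r}(1)\cap\inv{f}(1)}-\card{\inv{r}(1)\cap\inv{f}(0)} = \sum_{(\mathbf{x},\mathbf{y})\in\inv{r}(1)}(-1)^{f(\mathbf{x},\mathbf{y})+1}$, we get $\Disc{f}{r} = 2^{-2n}\,\big\vert\sum_{(\mathbf{x},\mathbf{y})\in\inv{r}(1)}(-1)^{f(\mathbf{x},\mathbf{y})}\big\vert$. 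Let $(X_1,X_2)$ be the partition of $r$ and put $P=\{i:x_i\in X_1\}$, $Q=\{j:y_j\in X_1\}$, $\overline P=[n]\setminus P$, $\overline Q=[n]\setminus Q$. I decompose $f=\mathbf{x}^\top A\mathbf{y}$ into the monomials internal to $X_1$, the monomials internal to $X_2$, and the cross monomials $\sum_{i\in P,\,j\in\overline Q}A_{ij}x_iy_j+\sum_{i\in\overline P,\,j\in Q}A_{ij}x_iy_j$. Writing $z_1$ for the $X_1$-part and $z_2$ for the $X_2$-part of an assignment, the cross monomials form a bilinear form $z_1^\top M z_2$, where after reordering rows and columns $M$ is block diagonal with blocks $A[P,\overline Q]$ and $A[\overline P,Q]^\top$; the two internal parts contribute only sign factors $\sigma_1(z_1),\sigma_2(z_2)\in\{+1,-1\}$ that each depend on one side alone.

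The second step is a Cauchy--Schwarz estimate. With $S=\inv{\rho_1}(1)$ and $T=\inv{\rho_2}(1)$ the sum is $\sum_{z_1\in S,\,z_2\in T}\sigma_1(z_1)\sigma_2(z_2)(-1)^{z_1^\top M z_2}$. Applying Cauchy--Schwarz over $z_1$, enlarging the range of $z_1$ to all of $\mathbb{F}_2^{\card{X_1}}$ (the summand is a square), expanding, and using that $\sum_{z_1}(-1)^{z_1^\top M w}$ equals $2^{\card{X_1}}$ when $Mw=\mathbf{0}$ and $0$ otherwise, the estimate collapses to counting collision pairs $z_2,z_2'\in T$ with $Mz_2=Mz_2'$. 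That count is at most $\card{T}\,2^{\card{X_2}-\rk{M}}$, and after dividing by $2^{4n}$ I obtain $\Disc{f}{r}^2 \le (\card{S}2^{-\card{X_1}})(\card{T}2^{-\card{X_2}})2^{-\rk{M}} \le 2^{-\rk{M}}$, i.e.\ $\Disc{f}{r}\le 2^{-\rk{M}/2}$.

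It then remains to bound $\rk{M}=\rk{A[P,\overline Q]}+\rk{A[\overline P,Q]}$ from below, which is where balance and Ajtai's lemma enter. Using that $r$ is balanced, so $2n/3\le\card{X_1},\card{X_2}\le 4n/3$, together with $\card{P}+\card{Q}=\card{X_1}$, $\card{\overline P}+\card{\overline Q}=\card{X_2}$, $\card{P}+\card{\overline P}=n$ and $\card{Q}+\card{\overline Q}=n$, a short case analysis shows that for $\delta<1/3$ at least one of $\min(\card{P},\card{\overline Q})$ and $\min(\card{\overline P},\card{Q})$ is at least $\delta n$. The matching block of $M$ then contains a square submatrix of side at least $\delta n$, so Lemma~\ref{lemma:ajtai} forces its rank to be at least $\delta'n$, giving $\rk{M}\ge\delta'n$ and $\Disc{f}{r}\le 2^{-\delta'n/2}$. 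The extra constant $1/4$ in the claim is cosmetic and is recovered by tracking constants in the downstream counting of Lemma~\ref{lemma:bound_size_cover_weak_approximations}, so I would not belabour it.

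I expect the combinatorial step guaranteeing a large square block to be the main obstacle, and it is exactly the point that forces $r$ to be balanced: for a degenerate partition (say $X_2=\emptyset$, where $r$ may be an arbitrary function of all $2n$ variables) the cross matrix $M$ vanishes and the bound is false, so the statement is to be used for the balanced rectangles supplied by the cover of Theorem~\ref{theorem:DNNF_size_rect_cover_size}. The only other delicate bookkeeping is the block structure of $M$ and the resulting additivity $\rk{M}=\rk{A[P,\overline Q]}+\rk{A[\overline P,Q]}$; this identity is the bridge that lets the analytic discrepancy bound of the second step be driven entirely by the rank input Ajtai's lemma provides.
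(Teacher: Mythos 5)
Your argument is correct and reaches the stated bound (up to the constant, discussed below), but it is a genuinely different proof from the paper's. The paper keeps the rank--discrepancy bound of Lemma~\ref{lemma:bilinear_discrepancy} as a black box, and that lemma only applies when the rectangle partition coincides with the bipartition $(X,Y)$ of the bilinear form; to force that situation the paper conditions: Lemma~\ref{lemma:subrectangles} picks $S_X\subseteq X$ and $S_Y\subseteq Y$ of size $\delta n$ lying on opposite sides of the partition of $r$, every assignment $\mathbf{a}$ to the remaining variables turns $r$ into a rectangle $r_\mathbf{a}$ with partition $(S_X,S_Y)$, Lemma~\ref{lemma:bilinearExtension} repairs the loss of bilinearity of $f_\mathbf{a}$ by lifting it to a bilinear form on two auxiliary variables whose matrix contains the $\delta n\times\delta n$ submatrix of the defining matrix indexed by $(S_X,S_Y)$, Ajtai's lemma bounds the rank of that submatrix, and Lemma~\ref{lemma:partialDiscr} averages the conditional bounds back into a bound on $\Disc{f}{r}$. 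You instead handle the mismatch between the two bipartitions algebraically: the monomials internal to $X_1$ and to $X_2$ become sign factors $\sigma_1(z_1)$, $\sigma_2(z_2)$, which your Cauchy--Schwarz step eliminates ($\sigma_1^2=1$, and the products $\sigma_2(z_2)\sigma_2(z_2')$ are bounded in absolute value on the collision terms), so the bound is driven entirely by the cross form $z_1^\top M z_2$. This proves in one pass what the paper assembles from Lemmas~\ref{lemma:bilinear_discrepancy}, \ref{lemma:subrectangles}, \ref{lemma:partialDiscr} and~\ref{lemma:bilinearExtension}: no auxiliary variables, no conditioning, no averaging. Your combinatorial step is also sound: balance gives $2n/3\le\card{P}+\card{Q}\le 4n/3$, and the case check indeed shows that for $\delta\le 1/3$ one of the blocks $A[P,\overline{Q}]$, $A[\overline{P},Q]$ has both dimensions at least $\delta n$ (Ajtai's $\delta$ is far below $1/3$, so this costs nothing). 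One simplification: you do not need the additivity $\rk{M}=\rk{A[P,\overline{Q}]}+\rk{A[\overline{P},Q]}$ that you flag as the delicate bookkeeping; since each block is a submatrix of $M$, the trivial inequality $\rk{M}\ge\max\left(\rk{A[P,\overline{Q}]},\rk{A[\overline{P},Q]}\right)$ already suffices.

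On the constant: your proof gives $\Disc{f}{r}\le 2^{-\delta'n/2}$ rather than the stated $2^{-\delta'n/2}/4$. Calling this recoverable ``by tracking constants downstream'' is not quite the right framing --- the factor is not recovered, it is simply never needed, since the proof of Theorem~\ref{theorem:thm_weak_approx} only uses $\Disc{f}{r_k}\le 2^{-\Omega(n)}$. You are in good company, though: the paper's own proof does not earn the $1/4$ either, because from $\Disc{\hat{f}_\mathbf{a}}{\hat{r}_\mathbf{a}}=\frac{1}{4}\Disc{f_\mathbf{a}}{r_\mathbf{a}}\le 2^{-\delta'n/2}$ the correct conclusion is $\Disc{f_\mathbf{a}}{r_\mathbf{a}}\le 4\cdot 2^{-\delta'n/2}$, not $\Disc{f_\mathbf{a}}{r_\mathbf{a}}\le 2^{-\delta'n/2}/4$; the paper's division goes the wrong way. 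Your bound is in fact the sharper of the two, and the $1/4$ in the claim should be read as an inessential constant.
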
 

We now prove Theorem~\ref{theorem:thm_weak_approx}.

\begin{proof}[Proof of Theorem~\ref{theorem:thm_weak_approx}]
Using Lemma~\ref{lemma:bound_size_cover_weak_approximations} and Claim~\ref{claim:discrepancy_ajtai_bilinear_form}, we find that for $f$ a bilinear form in $\mathcal{C}$ on $2n$ variables, if $\tilde{f}$ weakly $\varepsilon$-approximates $f$ when truth assignments are uniformly sampled, then any disjoint balanced rectangle cover of $\tilde{f}$ has size at least 
$$4\times\frac{\card{\inv{f}(1)}-\varepsilon 2^{2n}}{2^{2n}2^{-\delta'n/2}}.$$
Using Claim~\ref{claim:number_models_bilinear_form}, this is greater than 
$$4 \times \frac{\frac{1}{2}(1-2^{\delta'n})-\varepsilon}{2^{-\delta'n/2}} = 2^{\delta'n/2}(2(1-2^{\delta'n})-4\varepsilon) = 2^{\Omega(n)}.$$
So ones need exponentially many balanced disjoint rectangles to cover $\tilde{f}$. Theorem~\ref{theorem:DNNF_size_rect_cover_size} allows us to conclude.
\end{proof}

The only think left is to prove that Claim~\ref{claim:discrepancy_ajtai_bilinear_form} holds. The proof is slightly technical and can be found in~\cite{BolligSW02}. It requires four intermediate lemmas that we rewrite here for the sake of completeness. The first step is the following lemma giving an upper bound on the discrepancy when the rectangle's partition agrees with the two sets of $n$ variables used by the bilinear form.

\begin{lemma}\label{lemma:bilinear_discrepancy}
Let $f : X \times Y \rightarrow \ZO$ be a bilinear form characterised by the matrix $A$, and let $r$ be a rectangle over $X \cup Y$ w.r.t.~the partition $(X,Y)$, then
$$
\Disc{f}{r} \leq 2^{-\rk{A}/2}.
$$ 
\begin{proof}
Let $g : X \times Y \rightarrow \{-1,+1\}$ be defined by $g(\mathbf{x},\mathbf{y}) = 2f(\mathbf{x},\mathbf{y}) - 1$ and $r(\mathbf{x},\mathbf{y}) = \rho_1(\mathbf{x})\rho_2(\mathbf{y})$. Recall the definition of discrepance $\Disc{f}{r} = \bcard{ \card{\inv{r}(1) \cap \inv{f}(1)} - \card{\inv{r}(1) \cap \inv{f}(0)}}/2^{2n}$ and observe that it is equal to $\bcard{ \Prbl{\mathbf{x},\mathbf{y}}{r(\mathbf{x},\mathbf{y})g(\mathbf{x},\mathbf{y}) = 1} - \Prbl{\mathbf{x},\mathbf{y}}{r(\mathbf{x},\mathbf{y})g(\mathbf{x},\mathbf{y}) = -1}}$, where $\Prbl{\mathbf{x},\mathbf{y}}{\cdot}$ is the probability measure where $\mathbf{x}$ and $\mathbf{y}$ are sampled uniformly at random. Switching to the expectation:
\begin{equation*}
\begin{aligned}
\Disc{f}{r} 
&= \big\vert \Exl{\mathbf{x},\mathbf{y}}{g(\mathbf{x},\mathbf{y})\rho_1(\mathbf{x})\rho_2(\mathbf{y})} \big\vert  \\
&= \big\vert \Exl{\mathbf{x}}{\Exl{\mathbf{y}}{g(\mathbf{x},\mathbf{y})\rho_2(\mathbf{y})}\rho_1(\mathbf{x})} \big\vert \\
&\leq \textup{E}_\mathbf{x} \bcard{\Exl{\mathbf{y}}{g(\mathbf{x},\mathbf{y})\rho_2(\mathbf{y})}} \\
&\leq \left( \Exl{\mathbf{x}}{\Exl{\mathbf{y}}{g(\mathbf{x},\mathbf{y})\rho_2(\mathbf{y})}^2} \right)^{1/2}
\end{aligned}
\end{equation*}
Where the last inequality is the Cauchy-Schwarz bound. Let be $\mathbf{y}'$ an i.i.d. copy of $\mathbf{y}$
\begin{equation*}
\begin{aligned}
&\Exl{\mathbf{x}}{\Exl{\mathbf{y}}{g(\mathbf{x},\mathbf{y})\rho_2(\mathbf{y})}^2} \\
&= \Exl{\mathbf{x}}{\Exl{\mathbf{y}}{g(\mathbf{x},\mathbf{y})\rho_2(\mathbf{y})}\Exl{\mathbf{y}'}{g(\mathbf{x},\mathbf{y}')\rho_2(\mathbf{y}')}} \\
&= \Exl{\mathbf{y},\mathbf{y}'}{\Exl{\mathbf{x}}{g(\mathbf{x},\mathbf{y})g(\mathbf{x},\mathbf{y}')}\rho_2(\mathbf{y})\rho_2(\mathbf{y}')}
\end{aligned}
\end{equation*}
Now we study $\Exl{\mathbf{x}}{g(\mathbf{x},\mathbf{y})g(\mathbf{x},\mathbf{y}')}$ depending on $\mathbf{y}$ and $\mathbf{y}'$. When $A\mathbf{y} = A\mathbf{y}'$, there is $g(\mathbf{x},\mathbf{y})g(\mathbf{x},\mathbf{y}') = 1$, a fortiori $\Exl{\mathbf{x}}{g(\mathbf{x},\mathbf{y})g(\mathbf{x},\mathbf{y}')} = 1$. When $A\mathbf{y} \neq A\mathbf{y}'$, say $A\mathbf{y} = \mathbf{u}$ and $A\mathbf{y}'= \mathbf{v}$, then 
\begin{equation*}
\begin{aligned}
&\Exl{\mathbf{x}}{g(\mathbf{x},\mathbf{y})g(\mathbf{x},\mathbf{y}')}
\\ &= \Prbl{\mathbf{x}}{\mathbf{x}^\top \mathbf{u} = \mathbf{x}^\top \mathbf{v}} - \Prbl{\mathbf{x}}{\mathbf{x}^\top \mathbf{u} \neq \mathbf{x}^\top \mathbf{v}} 
\\ &= \Prbl{\mathbf{x}}{\mathbf{x}^\top (\mathbf{u}+\mathbf{v}) = 0} - \Prbl{\mathbf{x}}{\mathbf{x}^\top (\mathbf{u}+\mathbf{v}) = 1} = 0
\end{aligned}
\end{equation*}
We conclude that $\Exl{\mathbf{x}}{g(\mathbf{x},\mathbf{y})g(\mathbf{x},\mathbf{y}')} = \mathbb{1}[A\mathbf{y} = A\mathbf{y}']$ and $\Exl{\mathbf{x}}{\Exl{\mathbf{y}}{g(\mathbf{x},\mathbf{y})\rho_2(\mathbf{y})}^2}  \leq \Prbl{\mathbf{y},\mathbf{y}'}{A\mathbf{y} = A\mathbf{y}'} = \Prbl{\mathbf{y}}{A\mathbf{y} = \mathbf{0}^n}$. Combine this result with the bound for $\Disc{f}{r}$ to obtain
$$
\Disc{f}{r} \leq \sqrt{\Prbl{\mathbf{y}}{A\mathbf{y} = \mathbf{0}^n}} = 2^{-\rk{A}/2}
$$
\end{proof}
\end{lemma}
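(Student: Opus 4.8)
The plan is to run the classical eigenvalue / Cauchy--Schwarz discrepancy argument from communication complexity, using the bilinearity of $f$ to collapse the whole estimate into a rank computation over $\mathbb{F}_2$. First I would pass to $\pm 1$ notation by setting $g(\mathbf{x},\mathbf{y}) = 2f(\mathbf{x},\mathbf{y})-1 \in \{-1,+1\}$ and writing the rectangle as $r(\mathbf{x},\mathbf{y}) = \rho_1(\mathbf{x})\rho_2(\mathbf{y})$ with $\rho_1,\rho_2$ taking values in $\{0,1\}$. Since the difference of true positives and false positives is $\sum_{\mathbf{x},\mathbf{y}} r(\mathbf{x},\mathbf{y})g(\mathbf{x},\mathbf{y})$, dividing by $2^{2n}$ gives $\Disc{f}{r} = \bcard{\Exl{\mathbf{x},\mathbf{y}}{g(\mathbf{x},\mathbf{y})\rho_1(\mathbf{x})\rho_2(\mathbf{y})}}$, the expectation being over uniform $\mathbf{x},\mathbf{y}$.

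Next I would eliminate $\rho_1$. Because $\rho_1 \le 1$, I can bound $\Disc{f}{r}$ by $\Exl{\mathbf{x}}{\bcard{\Exl{\mathbf{y}}{g(\mathbf{x},\mathbf{y})\rho_2(\mathbf{y})}}}$ and then apply Cauchy--Schwarz (equivalently Jensen on the square) to obtain $\Disc{f}{r} \le \left(\Exl{\mathbf{x}}{\Exl{\mathbf{y}}{g(\mathbf{x},\mathbf{y})\rho_2(\mathbf{y})}^2}\right)^{1/2}$. Introducing an independent copy $\mathbf{y}'$ of $\mathbf{y}$ then lets me expand the inner square and swap the order of averaging, rewriting the bracketed quantity as $\Exl{\mathbf{y},\mathbf{y}'}{\rho_2(\mathbf{y})\rho_2(\mathbf{y}')\,\Exl{\mathbf{x}}{g(\mathbf{x},\mathbf{y})g(\mathbf{x},\mathbf{y}')}}$.

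The heart of the argument, and in my view the only step that is not purely mechanical, is evaluating $\Exl{\mathbf{x}}{g(\mathbf{x},\mathbf{y})g(\mathbf{x},\mathbf{y}')}$. Here bilinearity is essential: $g(\mathbf{x},\mathbf{y})g(\mathbf{x},\mathbf{y}') = (-1)^{f(\mathbf{x},\mathbf{y})+f(\mathbf{x},\mathbf{y}')} = (-1)^{\mathbf{x}^\top A(\mathbf{y}+\mathbf{y}')}$, so averaging over uniform $\mathbf{x}$ yields $1$ when $A\mathbf{y} = A\mathbf{y}'$ and $0$ otherwise, since any nonzero $\mathbb{F}_2$-linear form in $\mathbf{x}$ is perfectly balanced. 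Thus $\Exl{\mathbf{x}}{g(\mathbf{x},\mathbf{y})g(\mathbf{x},\mathbf{y}')} = \mathbb{1}[A\mathbf{y} = A\mathbf{y}']$.

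Finally, bounding $\rho_2 \le 1$ reduces the expression to $\Prbl{\mathbf{y},\mathbf{y}'}{A\mathbf{y} = A\mathbf{y}'}$, and since $A\mathbf{y} = A\mathbf{y}'$ is equivalent to $A(\mathbf{y}+\mathbf{y}') = \mathbf{0}^n$ with $\mathbf{z} := \mathbf{y}+\mathbf{y}'$ uniform, this equals $\Prbl{\mathbf{z}}{A\mathbf{z}=\mathbf{0}^n} = 2^{-\rk{A}}$ because $\ker A$ has exactly $2^{n-\rk{A}}$ elements. Taking the square root gives $\Disc{f}{r} \le 2^{-\rk{A}/2}$. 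I anticipate no serious obstacle beyond keeping the $\mathbb{F}_2$ linear-algebra bookkeeping (kernel dimension, the $+ = -$ identity) straight; everything else is the standard discrepancy-under-a-rectangle template.
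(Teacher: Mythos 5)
Your proposal is correct and follows essentially the same argument as the paper: the $\pm 1$ reformulation, elimination of $\rho_1$, Cauchy--Schwarz with an i.i.d.\ copy $\mathbf{y}'$, evaluation of $\Exl{\mathbf{x}}{g(\mathbf{x},\mathbf{y})g(\mathbf{x},\mathbf{y}')}$ as the indicator $\mathbb{1}[A\mathbf{y} = A\mathbf{y}']$, and the kernel-size computation $\Prbl{\mathbf{z}}{A\mathbf{z}=\mathbf{0}^n} = 2^{-\rk{A}}$. Your phrasing of the key step via $(-1)^{\mathbf{x}^\top A(\mathbf{y}+\mathbf{y}')}$ is just a cleaner packaging of the paper's case analysis on $A\mathbf{y}$ versus $A\mathbf{y}'$; the two are mathematically identical.
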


Lemma~\ref{lemma:bilinear_discrepancy} requires the partition of the rectangle to be $(X,Y)$. In general, there is no obligation for such a constraint, so the lemma has little interest alone but it can be used after \emph{conditioning}. Let $\mathbf{a}$ be an assignment of some subset of variables $S \subseteq X \cup Y$. We say that we \emph{condition $f$ on $\mathbf{a}$} when we fix inputs from $S$ as in $\mathbf{a}$ and look at the function on the $2n-\card{S}$ variables left. We note $f_\mathbf{a} : \ZO^{2n-\card{S}} \rightarrow \ZO$ the Boolean function resulting from conditioning $f$ on $\mathbf{a}$. Conditioning a rectangle $r$ on $\mathbf{a}$ gives another rectangle $r_\mathbf{a}$. We adopt the notations $S_X = S \cap X$ and $S_Y = S \cap Y$.

\begin{lemma}\textup{\cite{BolligSW02}}\label{lemma:subrectangles}
Let $r$ be a balanced rectangle over $X \cup Y$. For any $\delta \leq 2/3$ there exist $S \subseteq X \cup Y$ such that 
\begin{itemize}[leftmargin=*]
\item[\text{ }i.] both $S_X$ and $S_Y$ have size $\delta n$,
\item[ii.] for any assignment $\mathbf{a}$ of $S$, the function $r_\mathbf{a}$ resulting from conditioning $r$ on $\mathbf{a}$ is a rectangle on $(X \cup Y) \setminus S$ with respect to the partition $(S_X,S_Y)$.
\end{itemize}
\begin{proof}
Let $(X_1 \cup Y_1, X_2 \cup Y_2)$ be the partition of $r := \rho_1 \wedge \rho_2$, where $X = X_1 \cup X_2$ and $Y = Y_1 \cup Y_2$. Because $r$ is balanced it holds that $\bcard{\card{X_1 \cup Y_1} - \card{X_2 \cup Y_2}} \leq 2n/3$. Assume without loss of generality that $\card{X_1} \leq 2n/3$ and $\card{Y_2} \leq 2n/3$. Consider $C \subseteq X$ and $R \subseteq Y$ such that $X_2 \subseteq X \setminus C$ and $Y_1 \subseteq Y \setminus R$. We can find such $C$ and $R$ of size $\delta n$ for any $\delta \leq 2/3$. Let $S = C \cup R$ (therefore $S_X = C$ and $S_Y = R$). Conditioning $r$ on any assignment $\mathbf{a}$ of $S$ gives $r_\mathbf{a} = (\rho_1)_\mathbf{a} \wedge (\rho_2)_\mathbf{a}$ where $(\rho_1)_\mathbf{a}$ and $(\rho_2)_\mathbf{a}$ are defined on $S_X$ and $S_Y$ respectively. So $r_\mathbf{a}$ is a rectangle with respect to the partition $(S_X,S_Y)$.
\end{proof}
\end{lemma}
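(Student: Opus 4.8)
The final statement to prove is Lemma~\ref{lemma:subrectangles}: for a balanced rectangle $r$ over $X \cup Y$ and any $\delta \leq 2/3$, there exists a set $S \subseteq X \cup Y$ with $\card{S_X} = \card{S_Y} = \delta n$ such that conditioning $r$ on any assignment of $S$ yields a rectangle over the remaining variables with respect to the partition $(S_X, S_Y)$. The plan is to exploit the fact that the original rectangle's partition $(X_1 \cup Y_1, X_2 \cup Y_2)$ already separates the $X$-variables and the $Y$-variables into two blocks each, and to carve out $S$ so that after conditioning, what survives in $X$ lies entirely in one side of the original partition and what survives in $Y$ lies entirely in the other side. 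This forces the conditioned rectangle to split exactly along $(S_X, S_Y)$.

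\textbf{Setting up the partition.} First I would write $r = \rho_1 \wedge \rho_2$ with partition $(X_1 \cup Y_1,\ X_2 \cup Y_2)$, where $X = X_1 \sqcup X_2$ and $Y = Y_1 \sqcup Y_2$; here $\rho_1$ depends only on $X_1 \cup Y_1$ and $\rho_2$ only on $X_2 \cup Y_2$. Because $r$ is balanced, each side of the partition has size between $\frac{2n}{3}$ and $\frac{4n}{3}$, so neither side is too large; in particular I can arrange (relabeling sides if necessary) that $\card{X_1} \leq 2n/3$ and $\card{Y_2} \leq 2n/3$. The intuition is that I want to keep some $X$-variables from the $X_2$-block (to go into $S_X$) and some $Y$-variables from the $Y_1$-block (to go into $S_Y$), and \emph{condition away} everything else, so that the surviving $X$-variables sit in side~2 and the surviving $Y$-variables sit in side~1.

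\textbf{Choosing $S$ and finishing.} The core step is to pick $C \subseteq X$ with $X_2 \subseteq X \setminus C$ and $R \subseteq Y$ with $Y_1 \subseteq Y \setminus R$, each of size exactly $\delta n$, then set $S = C \cup R$ so that $S_X = C$ and $S_Y = R$. The size constraints $\card{X_1} \leq 2n/3$ and $\card{Y_2} \leq 2n/3$ are precisely what guarantee such $C$ and $R$ of size $\delta n$ exist for any $\delta \leq 2/3$: I need $C$ disjoint from $X_2$, i.e.\ drawn from $X_1$ together with the freedom to also shrink from $X_2$, and dually for $R$. Once $S$ is fixed, I would verify property~(ii) by direct inspection: conditioning $r$ on an assignment $\mathbf{a}$ of $S$ gives $r_\mathbf{a} = (\rho_1)_\mathbf{a} \wedge (\rho_2)_\mathbf{a}$, where $(\rho_1)_\mathbf{a}$ now depends only on the unconditioned variables of side~1 that lie in $S_Y = R$ (the $X_1$-part having been absorbed into $C$ and conditioned away) and $(\rho_2)_\mathbf{a}$ only on the unconditioned variables of side~2 lying in $S_X = C$. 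This exhibits $r_\mathbf{a}$ as a rectangle with respect to the partition $(S_X, S_Y)$, as required.

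\textbf{Main obstacle.} The genuinely delicate point is the counting that ensures both $C$ and $R$ can simultaneously be made to have size exactly $\delta n$ while respecting the containment constraints $X_2 \subseteq X \setminus C$ and $Y_1 \subseteq Y \setminus R$. This is where the balancedness of $r$ is essential and where the bound $\delta \leq 2/3$ enters: one must check that the available ``slack'' in each block is large enough. Everything after that choice is a routine unfolding of what conditioning does to a conjunction of two functions on disjoint variable sets, so the proof reduces to verifying these size inequalities carefully and then reading off the partition of the conditioned rectangle.
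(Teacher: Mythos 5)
Your proposal follows the paper's proof step for step --- same setup, same appeal to balancedness, same containment conditions, same final verification --- so the approach is not the issue. The issue is that the step you yourself single out as the ``genuinely delicate point'' is wrong as you state it, and you never actually carry it out. Your containments $X_2 \subseteq X \setminus C$ and $Y_1 \subseteq Y \setminus R$ mean $C \subseteq X_1$ and $R \subseteq Y_2$, so for sets of size $\delta n$ to exist you need \emph{lower} bounds $\card{X_1} \geq \delta n$ and $\card{Y_2} \geq \delta n$; the hypotheses you invoke, $\card{X_1} \leq 2n/3$ and $\card{Y_2} \leq 2n/3$, are \emph{upper} bounds and guarantee nothing. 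Concretely, $X_1 = \emptyset$, $Y_1 = Y$, $X_2 = X$, $Y_2 = \emptyset$ is a balanced partition satisfying your assumptions, and there is no nonempty $C \subseteq X_1$ at all. Your write-up is also internally inconsistent: the intuition and verification paragraphs take $C$ from the $X_2$-block and $R$ from the $Y_1$-block, so that $(\rho_1)_\mathbf{a}$ survives only on $R$-variables and $(\rho_2)_\mathbf{a}$ only on $C$-variables, which is the \emph{opposite} of the containments declared in your ``core step''. Either pairing can be made to work, but the counting must match the pairing chosen.

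The consistent repair is the one your intuition describes: take $C \subseteq X_2$ and $R \subseteq Y_1$ (i.e.\ $X_1 \subseteq X \setminus C$ and $Y_2 \subseteq Y \setminus R$). Then the assumption $\card{X_1} \leq 2n/3$, $\card{Y_2} \leq 2n/3$ (which does follow from balancedness up to swapping the two sides) gives $\card{X_2} \geq n/3$ and $\card{Y_1} \geq n/3$, so the required $C$ and $R$ exist --- but only for $\delta \leq 1/3$, not for all $\delta \leq 2/3$. This loss is not an artifact: for $\delta \in (1/3, 2/3]$ the statement itself fails. Take $\card{X_1} = \card{Y_1} = \lceil n/3 \rceil < \delta n$ (still balanced) and let $\rho_1, \rho_2$ be the parities of their blocks; then every $C$ of size $\delta n$ meets $X_2$ and every $R$ meets $Y_2$, so after conditioning, $(\rho_2)_\mathbf{a}$ is a parity constraint mixing $C$- and $R$-variables, and for suitable $\mathbf{a}$ the resulting nonempty affine set cannot be written as a product across $(C,R)$, hence is not a rectangle. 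In fairness, the paper's own proof contains exactly the same flipped inequalities and the same over-claimed constant $2/3$; the honest constant $1/3$ is harmless downstream, since Claim~\ref{claim:discrepancy_ajtai_bilinear_form} only applies the lemma with the tiny $\delta$ of Lemma~\ref{lemma:ajtai}. Finally, settle the direction of conditioning: as the lemma is actually used, $\mathbf{a}$ assigns $(X \cup Y) \setminus S$ and the conditioned rectangle lives on $S$ with partition $(S_X, S_Y)$; your text (like the lemma statement itself) alternates between conditioning $S$ away and keeping it, and the argument only parses under the latter reading.
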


Now the partition of $r_\mathbf{a}$ matches the input spaces of $f_\mathbf{a}$. The next lemma states that if the discrepancies of $f_\mathbf{a}$ with respect to $r_\mathbf{a}$ for every $\mathbf{a}$ share a common upper bound, then the same upper bound holds for the discrepancy of $f$ with respect to $r$ .

\begin{lemma}\textup{\cite{BolligSW02}}\label{lemma:partialDiscr}
Let $f : X \times Y \rightarrow \ZO$ and let $r$ be a balanced rectangle over $X \cup Y$. For some given $0 < \delta < 2/3$, let $S_X$ and $S_Y$ be the subsets given by Lemma~\ref{lemma:subrectangles}. If for all assignments $\mathbf{a}$ of $S := S_X \cup S_Y$ there is $\Disc{f_\mathbf{a}}{r_\mathbf{a}} \leq \beta$, then $\Disc{f}{r} \leq \beta$.
\begin{proof}
Let $Z = X \cup Y$. We start from the probabilistic definition of the discrepancy 
\begin{equation*}
\begin{aligned}
\Disc{f}{r} = &\text{ }\big\vert \Prbl{\mathbf{z}}{f(\mathbf{z}) = 1 \text{ and } r(\mathbf{z}) = 1} \\
			  &- \Prbl{\mathbf{z}}{f(\mathbf{z}) = 0 \text{ and } r(\mathbf{z}) = 1} \big\vert 
\end{aligned}
\end{equation*}
Use Lemma~\ref{lemma:subrectangles} to get $S$ and denote $\mathbf{z}'$ the restriction of $\mathbf{z}$ to $Z \setminus S$. Let $p_\mathbf{a}$ be the probability to sample the assignment $\mathbf{a}$ of $S$ uniformly at random.
\begin{equation*}
\begin{aligned}
\Disc{f}{r} 
&= \big\vert \sum_\mathbf{a} \Prbl{\mathbf{z}'}{f_\mathbf{a}(\mathbf{z}') = r_\mathbf{a}(\mathbf{z}') = 1}p_\mathbf{a} \\
& \text{ }\text{ }\text{ } - \sum_\mathbf{a} \Prbl{\mathbf{z}'}{f_\mathbf{a}(\mathbf{z}') \neq r_\mathbf{a}(\mathbf{z}') = 1}p_\mathbf{a} \big\vert \\
&\leq \sum_\mathbf{a} \Disc{f_\mathbf{a}}{r_\mathbf{a}}p_\mathbf{a} \leq \beta
\end{aligned}
\end{equation*}
\end{proof}
\end{lemma}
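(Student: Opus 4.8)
The plan is to realize $\Disc{f}{r}$ as a convex combination, over the assignments $\mathbf{a}$ of $S$, of the conditional discrepancies $\Disc{f_\mathbf{a}}{r_\mathbf{a}}$, and then to collapse that combination with the triangle inequality. First I would rewrite the discrepancy in its probabilistic form, with $\mathbf{z}$ drawn uniformly over all assignments:
\[
\Disc{f}{r} = \bigl\vert \Prbl{\mathbf{z}}{f(\mathbf{z}) = 1,\ r(\mathbf{z}) = 1} - \Prbl{\mathbf{z}}{f(\mathbf{z}) = 0,\ r(\mathbf{z}) = 1} \bigr\vert .
\]
Then I would factor the uniform choice of $\mathbf{z}$ into an independent uniform choice of its restriction $\mathbf{a}$ to $S$ and its restriction $\mathbf{z}'$ to the remaining variables. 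Writing $p_\mathbf{a} = 2^{-\card{S}}$ for the probability of $\mathbf{a}$ and noting that conditioning on $\mathbf{a}$ turns $f$ and $r$ into $f_\mathbf{a}$ and $r_\mathbf{a}$, the law of total probability expands each term above as $\sum_\mathbf{a} p_\mathbf{a}\,\Prbl{\mathbf{z}'}{\cdot}$ with $f,r$ replaced by $f_\mathbf{a},r_\mathbf{a}$.

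Carrying this out, I obtain
\[
\Disc{f}{r} = \Bigl\vert \sum\nolimits_\mathbf{a} p_\mathbf{a}\bigl( \Prbl{\mathbf{z}'}{f_\mathbf{a}(\mathbf{z}') = 1,\ r_\mathbf{a}(\mathbf{z}') = 1} - \Prbl{\mathbf{z}'}{f_\mathbf{a}(\mathbf{z}') = 0,\ r_\mathbf{a}(\mathbf{z}') = 1} \bigr) \Bigr\vert .
\]
Since the weights $p_\mathbf{a}$ are nonnegative, the triangle inequality pulls the outer absolute value inside the sum, and the absolute value of the $\mathbf{a}$-th summand is by definition $p_\mathbf{a}\,\Disc{f_\mathbf{a}}{r_\mathbf{a}}$. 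Using the hypothesis $\Disc{f_\mathbf{a}}{r_\mathbf{a}} \leq \beta$ for every $\mathbf{a}$ together with $\sum_\mathbf{a} p_\mathbf{a} = 1$, I conclude
\[
\Disc{f}{r} \leq \sum\nolimits_\mathbf{a} p_\mathbf{a}\,\Disc{f_\mathbf{a}}{r_\mathbf{a}} \leq \beta \sum\nolimits_\mathbf{a} p_\mathbf{a} = \beta .
\]

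This is really just a conditioning-and-averaging argument, so I do not expect a genuine obstacle; the only step that deserves care is the bookkeeping. I would verify explicitly that conditioning a uniform $\mathbf{z}$ on a fixed partial assignment $\mathbf{a}$ of $S$ replaces the joint event $\{f(\mathbf{z}) = b,\ r(\mathbf{z}) = 1\}$ by $\{f_\mathbf{a}(\mathbf{z}') = b,\ r_\mathbf{a}(\mathbf{z}') = 1\}$ on the remaining variables, which is exactly the definition of conditioning a function, and that the $p_\mathbf{a}$ are uniform and sum to one so that the right-hand side really is a convex combination. I would also emphasize that Lemma~\ref{lemma:subrectangles} is not invoked in this inequality at all: it only guarantees that each conditioned $r_\mathbf{a}$ is a rectangle whose partition is aligned with $(X,Y)$, which is what subsequently permits applying Lemma~\ref{lemma:bilinear_discrepancy} to pin down a concrete value for $\beta$.
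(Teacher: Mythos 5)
Your proof is correct and follows essentially the same route as the paper's: expressing $\Disc{f}{r}$ probabilistically, decomposing the uniform choice of $\mathbf{z}$ into the assignment $\mathbf{a}$ of $S$ and the rest, and collapsing the resulting convex combination with the triangle inequality. Your closing observation that Lemma~\ref{lemma:subrectangles} plays no role in this averaging step itself (it only matters later, to make Lemma~\ref{lemma:bilinear_discrepancy} applicable to each $r_\mathbf{a}$) is accurate and a nice clarification, though the paper does not spell it out.
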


One last problem is that after conditioning $f$ on $\mathbf{a}$, the resulting function $f_\mathbf{a}$ is not necessarily a bilinear form. However this last lemma shows that we can lift it to a bilinear form introducing additional variables.

\begin{lemma}\textup{\cite{BolligSW02}}\label{lemma:bilinearExtension}
Let $f : X \times Y \rightarrow \ZO$ be a bilinear form on $2n$ variables characterised by the $n \times n$ matrix $H$. 
\begin{itemize}[leftmargin=*, topsep=0.1cm]
\item[$\bullet$] For $0 < \delta < 1$, let $C$ be a subset of $X$ and let $R$ be a subset of $Y$, such that $\card{C} = \card{R} = \delta n$. 
\item[$\bullet$] Consider an arbitrary assignment $\mathbf{a}$ of $(X \cup Y) \setminus (C \cup R)$ and condition $f$ on $\mathbf{a}$.
\item[$\bullet$]Let $A$ be the $\delta n \small{\times} \delta n$ submatrix of $H$ obtained taking rows indexed in $R$ and columns indexed in $C$.
\end{itemize}
Given two additional variables $e_1$ and $e_2$, let $\widehat{C} = \{e_1\} \cup C$ and $\widehat{R} = \{e_2\} \cup R $. There is a bilinear form $\widehat{f}_\mathbf{a} : \widehat{C} \times \widehat{R} \rightarrow \ZO$ such that $f_\mathbf{a}$ results from conditioning $\widehat{f}_\mathbf{a}$ on $(e_1 = 1$, $e_2 = 1)$. Furthermore the matrix $\widehat{A}$ characterising $\widehat{f}_\mathbf{a}$ has rank $\rk{\widehat{A}} \geq \rk{A}$.
\begin{proof}
By definition for $\mathbf{x},\mathbf{y} \in X \times Y$, $f(\mathbf{x},\mathbf{y}) = \mathbf{x}^\top H \mathbf{y}$. Write $\mathbf{x} = (\mathbf{x}_C\text{ }\mathbf{x}_{\overline{C}})$ (resp. $\mathbf{y} = (\mathbf{y}_R \text{ }\mathbf{y}_{\overline{R}})$) to distinguish the entries of $\mathbf{x}$ (resp. $\mathbf{y}$) that belongs to $C$ (resp. $R$). By conditioning $f$ on $\mathbf{a}$, we fix the values of $\mathbf{x}_{\overline{C}}$ and $\mathbf{y}_{\overline{R}}$. We can write $f_\mathbf{a}(\mathbf{x}_C,\mathbf{y}_R) = \mathbf{x}_C^\top A \mathbf{y}_R + \mathbf{x}_C^\top \mathbf{v} + \mathbf{u}^\top \mathbf{y}_R + \lambda$ for some vectors $\mathbf{u}$, $\mathbf{v}$ and a value $\lambda$, all three depending only of the variables fixed in $\mathbf{a}$. Define $\widehat{f}_\mathbf{a} : \ZO^{2\delta n + 2} \rightarrow \ZO$ by $\widehat{f}_\mathbf{a}(\mathbf{x}', \mathbf{y}') = \mathbf{x}'^\top \widehat{A} \mathbf{y}'$ where 
\[
\widehat{A} =
\left( 
\begin{array}{c|ccc}
   \lambda &  & u^\top &  \\ \hline \\[-9pt]
     & \multicolumn{3}{c}{\multirow{3}{*}{\textit{A}}} \\
   v & \\
     &
\end{array}
\right)
\]
$\widehat{f}_\mathbf{a}$ is a bilinear form (while $f_\mathbf{a}$ was not necessarily). Calling $e_1$ (resp. $e_2$) the first variable of $\mathbf{x}'$ (resp. $\mathbf{y}'$) we do find that when $\mathbf{x}' = (1 \text{ }\mathbf{x}_C)$ and $\mathbf{y}' = (1\text{ }\mathbf{y}_R)$ there is $\widehat{f}_\mathbf{a}(\mathbf{x}',\mathbf{y}') = f_\mathbf{a}(\mathbf{x}_C, \mathbf{y}_R)$. Furthermore it is immediate that $\rk{\widehat{A}} \geq \rk{A}$.
\end{proof}
\end{lemma}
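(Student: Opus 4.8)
The plan is to compute the conditioned function $f_\mathbf{a}$ explicitly as an affine-bilinear function of the free variables and then to homogenize it into a genuine bilinear form by bordering $A$ with a single extra row and column that absorb the affine data. First I would split $H$ according to the partitions $X = C \cup \overline{C}$ of its rows and $Y = R \cup \overline{R}$ of its columns, so that $A$ is the block with rows in $C$ and columns in $R$. Expanding $f(\mathbf{x},\mathbf{y}) = \mathbf{x}^\top H \mathbf{y}$ into the four resulting bilinear terms and substituting the entries $\mathbf{x}_{\overline{C}}$, $\mathbf{y}_{\overline{R}}$ fixed by $\mathbf{a}$, one reads off
\begin{equation*}
f_\mathbf{a}(\mathbf{x}_C,\mathbf{y}_R) = \mathbf{x}_C^\top A \,\mathbf{y}_R + \mathbf{x}_C^\top \mathbf{v} + \mathbf{u}^\top \mathbf{y}_R + \lambda,
\end{equation*}
where $\mathbf{u}$, $\mathbf{v}$ and $\lambda$ are the linear and constant coefficients produced by the three remaining blocks acting on the fixed part of $\mathbf{a}$. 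This makes precise the stated obstruction: $f_\mathbf{a}$ is in general only affine-bilinear, so Lemma~\ref{lemma:bilinear_discrepancy} cannot be applied to it directly.

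Next I would introduce the auxiliary variables $e_1,e_2$ and set $\mathbf{x}' = (e_1,\mathbf{x}_C)$, $\mathbf{y}' = (e_2,\mathbf{y}_R)$, defining $\widehat{f}_\mathbf{a}(\mathbf{x}',\mathbf{y}') = \mathbf{x}'^\top \widehat{A}\,\mathbf{y}'$ on $\widehat{C} \times \widehat{R}$, where $\widehat{A}$ is the $(\delta n + 1) \times (\delta n + 1)$ matrix obtained by placing $\lambda$ in the top-left corner, $\mathbf{u}^\top$ along the top border row, $\mathbf{v}$ down the left border column, and $A$ in the remaining bottom-right block. The one verification to carry out is a block multiplication: expanding $\mathbf{x}'^\top \widehat{A}\,\mathbf{y}'$ gives $e_1 e_2 \lambda + e_1 \mathbf{u}^\top \mathbf{y}_R + e_2\,\mathbf{x}_C^\top \mathbf{v} + \mathbf{x}_C^\top A \,\mathbf{y}_R$, and setting $e_1 = e_2 = 1$ recovers exactly the four terms of $f_\mathbf{a}$. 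Hence $\widehat{f}_\mathbf{a}$ is a genuine bilinear form whose conditioning on $(e_1 = 1, e_2 = 1)$ equals $f_\mathbf{a}$, as required.

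For the rank inequality I would simply observe that $A$ is the submatrix of $\widehat{A}$ obtained by deleting its first row and first column, so $\rk{\widehat{A}} \geq \rk{A}$. I do not expect a genuine difficulty here, since the argument is essentially linear-algebra bookkeeping; the conceptual heart — and the only point that must be gotten right — is the layout of $\widehat{A}$. The auxiliary variables have to act as constant-$1$ indicators so that the cross terms $e_1\mathbf{u}^\top\mathbf{y}_R$ and $e_2\,\mathbf{x}_C^\top\mathbf{v}$ reproduce the linear parts and $e_1 e_2 \lambda$ reproduces the constant, with no spurious monomials introduced; once the border is arranged this way, both the conditioning identity and the rank bound follow immediately.
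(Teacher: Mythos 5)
Your proposal is correct and follows essentially the same route as the paper's proof: expand $f_\mathbf{a}$ as $\mathbf{x}_C^\top A\,\mathbf{y}_R + \mathbf{x}_C^\top\mathbf{v} + \mathbf{u}^\top\mathbf{y}_R + \lambda$, homogenize by bordering $A$ with $\lambda$, $\mathbf{u}^\top$, and $\mathbf{v}$, and observe that conditioning on $e_1=e_2=1$ recovers $f_\mathbf{a}$ while $\rk{\widehat{A}} \geq \rk{A}$ holds since $A$ is a submatrix of $\widehat{A}$. Your explicit block-multiplication check of the four monomials is just a slightly more detailed version of the verification the paper leaves implicit.
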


We can finally prove Claim~\ref{claim:discrepancy_ajtai_bilinear_form}.
\begin{proof}[Proof of Claim~\ref{claim:discrepancy_ajtai_bilinear_form}]
Recall that $f$ is a bilinear form in $\mathcal{C}$ characterised by a $n \times n$ matrix $M$ defined as in Ajtai's Lemma. Let $\tilde{f}$ be an weak $\varepsilon$-approximation of $f$, and let $\bigvee_{k = 1}^K r_k$ be a disjoint balanced rectangle cover of $\tilde{f}$. Let $r$ be any rectangle from this cover.\\
Let $S$ be as given by Lemma~\ref{lemma:subrectangles} for the rectangle $r$ and the value $\delta$ from Ajtai's Lemma. Both $S_X$ and $S_Y$ $H$ have size $\delta n$ so the submatrix  of $M$ obtained choosing columns indexed from $S_X$ and rows indexed from $S_Y$ has rank $\geq \delta' n$. Let $\mathbf{a}$ be an arbitrary assignment of $(X \cup Y) \setminus S$, denote $f_\mathbf{a}$ the function $f$ conditioned on $\mathbf{a}$ and $r_\mathbf{a}$ the rectangle $r$ conditioned on $\mathbf{a}$. Lemma~\ref{lemma:subrectangles} ensures that $r_\mathbf{a}$ is a rectangle for the partition $(S_X,S_Y)$.\\
$f_\mathbf{a}$ is not necessarily bilinear but through Lemma~\ref{lemma:bilinearExtension} we can work with its bilinear extension $\hat{f}_\mathbf{a} : \hat{S}_X \times  \hat{S}_Y \rightarrow \ZO$ (where $\hat{S}_X = \{e_1\} \cup S_X$ and  $\hat{S}_Y = \{e_2\} \cup S_Y$) characterised by the $(\delta n+1) \times (\delta n+1)$ matrix $\hat{A}$ of rank $\geq \delta'n$. Now the rectangle $r_\mathbf{a}$ is not defined on the same input space as $\hat{f}_\mathbf{a}$ but we can consider the extension $\hat{r}_\mathbf{a} : \hat{S}_X \cup \hat{S}_Y \rightarrow \ZO$ defined as $\hat{r}_\mathbf{a}(e_1, \mathbf{x}, e_2, \mathbf{y}) = r_\mathbf{a}(\mathbf{x}, \mathbf{y})$ when $e_1 = e_2 = 1$ and $0$ otherwise. $\hat{r}_\mathbf{a}$ is a rectangle with respect to the partition $( \hat{S}_X , \hat{S}_Y )$ so by Lemma~\ref{lemma:bilinear_discrepancy}, 
$$
\Disc{\hat{f}_\mathbf{a}}{\hat{r}_\mathbf{a}} \leq 2^{-\rk{A}/2} \leq 2^{-\delta'n/2}
$$
Observe that $\card{\inv{\hat{r}}_\mathbf{a}(1) \cap \inv{\hat{f}}_\mathbf{a}(1)} = \card{\inv{r}_\mathbf{a}(1) \cap \inv{f}_\mathbf{a}(1)}$ and $\card{\inv{\hat{r}}_\mathbf{a}(1) \cap \inv{\hat{f}}_\mathbf{a}(0)} = \card{\inv{r}_\mathbf{a}(1) \cap \inv{f}_\mathbf{a}(0)}$ so that 
$$
\Disc{\hat{f}_\mathbf{a}}{\hat{r}_\mathbf{a}} = \frac{1}{4}\Disc{f_\mathbf{a}}{r_\mathbf{a}}
$$
The assignment $\mathbf{a}$ has been chosen arbitrarily so $\Disc{f_\mathbf{a}}{r_\mathbf{a}} \leq 2^{-\delta'n/2}/4$ holds for any $\mathbf{a}$. A fortiori, it holds from Lemma~\ref{lemma:partialDiscr} that $\Disc{f}{r} \leq 2^{-\delta'n/2}/4$.
\end{proof}

\end{document}